\def\eqref#1{equation~\ref{#1}}
\def\1{\bm{1}}
\DeclareMathAlphabet{\mathsfit}{\encodingdefault}{\sfdefault}{m}{sl}
\SetMathAlphabet{\mathsfit}{bold}{\encodingdefault}{\sfdefault}{bx}{n}
\newcommand{\E}{\mathbb{E}}
\newcommand{\R}{\mathbb{R}}
\newcommand{\KL}{D_{\mathrm{KL}}}
\DeclareMathOperator*{\argmin}{arg\,min}
\definecolor{Green}{rgb}{0,1,0.5}
\theoremstyle{plain}
\newtheorem{theorem}{Theorem}
\newtheorem{corollary}{Corollary}
\theoremstyle{definition}
\newtheorem{assumption}{Assumption}
\theoremstyle{remark}
\def\supp{\mathrm{supp}}
\def\add{\textbf{Add}}
\def\mult{\textbf{Mult}}
\def\mmd{\textbf{MMD}}
\def\iw{\textbf{IW}}
\title{Practicality of generalization guarantees for unsupervised domain adaptation with neural networks}
\author{\name Adam Breitholtz \email adambre@chalmers.se \\
      \addr Department of Computer Science\\
      Chalmers University of Technology
      \AND
      \name Fredrik D. Johansson \email fredrik.johansson@chalmers.se \\
      \addr Department of Computer Science\\
      Chalmers University of Technology
    %   \AND
    %   \name Devdatt Dubhashi \email dubhashi@chalmers.se\\
    %   \addr Department of Computer Science\\
    %  Chalmers University of Technology
}
\begin{document}
\maketitle
\begin{abstract}
Understanding generalization is crucial to confidently engineer and deploy machine learning models, especially when deployment implies a shift in the data domain. 
For such domain adaptation problems, we seek generalization bounds which are tractably computable and tight. If these desiderata can be reached, the bounds can serve as guarantees for adequate performance in deployment.
However, in applications where deep neural networks are the models of choice, deriving results which fulfill these remains an unresolved challenge; most existing bounds are either vacuous or has non-estimable terms, even in favorable conditions.
In this work, we evaluate existing bounds from the literature with potential to satisfy our desiderata on domain adaptation image classification tasks, where deep neural networks are preferred. We find that all bounds are vacuous and that sample generalization terms account for much of the observed looseness, especially when these terms interact with measures of domain shift. To overcome this and arrive at the tightest possible results, we combine each bound with recent data-dependent PAC-Bayes analysis, greatly improving the guarantees. We find that, when domain overlap can be assumed, a simple importance weighting extension of previous work provides the tightest estimable bound. Finally, we study which terms dominate the bounds and identify possible directions for further improvement.   %This paper cannot be longer than 12 pages of main content(said so on Openreview)
\end{abstract}

\section{Introduction}
%% DA
Successful deployment of machine learning systems relies on generalization to inputs never seen in training.
In many cases,
training and in-deployment inputs differ systematically; these are domain adaptation (DA) problems. %% healthcare
An example of a setting where these problems arise is healthcare. Learning a classifier from data from one hospital and applying to samples from another is an example of tasks that machine learning often fail at~\citep{albadawy_deep_2018,perone_unsupervised_2019,castro_causality_2020}. 
%% guarantees, what should they do and why do we want them
In high-stakes settings like healthcare, guarantees on model performance would be required before meaningful deployment is accepted. Modern machine learning models, especially neural networks, perform well on diverse and challenging tasks on which conventional models have had only modest success. However, due to the high flexibility and opaque nature of neural networks, it is often hard to quantify how well we can expect them to perform in practice. 

Performance guarantees for machine learning models are typically expressed as generalization bounds. Bounds for unsupervised domain adaptation (UDA), where no labels are available from the target domain, have been explored in a litany of papers using both the PAC~\citep{ben-david_analysis,mansour_domain_2009} and PAC-Bayes~\citep{germain_pac-bayes_2020} frameworks; see \citet{redko_survey_2020} for an extensive survey.
%%% few actually report the bounds, most cannot be calculated
Despite great interest in this problem, very few works actually compute or report the \emph{value} of the proposed bounds. Instead, the results are used only to guide optimization or algorithm development. Moreover, the bounds presented often contain terms which are non-estimable without labeled data from the target domain even under favourable conditions~\citep{johansson_support_2019,zhao_learning_2019}.

%%% What the goal is, 
For deployment in sensitive settings we wish to find bounds which are: a) Amenable to calculation; they do not contain non-estimable terms and are tractable to compute. b) Tight; they are close to the error in deployment (or at least non-vacuous).
%\begin{enumerate}
%    \item Amenable to calculation. I.e. they do not contain non-estimable terms and are tractable to compute.
%\item Tight. I.e. they are close to the error in deployment (or at least non-vacuous).
%\end{enumerate}
How do existing bounds fare in solving this problem? As we will see, for realistic problems under favorable conditions, most, if not all, bounds in the literature struggle to satisfy one or both of these goals to varying degrees.

%Using bounds as generalisation guarantees which are tractably computable, tight or even estimable for realistic problems can thus prove quite challenging.

%%%%%% Do both an explanation of what exactly we do in the work and after that we do a short summary of the contributions
%%%%  what do we search for, what do we find, what do we evaluate and what contributions do we do?

%%% bounds representerar klasser av bounds, 
In this work, we examine the practical usefulness of current UDA bounds as performance guarantees in the context of learning with neural networks.
Examining the literature with respect to our desiderata, we identify bounds which show promise in being estimable and tractably computable (Section 2.1). We find that terms related to sample generalization dominate existing bounds for neural networks, prohibiting tight guarantees. To remedy this, we apply PAC-Bayes analysis~\citep{mcallester1999} with data-dependent priors~\citep{dziugaite_data-dependent_2019} in four diverse bounds (Sections 2.3--2.4). Two are existing PAC-Bayes bounds from the UDA literature and two are PAC-Bayes adaptations of bounds based on importance weighting (IW) and integral probability metrics (IPM). 
We evaluate the bounds empirically under favorable conditions in two tasks which fulfill the covariate shift and domain overlap assumptions; one task concerns digit image classification and the second X-ray classification (Section 3). Our results show that all four bounds are vacuous on both tasks without data-dependent priors, but some can be made tight with them (Section 4). Furthermore, we find that the simple extension of applying importance weights to previous work outperforms the best fully observable bound from the literature in tightness. This result highlights amplification of bound looseness due to interactions between domain adaptation and sample generalization terms. We conclude by offering insights into achieving the tightest bounds possible given the current state of the literature (Section 5).

\section{Background}
In this section, we introduce the unsupervised domain adaptation (UDA) problem and give a survey of existing generalization bounds through the lens of practicality: do the bounds contain non-estimable terms and are they tractably computable? We go on to select a handful of promising bounds and combine them with data-dependent PAC-Bayes analysis to arrive at the tightest guarantees available. %These are later evaluated empirically on image classification tasks. 
%%%%the next sentence is bad and should be removed
%We seek to find actually useful guarantees for our models and as such need to be able to compute the generalisation bounds. Thereafter, we will from this select four bounds representative of the literature that we will calculate. Two of these from the literature and two by applying small extensions to a bound by \citet{mcallester_pac-bayesian_2013}.

%
% UDA and problem setup
%
%\subsection{Preliminaries}
We study UDA for binary classification, in the context of an input space $\mathcal{X} \subseteq \mathbb{R}^d$ and a label space $\mathcal{Y}=\{-1,1\}$. While our arguments are general, we use as running example the case where $\mathcal{X}$ is a set of black-and-white images. Let $\mathcal{S}$ and $\mathcal{T}$, where $\mathcal{S}\neq \mathcal{T}$, be two distributions, or \emph{domains}, over the product space $\mathcal{X} \times \mathcal{Y}$, called the source domain and target domain respectively.
The source domain is observed through a labeled sample $S=\{x_i,y_i\}_{i=1}^n\sim (\mathcal{S})^n$ and the target domain through a sample $S'_x = \{x'_i\}_{i=1}^m\sim (\mathcal{T}_x)^m$ which lacks labels, where $\mathcal{T}_x$ is the marginal distribution on $\mathcal{X}$ under $\mathcal{T}$. Throughout, $(\mathcal{D})^N$ denotes the distribution of a sample of $N$ datapoints drawn i.i.d. from the domain  $\mathcal{D}$. 

The UDA problem is to learn hypotheses $h$ from a hypothesis class $\mathcal{H}$, by training on $S$ and $S'_x$, such that the hypotheses perform well on unseen data drawn from $\mathcal{T}_x$. In the Bayesian setting, we learn posterior distributions $\rho$ over $\mathcal{H}$ from which sampled hypotheses perform well on average. We measure performance using the expected \emph{target risk} $R_\mathcal{T}$ of a single hypothesis $h$ or posterior $\rho$,
\begin{equation}\label{eq:risk}
\underbrace{R_\mathcal{T}(h)=\underset{(x,y)\sim\mathcal{T}}{\mathbb{E}}[\ell(h(x),y)]}_{\text{Risk for single hypothesis} ~h} \;\; \mbox{ or }\underbrace{\underset{h \sim \rho}{\mathbb{E}} R_\mathcal{T}(h)}_{\text{Gibbs risk of posterior $\rho$}},
\end{equation}
for a loss function $ \ell : \mathcal{Y}\times\mathcal{Y}\to \mathbb{R_+}$. In this work, we study the zero-one loss, $\ell(y,y') = \mathds{1}[y \neq y']$. The Gibbs risk is used in the PAC-Bayes guarantees~\citep{ShaweWilliamson97,mcallester_pac-bayesian_1998}, a generalization of the PAC framework~\citep{valiant1984, vladimirvapnik1998}.

When learning from samples, the empirical risk $\hat{R}_\mathcal{D}$ can be used as an observable measure of performance, 
\begin{equation}\label{eq:emp_risk}
\hat{R}_\mathcal{D}(h)= \frac{1}{m} \sum_{i=1}^m \ell(h(x_i),y_i),
\end{equation}
for a sample $\{(x_i,y_i)\}_{i=1}^n\sim(\mathcal{D})^n $, with the empirical risk of the Gibbs classifier defined analogously.
However, since no labeled sample from $\mathcal{T}$ is available, the risk of interest is not directly observable. Hence, the most common way to approximate this quantity is to derive an upper bound on the target risk. We refer to these as \emph{UDA bounds}. Crucially, \emph{any practical performance guarantee must be made using only observed data and assumptions on how $\mathcal{S}$ and $\mathcal{T}$ relate}.
Throughout, we make the following common assumptions.
\begin{assumption}[Covariate shift \& overlap]\label{asmp:covshift_overlap}
The source domain $\mathcal{S}$ and target domain $\mathcal{T}$ satisfy for all $x, y$
\begin{align*}%
\def\arraystretch{1.2}%
\begin{array}{ll}
\text{{\bf Covariate shift: }}& \mathcal{T}_y(Y\mid X=x) = \mathcal{S}_y(Y\mid X=x) \;\;\mbox{ and }\;\; \mathcal{T}_x(X=x) \neq  \mathcal{S}_x(X=x)\\
\text{{\bf Overlap: }} & \mathcal{T}_x(X=x)>0 \Rightarrow \mathcal{S}_x(X=x)>0 ~.
\end{array}%
\end{align*}%
\end{assumption}
These are strong assumptions and covariate shift cannot be verified statistically. They are not required by every bound in the literature but together they are sufficient to guarantee identification and consistent estimation of the target risk~\citep{shimodaira_improving_2000}. More importantly, the generalization guarantees we study more closely are not fully observable without them unless target labels are available. Finally, this setting is among the most simple and favorable ones for UDA which should make for an interesting benchmark---if existing bounds are vacuous also here, significant challenges remain.

%
% Overview of bounds
%
\subsection{Overview of existing UDA bounds}
\label{sec:da_bounds}
%
% Overview of bounds and how they fare in estimability and if they are tractable
%
Most existing UDA bounds on the target risk share a common structure due to their derivation. The typical process starts by bounding the \emph{expected} target risk using the \emph{expected} source risk and measures of domain shift. 
Thereafter, terms are added which bound the sample generalization error, the difference between the expected source risk and its empirical estimate. The results can be summarized conceptually, with $f$ and arguments variously defined, as expressions on the form
$$
\begin{aligned}
R_\mathcal{T}\leq f(&\text{Empirical source risk, }
\text{Measures of domain shift, }
\text{Sample generalization error})~.
\end{aligned}
$$
%%% give example 
There are two main forms taken by this function; one in which sample generalization terms are related to domain shift terms through addition and one where they are multiplied. We call these additive and multiplicative bounds respectively.
One example is the classical result due to \citet{ben-david_analysis} which uses the so-called $\mathcal{A}$-distance to bound the target risk of $h\in \mathcal{H}$ with probability $\geq 1-\delta$, % \hltodo{explain what $\mathcal{A}$ is here}
\begin{equation}\label{eq:bendavid}
R_\mathcal{T}(h)\leq \underbrace{\hat{R}_\mathcal{S}(h) \vphantom{\sqrt{\frac{\frac{1}{2}}{2}}} }_{\text{Emp. risk} } +\underbrace{\sqrt{\frac{4(d\log\frac{2em}{d}+\log\frac{4}{\delta})}{m}}}_{\text{Sample generalization}}+\underbrace{d_{\mathcal{H}}(\mathcal{S}, \mathcal{T})+\lambda  \vphantom{\sqrt{\frac{\frac{1}{2}}{2}}}}_{\text{Domain shift}},
\end{equation}
where $d$ is the VC dimension of the $\mathcal{H}$, $\lambda$ is the sum of the errors on both domains of the best performing classifier $h^*=\argmin_{h\in \mathcal{H}}(R_\mathcal{S}(h)+R_\mathcal{T}(h))$, and $d_{\mathcal{H}}(\mathcal{S}, \mathcal{T})=2\sup_{A\in \{\{x : h(x)=1\} : h\in \mathcal{H}\}} |\mbox{Pr}_{\mathcal{S}}[A] - \mbox{Pr}_{\mathcal{T}}[A]|$ is the $\mathcal{A}$-distance for the characteristic sets of hypotheses in $\mathcal{H}$.

Three challenges limits the practicality of this bound: i) $\lambda$ is not directly estimable without target labels and must be assumed small for an informative bound. This is a pattern in UDA theory which illustrates a fundamental link between estimability and assumption. ii) The VC dimension can easily lead to a vacuous result for modern neural networks. For example, the VC dimension of piecewise polynomial networks is $\Omega(pl\log \frac{p}{l})$ where $p$ is the number of parameters and $l$ is the number of layers~\citep{bartlett_vcdim}. iii) The $\mathcal{A}$-distance can be tractably computed only for restricted hypothesis classes. These issues are not unique the bound above, they are exhibited to varying degrees by any UDA bound.

In response to concerns for practical generalization bounds in deep learning, \citet{valle2020generalization} put forth seven desiderata for predictive bounds. While these are of interest also for UDA, we concern ourselves primarily with the fifth (non-vacuity of the bound) and sixth (efficient computability) desiderata as they are of paramount importance to achieving practically useful guarantees. 
In this work, we study UDA bounds with emphasis on how each term influences the following properties when learning with deep neural networks.
\begin{enumerate}
\item {\bf Tightness.} Is the term a poor approximation? Is it likely to lead to a loose bound?
\item {\bf Estimability.} Is the term something which we can estimate from observed data? 
\item {\bf Computability.} Can we tractably compute it for real-world data sets and hypothesis classes? 
\end{enumerate}

\begin{table}[t!] %% have sample complexity term or similar here?
    \centering
    \caption{Overview of existing UDA bounds with respect to a) measures of domain divergence, b) whether domain and sample generalization terms add or multiply, c) non-estimable terms, d) whether the bounds were computed empirically, e) computational tractability. The highlighted rows represent a selection of bounds which are possible to estimate under the assumptions we make and which hold promise in fulfilling our other desiderata. $\text{\sffamily X}^*$ denotes that under the assumptions made in this work we do not have non-estimable terms.}%\checkmark$^\dagger$ denotes that only parts of the bound was calculated.
    \begin{tabular}{c|c|c|c|c|c}
         Paper/reference & Divergence & Add/Mult &  \shortstack{Non-est. \\ terms } & \shortstack{Evaluated \\ bound} & Tractable \\
         \hline
         \citet{ben-david_analysis}& $\mathcal{A}$-distance & Add & \checkmark& \text{\sffamily X}& \text{\sffamily X} \\
         \citet{blitzer_learning}& $H\Delta H$ &Add & \checkmark & \text{\sffamily X} & \text{\sffamily X} \\
         \citet{Ben-David2010a}&$H\Delta H$&Add & \checkmark & \text{\sffamily X}& \text{\sffamily X} \\
          \citet{morvant_parsimonious_2012} & $H\Delta H$ & Add & \checkmark& \text{\sffamily X}& \text{\sffamily X}\\
        \citet{mansour_domain_2009}& Discrepancy dist. &Add &\checkmark & \text{\sffamily X}& \text{\sffamily X}\\
         \citet{redko_2019} & Discrepancy dist. &Add & \checkmark & \text{\sffamily X}& \text{\sffamily X}\\
       \citet{kuroki_unsupervised_2019}  & S-discrepancy&Add &\checkmark & \text{\sffamily X}& \text{\sffamily X}\\
        
         \citet{cortes_domain_2014}& \shortstack{Gen. discrepancy} & Add &\checkmark & \text{\sffamily X}& \text{\sffamily X}\\
         \citet{cortes_adaptation_2015}& \shortstack{Gen. discrepancy} &Add & \checkmark & \text{\sffamily X}& \text{\sffamily X} \\
       \rowcolor{Green} \citet{zhang_generalization_2012} & IPM &Add &\text{\sffamily X} & \text{\sffamily X}& \checkmark\\
       %%% long
        \citet{redkothesis} & IPM, MMD &Add & \checkmark & \text{\sffamily X}& \checkmark \\
      \citet{long_learning_2015} & MMD & Add & \checkmark & \text{\sffamily X}& \checkmark \\
        \citet{redko_theoretical_2017} & IPM & Add & \checkmark & \text{\sffamily X}& \checkmark \\
        \citet{johansson_support_2019} & IPM & Add & \checkmark & \text{\sffamily X}& \checkmark \\
        \citet{zhang_bridging_2019} & \shortstack{ Margin disparity} & Add & \checkmark & \text{\sffamily X} & \text{\sffamily X}\\
        \citet{dhouib_2020} & Wasserstein &Add & \checkmark & \text{\sffamily X}& \checkmark \\
        \citet{shen_wasserstein_2018} & Wasserstein&Add & \checkmark & \text{\sffamily X}& \checkmark \\
         \citet{courty_joint_2017}&  Wasserstein&Add & \checkmark & \text{\sffamily X}& \checkmark \\
        \citet{germain_pac-bayesian_2013} & \shortstack{Domain disagreement} &Add & \checkmark & \text{\sffamily X}& \text{\sffamily X}\\
 \citet{zhang_localized_2020} & Localized discrepancy & Add&\checkmark &\text{\sffamily X}& \text{\sffamily X}\\
         \citet{cortes2019} & Localized discrepancy & Add&\checkmark &\text{\sffamily X}& \text{\sffamily X}\\
           \citet{acuna_fdiv_2021} & f-divergences & Add&\checkmark &\text{\sffamily X}& \text{\sffamily X}\\
        \rowcolor{Green} \citet{germain16} & $\beta$-divergence &Mult &~\text{\sffamily X}$^*$ &\text{\sffamily X}&\checkmark\\
        \rowcolor{Green} \citet{cortes_learning_2010} & \shortstack{R\'enyi} & Mult & ~\text{\sffamily X}$^*$ & \text{\sffamily X}& \text{\sffamily X}\\
% \citet{mansour_2009b} & R\'enyi & Mult &  \text{\sffamily X} & \text{\sffamily X}& \checkmark\\
     %  \citet{hoffman_2018} & R\'enyi& Mult& \text{\sffamily X} & \text{\sffamily X}& \checkmark\\
        \citet{dhouib_revisiting_2018} & $L^1, \chi^2$& Mult & \checkmark & \text{\sffamily X}& \text{\sffamily X}\\
         % \edit{\citet{wu_asymmetric_2019}} & - & Add&\checkmark &\text{\sffamily X}& \text{\sffamily X}\\
       
%   \citet{mansour_robust_2014} & $\lambda$-shift & Add&\text{\sffamily X} &\text{\sffamily X}& \text{\sffamily X}\\
%   \citet{mcnamara_risk_2017}& \text{\sffamily X}& Add&\text{\sffamily X} &\checkmark& \checkmark\\
%  \citet{mansour_domain_2008}& \text{\sffamily X} &\text{\sffamily X} & \text{\sffamily X} & \text{\sffamily X}& \text{\sffamily X} \\
       \end{tabular}
    \label{tab:boundtable}
\end{table}

Next, we give a short summary of existing UDA bounds, with the excellent survey by~\citet{redko_2019} as starting point, while evaluating whether they contain non-estimable terms, if the bound was computed in the paper\footnote{By this we mean the whole bound. Some of the works listed have computed one or several parts of their bound. However, the computation is generally done for simpler model classes than neural networks.} and if the bound is computationally tractable for neural networks. We have listed considered bounds in Table~\ref{tab:boundtable}.  

%%%% bounds which are about a different problem/setting than the one we want to consider here

% First, we note that some bounds, though related, were constructed for different settings or for different problems than UDA. The works by \citet{mansour_domain_2008}, \citet{hoffman_2018} and \citet{mansour_2009b} are all predicated upon the assumption that the target is a mixture of different sources, each of which are observed through labeled samples. While interesting, this setting is distinct from that considered here. Moreover, the work by \citep{mansour_robust_2014} is based on a notion of algorithmic robustness and assumes that the input/output space can be partitioned in a specific way and that such a partition is given. Further, they assume bounded variation of the loss in these partitions. The $\lambda$-shift divergence also assumes a constrained label shift which does not follow from the assumptions made here. Finally, the work by \citet{mcnamara_risk_2017} assumes access to target labels. We will not consider the works mentioned above further.

% Bounds which depend on terms which are infeasible to calculate due to scale/complexity etc.
We will now reason about the bounds' potential to reach our stated desiderata, starting with tractability. We begin by noting that several divergence measures (e.g., $\mathcal{A}$-distance, $H \Delta H$-distance, Discrepancy distance) are defined as suprema over the hypothesis class $\mathcal{H}$. Typically, these are intractable to compute for neural nets due to the richness of the class, and approximations would yield lower bounds rather than upper bounds. Several works fail to yield practically computable bounds for neural networks for this reason~\citep{ben-david_analysis,blitzer_learning,Ben-David2010a,morvant_parsimonious_2012,mansour_domain_2009,redko_2019,cortes_domain_2014,cortes_adaptation_2015}. There are also some works which deal with so called localized discrepancy which depends on finding a subset of promising classifiers and bounding their performance instead.~\citep{zhang_localized_2020,cortes2019} However, this subset is not easy to find in general and as such we view these approaches as intractable also.

% Bounds which depend on non-estimable terms etc.
Continuing with estimability, we may remove from consideration also those whose non-estimable term cannot be dealt with without assuming that they are small---an untestable assumption which does not follow from overlap and covariate shift. This immediately disqualifies a large swathe of bounds which all include the joint error of the optimal hypothesis on both domains, or some version thereof, a very common non-estimable term in DA bounds~\citep{kuroki_unsupervised_2019,redkothesis,long_learning_2015,redko_theoretical_2017,johansson_support_2019,zhang_bridging_2019,dhouib_2020,shen_wasserstein_2018,courty_joint_2017,germain_pac-bayesian_2013,dhouib_revisiting_2018,acuna_fdiv_2021}. %\hltodo{Need to argue that it is non-estimable also under our assumptions. In principle, we could use importance sampling under overlap, but it would require solving a different optimization problem. We could get an upper bound on it that way however. }
In principle, we might be able to approximate this quantity under overlap, e.g. by using importance sampling. However, this would entail solving a new optimization problem to find a hypothesis which has a low joint error (see discussion after \eqref{eq:bendavid}). If we instead wish to upper bound the term, we must solve an equivalent problem to the one we are trying to solve in the first place.
%\hltodo{The structure could be improved here. The first examples here are disqualified for being about a different problem. The remainder here has to do with the scale and computation of sample generalisation terms. I think these should be separated and that we should connect the sample generalisation discussion to the fact that we actually consider variations of these types (Cortes etc) but with PAC-Bayes. }

%%% figure out how to make this look nice
\begin{wrapfigure}{r}{0.5\textwidth}
%\begin{center}
\centering
\includegraphics[width=0.48\textwidth]{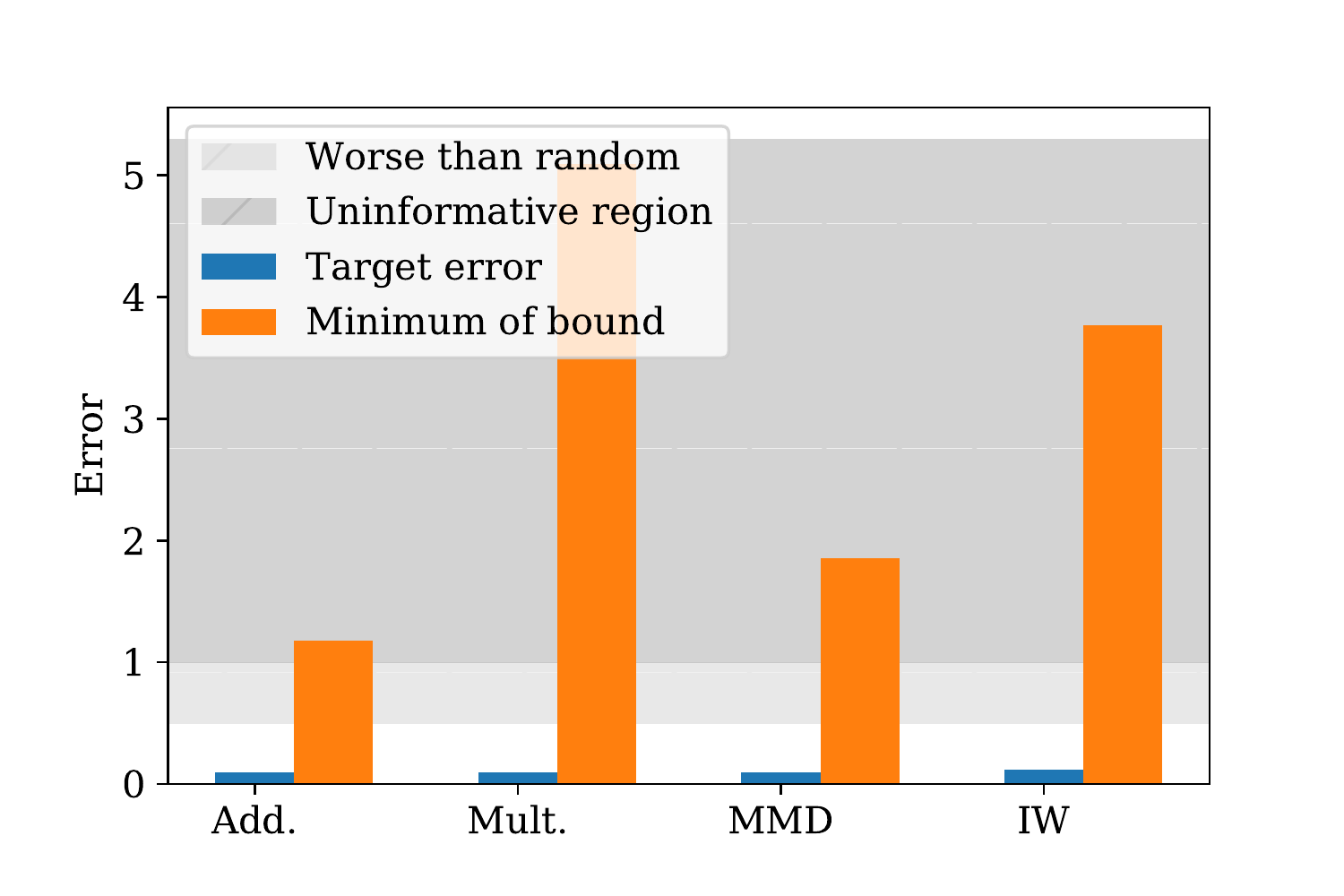}
%\end{center}
\caption{\label{fig:nodatadep}Best bounds achieved without data-dependent priors on the MNIST/MNIST-M task using LeNet-5 as well as the target error for the same posterior hypothesis. Note that all the bounds are vacuous, i.e. they are above one.}
\end{wrapfigure} 
%
%\hltodo{Conclude that the three Zhang, Cortes Germain seem most promising. Neither have tight generalisation guarantees (cite Zhang2021). Neither does PAC-Bayes (see right figure). Move on to data-dep. }
Three bounds remain: \citet{zhang_generalization_2012} use integral probability metrics (IPM) between source and target domains to account for covariate shift in their bound, which are tractable to compute under assumptions on the hypothesis class; \citet{cortes_learning_2010} use importance weights (IW) and Renyi divergences which are well-defined and easily computed under overlap; \citet{germain16} use a related metric based on the norm of the density ratio between domains with similar properties. Respectively, the first two results bound sample generalization error using the uniform entropy number and the covering number. These measures are intractable to compute for neural networks, and while they may be upper bounded by the VC dimension~\citep{wainwright_book}, this is  typically large enough to yield uninformative guarantees~\citep{zhang2021understanding}. In contrast, \citet{germain16} use a PAC-Bayes analysis which we can apply to neural networks by specifying prior and posterior distributions over network weights. Using Gaussian distributions for both priors and posteriors, the PAC-Bayes bound can be readily computed. Thus, to enable closer comparison and tractable computation, in the coming sections, we unify each bounds' dependence on sample generalization by adapting IW and IPM bounds to the PAC-Bayes framework. For completeness, we include an additional PAC-Bayes bound from the UDA literature, which has a non-estimable term, namely \citet{germain_pac-bayesian_2013}.

%% what do we do for PAC-Bayes? data dep priors
The selected bounds are given in Sections~\ref{sec:germain_bounds}--\ref{sec:adap_bounds}. \citet{germain16} will be referred to as the multiplicative bound (\mult{}), \citet{germain_pac-bayesian_2013} as the additive bound (\add{}), the adaptation of importance weighting to PAC-Bayes as \iw{}; and the adaptation of IPM bounds as \mmd{}. 
Unfortunately, also the resulting PAC-Bayes bounds are uninformative for even simple image classification UDA tasks; see Figure~\ref{fig:nodatadep}. Consistent with our understanding of standard learning with neural networks, \emph{we find both classical PAC and PAC-Bayes bounds vacuous in the tasks most frequently used as empirical benchmarks in papers deriving UDA bounds}.
Next, we detail how to make use of data-dependent priors to get the tightest possible bounds.

%
% DATA-DEPENDENT PRIORS
%
\subsection{Tighter sample generalization guarantees using PAC-Bayes with data-dependent priors}
\label{sec:datadep}
PAC-Bayes theory studies generalization of a posterior distribution $\rho$ over hypotheses in $\mathcal{H}$, learned from data, in the context of a prior distribution over hypotheses, $\pi$. The generalization error in $\rho$ may be bounded using the divergence between $\rho$ and $\pi$ as seen in the following classical result due to McAllester. 
\begin{theorem}[Adapted from Thm. 2 in \citet{mcallester_pac-bayesian_2013}]
\label{thm:mcallester}
For a prior $\pi$ and posterior $\rho$ on $\mathcal{H}$, a bounded loss function $\ell : \mathcal{Y} \times \mathcal{Y} \rightarrow [0,1]$ and any fixed $\gamma, \delta \in (0,1)$,  we have w.p. at least $1-\delta$ over the draw of $m$ samples from  $\mathcal{D}$, with $\KL(p \Vert q)$ the Kullback-Liebler (KL) divergence between $p$ and $q$, 
$$
\underset{h \sim \rho}{\mathbb{E}} R_\mathcal{D}(h) \leq \frac{1}{\gamma} \underset{h \sim \rho}{\mathbb{E}}\hat{R}_\mathcal{D}(h)+\frac{\KL(\rho \Vert \pi)+ \ln(\frac{1}{\delta})}{2\gamma(1-\gamma)m}~.
$$
\end{theorem}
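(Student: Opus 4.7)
The plan is to reduce the stated linearized bound to the classical ``kl-form'' PAC-Bayes inequality of Maurer--Langford--Seeger--McAllester, and then apply an elementary linearization of the binary KL divergence. Concretely, I will first establish that with probability at least $1-\delta$ over the draw $S\sim(\mathcal{D})^m$,
$$
\mathrm{kl}\!\left(\E_{h\sim\rho}\hat R_\mathcal{D}(h)\,\big\Vert\,\E_{h\sim\rho} R_\mathcal{D}(h)\right) \;\leq\; \frac{\KL(\rho\Vert\pi)+\ln(1/\delta)}{m},
$$
where $\mathrm{kl}(q\Vert p)=q\ln(q/p)+(1-q)\ln((1-q)/(1-p))$ is the binary KL divergence. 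The theorem as stated then drops out by applying a pointwise scalar inequality that converts the binary KL into the linear form $q/\gamma$.

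The kl-form step would assemble the standard ingredients. First I would apply the Donsker--Varadhan variational formula with the test function $\phi(h)=m\cdot\mathrm{kl}(\hat R_\mathcal{D}(h)\Vert R_\mathcal{D}(h))$, giving $\E_{h\sim\rho}\phi(h)\leq \KL(\rho\Vert\pi)+\ln\E_{h\sim\pi} e^{\phi(h)}$. Next, Jensen's inequality on the jointly convex map $(q,p)\mapsto \mathrm{kl}(q\Vert p)$ yields $m\cdot\mathrm{kl}(\E_\rho\hat R_\mathcal{D}\Vert \E_\rho R_\mathcal{D})\leq \E_{h\sim\rho}\phi(h)$. Finally, swapping the order of expectations by Fubini and using Maurer's moment bound $\E_S[\exp(m\cdot\mathrm{kl}(\hat R_\mathcal{D}(h)\Vert R_\mathcal{D}(h)))]\leq 2\sqrt m$ for $[0,1]$-valued losses, followed by Markov's inequality on the resulting nonnegative exponential random variable, produces the high-probability kl-form statement. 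Any residual $\ln(2\sqrt m)$ slack can be absorbed into $\ln(1/\delta)$ via a constant rescaling of the failure probability, or removed entirely by invoking the tighter moment bound of \citet{mcallester_pac-bayesian_2013}.

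For the final linearization, I would invoke the pointwise inequality
$$
p \;\leq\; \frac{q}{\gamma} + \frac{\mathrm{kl}(q\Vert p)}{2\gamma(1-\gamma)}\qquad\text{for all }p,q\in[0,1],\ \gamma\in(0,1),
$$
then set $q=\E_{h\sim\rho}\hat R_\mathcal{D}(h)$ and $p=\E_{h\sim\rho} R_\mathcal{D}(h)$ and substitute the kl-form bound from the previous step. This pointwise inequality is a short calculus exercise: rearranging, it asks that $g(\gamma):=2\gamma(1-\gamma)(p-q/\gamma)=2(1-\gamma)(\gamma p - q)$ not exceed $\mathrm{kl}(q\Vert p)$. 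Differentiating shows $g$ is maximized over $\gamma\in(0,1)$ at $\gamma^\star=(p+q)/(2p)$ (for $p>q$), with maximum value $(p-q)^2/(2p)$, so the claim reduces to the refined Pinsker-type bound $\mathrm{kl}(q\Vert p)\geq (p-q)^2/(2p)$ (trivial when $p\leq q$, since then $g(\gamma)\leq 0$). The main technical obstacle is cleanly preserving the constants $1/\gamma$ and $1/[2\gamma(1-\gamma)]$ through the combination of the two steps so that the bound reads exactly as stated; once the calculus inequality and the Maurer slack are handled, everything else is standard PAC-Bayes machinery.
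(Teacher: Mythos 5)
First, a point of reference: the paper does not actually prove Theorem~\ref{thm:mcallester} --- it is imported verbatim (after the reparametrization $\gamma = 1-\tfrac{1}{2\lambda}$) from \citet{mcallester_pac-bayesian_2013}, so there is no in-paper proof to compare against. Judged on its own terms, your plan is mostly sound: the Donsker--Varadhan change of measure, Jensen on the jointly convex binary KL, the Fubini--Markov step, and especially the linearization $p \leq \tfrac{q}{\gamma} + \tfrac{\mathrm{kl}(q\Vert p)}{2\gamma(1-\gamma)}$ are all correct. Your reduction of that last inequality to the refined Pinsker bound $\mathrm{kl}(q\Vert p)\geq (p-q)^2/(2p)$ for $q\leq p$ (via maximizing $g(\gamma)=2(1-\gamma)(\gamma p-q)$ at $\gamma^\star=(p+q)/(2p)$) checks out, and is indeed the standard way the linear form is extracted from the kl form.

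The genuine gap is in the first step, and your two proposed remedies do not close it. Maurer's moment bound gives $\E_S\E_{h\sim\pi}[e^{m\,\mathrm{kl}(\hat R(h)\Vert R(h))}]\leq 2\sqrt{m}$, so the kl-form statement you obtain necessarily carries $\ln\frac{2\sqrt{m}}{\delta}$, not $\ln\frac{1}{\delta}$; after linearization your route therefore proves the theorem only with an extra additive term $\frac{\ln(2\sqrt{m})}{2\gamma(1-\gamma)m}$. This cannot be ``absorbed into $\ln(1/\delta)$ via a constant rescaling of the failure probability,'' because $2\sqrt{m}$ is not a constant --- rescaling $\delta\mapsto\delta/(2\sqrt{m})$ is exactly what produces the unwanted $m$-dependent term. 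Nor is there a ``tighter moment bound'' to invoke: Maurer also showed that $\E_S[e^{m\,\mathrm{kl}(\hat p\Vert p)}]$ grows like $\sqrt{m}$, so the slack is intrinsic to any route through the kl-form inequality. The bound exactly as stated, with a bare $\ln(1/\delta)$, comes from a different exponential moment: a Catoni-style choice of test function whose moment is bounded by $1$ (equivalently, Catoni's bound $\E_\rho R \leq \frac{1-\exp(-\beta\E_\rho\hat R - (\KL(\rho\Vert\pi)+\ln(1/\delta))/m)}{1-e^{-\beta}}$ followed by $1-e^{-x}\leq x$ and $1-e^{-\beta}\geq \beta(1-\beta/2)$ with $\beta=2(1-\gamma)$). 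If you are content with the (asymptotically negligible, but strictly weaker) $\ln\frac{2\sqrt m}{\delta}$ version, your proof goes through; to get the stated constant you need to replace step one with the direct Catoni-type argument.
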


The bound in Theorem~\ref{thm:mcallester} grows loose when prior $\pi$ and posterior $\rho$ diverge---when the posterior is sensitive to the training data. When learning with neural networks, $\pi$ and $\rho$ are typically taken to be distributions on the weights of the network before and after training. However, the weights of a trained  deep neural network will be far away from any uninformed prior after only a few epochs. 
For this reason, \citet{dziugaite_role_2020} developed a methodology, based on work by \citet{ambroladze2007} and \citet{parrado-hernandez_pac-bayes}, for learning \emph{data-dependent} neural network priors by a clever use of sample splitting.
To ensure that the bound remains valid, any data which is used to fit the prior must be independent of the data used to evaluate the bound. In this work, we learn $\pi$ and $\rho$ following \citet{dziugaite_role_2020}, as described below.
\begin{enumerate}
    \item A fraction $\alpha\in[0,1)$ is chosen and the available training data, $S$, is split randomly into two parts, $S_\alpha$ and $S\setminus S_\alpha$ of size $\alpha m $ and $(1-\alpha)m$, respectively.
    \item A neural network is randomly initialized and trained using stochastic gradient descent on $S_\alpha$ for one epoch. From this we get the weights, $w_\alpha$.
    \item The same network is trained, starting from $w_\alpha$, on all of $S$ until a stopping condition is satisfied. In this work, we terminate training after 5 epochs. We save the weights, $w_\rho$.
    \item From $w_\alpha$ and $w_\rho$ we create our prior and posterior from Normal distributions centered on the learned weights,  $\pi=\mathcal{N}(w_\alpha,\sigma I)$ and $\rho=\mathcal{N}(w_\rho,\sigma I)$ respectively. $\sigma$ is a hyperparameter governing the specificity (variance) of the prior which may be chosen when evaluating.
    \item Finally, we use the learned prior and posterior to evaluate the bounds on $S\setminus S_\alpha$.
\end{enumerate}

%
% The bounds from Germain
%
\subsection{PAC-Bayes bounds from the domain adaptation literature}
\label{sec:germain_bounds}
Both the additive and multiplicative PAC-Bayes UDA bounds described in Section~\ref{sec:da_bounds} are defined in~\citet{germain_pac-bayes_2020} and make use of a decomposition of the zero-one risk into the \emph{expected joint error} 
$$
e_\mathcal{D}(\rho)=\underset{h,h'\sim \rho\times\rho}{\mathbb{E}}~\underset{x,y\sim \mathcal{D}}{\mathbb{E}}\ell(h(x),y)\ell(h'(x),y),
$$ 
which measures how often two classifiers drawn from $\rho$ make the same errors, and the \emph{expected disagreement}
$$
d_{\mathcal{D}_x}(\rho)=\underset{h,h'\sim \rho\times\rho}{\mathbb{E}}~\underset{x\sim \mathcal{D}_x}{\mathbb{E}}\ell(h(x),h'(x)),
$$
which measures how often two classifier disagree on the labeling of the same point. Empirical variants $\hat{e}_\mathcal{D}(\rho)$ and $\hat{d}_{\mathcal{D}_x}(\rho)$ replace expectations with sample averages analogous to~\eqref{eq:emp_risk}.

In the bound of \citet{germain16}, the sample generalization component (KL-divergence between prior and posterior) is multiplied with a domain shift component (supremum density ratio). % We will henceforth refer to this as the \emph{multiplicative bound}.
\begin{theorem}[Multiplicative bound, \citet{germain16}]
\label{thm:mult}
For any real numbers $a,b>0$ and $\delta\in (0,1)$, it holds, under Assumption~\ref{asmp:covshift_overlap}, with probability at least $1-\delta$ over labeled source samples $S\sim (\mathcal{S})^m$ and unlabeled target samples $T_x\sim (\mathcal{T}_x)^n$, with constants $a'=\frac{a}{1-e^{-a}}$,  $b'=\frac{b}{1-e^{-b}}$, for every posterior $\rho$ on $\mathcal{H}$ that
$$
\begin{aligned}
 \underset{h\sim\rho}{\mathbb{E}}R_\mathcal{T}(h)&\leq a'\frac{1}{2}\hat{d}_{\mathcal{T}_x}+b'\beta_\infty(\mathcal{T}\|\mathcal{S})\hat{e}_\mathcal{S} 
 %+\eta_{\mathcal{T}\setminus\mathcal{S}} \\ Removed due to covariate shift
 +\Big(\frac{a'}{na}+\frac{b'\beta_\infty(\mathcal{T}\|\mathcal{S})}{mb}\Big)\Big(2\KL(\rho \Vert \pi)+\ln{\frac{2}{\delta}}\Big) + \eta_{\mathcal{T}\setminus \mathcal{S}}~,
\end{aligned}
$$
with 
$
\beta_\infty(\mathcal{T}\|\mathcal{S})=\underset{x \in \supp(\mathcal{S}_x)}{\sup}{\mathcal{T}_x(x)}/{\mathcal{S}_x(x)} % Removed $y$ due to Assumption of cov shift
$,\footnote{Terms $\beta_q$ based on the $q$:th moment of the density ratio for $q< \infty$ are considered in \citep{germain_pac-bayes_2020} but not here.} and 
$\eta_{\mathcal{T}\setminus \mathcal{S}}=\underset{(x,y)\sim \mathcal{T}}{\mathbb{E}}[\mathds{1}[(x,y)\notin \supp(\mathcal{S})]]\sup_{h\in \mathcal{H}}R_{\mathcal{T}\setminus \mathcal{S}}(h)$.
\end{theorem}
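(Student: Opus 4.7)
The plan is to combine a PAC-Bayes risk decomposition with a density-ratio import step and two Catoni-style PAC-Bayes inequalities, then merge them by union bound. First, I would invoke the standard identity for the Gibbs risk under the zero-one loss,
$$
\underset{h\sim\rho}{\mathbb{E}}R_\mathcal{T}(h) \;=\; \tfrac{1}{2}d_{\mathcal{T}_x}(\rho)+e_\mathcal{T}(\rho),
$$
which separates the target risk into a label-free disagreement term (estimable from $T_x$) and a joint error term (not directly estimable). This decomposition is what allows each piece to be bounded by quantities that are either observable from the target covariates or transferable from the source.

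Next I would deal with $e_\mathcal{T}(\rho)$, which is the only piece that requires bridging source and target labels. Writing $e_\mathcal{T}(\rho)$ as an expectation over $(x,y)\sim\mathcal{T}$ and splitting by the indicator $\mathds{1}[(x,y)\in\supp(\mathcal{S})]$, the out-of-support contribution is upper bounded by $\eta_{\mathcal{T}\setminus\mathcal{S}}$ by definition. On $\supp(\mathcal{S})$, Assumption~\ref{asmp:covshift_overlap} ensures the Radon--Nikodym derivative $d\mathcal{T}/d\mathcal{S}$ exists and is bounded by $\beta_\infty(\mathcal{T}\|\mathcal{S})$; a standard change of measure then gives
$$
\underset{(x,y)\sim\mathcal{T}}{\mathbb{E}}\bigl[\mathds{1}[(x,y)\in\supp(\mathcal{S})]\,\ell(h(x),y)\ell(h'(x),y)\bigr]\;\le\;\beta_\infty(\mathcal{T}\|\mathcal{S})\,e_\mathcal{S}(\rho),
$$
so that $e_\mathcal{T}(\rho)\le \beta_\infty(\mathcal{T}\|\mathcal{S})\,e_\mathcal{S}(\rho)+\eta_{\mathcal{T}\setminus\mathcal{S}}$.

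The third step is to replace the two expected quantities $d_{\mathcal{T}_x}(\rho)$ and $e_\mathcal{S}(\rho)$ by their empirical counterparts using Catoni-style PAC-Bayes inequalities, one with free parameter $a$ applied to disagreement on $T_x$ (size $n$) and one with parameter $b$ applied to the joint error on $S$ (size $m$). Because these are pair losses evaluated on $h,h'\sim\rho\times\rho$, the relevant prior and posterior are product measures, so $\KL(\rho\times\rho\,\|\,\pi\times\pi)=2\KL(\rho\|\pi)$; this is the source of the factor $2$ in front of the KL term. Each linearized Catoni bound contributes the form $c'\hat{R}+\tfrac{c'}{cN}(2\KL+\ln(1/\delta'))$ with $c'=c/(1-e^{-c})$, which matches the two leading expressions in the statement once one takes $c\in\{a,b\}$ and $N\in\{n,m\}$. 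Applying each inequality with confidence $\delta/2$ and taking a union bound yields $\ln(2/\delta)$ and ensures the overall statement holds with probability $\ge 1-\delta$ jointly over the draws of $S$ and $T_x$.

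Finally, assembling the pieces: the disagreement inequality gives the $a'\tfrac{1}{2}\hat d_{\mathcal{T}_x}$ and $\tfrac{a'}{na}(2\KL+\ln\tfrac{2}{\delta})$ terms, the joint-error inequality (multiplied by $\beta_\infty(\mathcal{T}\|\mathcal{S})$) gives the $b'\beta_\infty(\mathcal{T}\|\mathcal{S})\hat e_\mathcal{S}$ and $\tfrac{b'\beta_\infty(\mathcal{T}\|\mathcal{S})}{mb}(2\KL+\ln\tfrac{2}{\delta})$ terms, and $\eta_{\mathcal{T}\setminus\mathcal{S}}$ accounts for points outside $\supp(\mathcal{S})$. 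The main obstacle, in my view, is being careful with the change-of-measure step: multiplying an empirical PAC-Bayes bound by $\beta_\infty(\mathcal{T}\|\mathcal{S})$ is only justified because the importance reweighting is performed on the \emph{expected} joint error before invoking PAC-Bayes, not on its empirical version, and the two Catoni bounds must be applied so that the randomness in $S$ (used for $\hat e_\mathcal{S}$) and in $T_x$ (used for $\hat d_{\mathcal{T}_x}$) is cleanly separated prior to the union bound.
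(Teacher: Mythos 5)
The paper does not prove this theorem; it is imported verbatim from \citet{germain16} (their Theorem~3/4), so there is no in-paper proof to compare against. Your reconstruction matches the original argument of Germain et al.: the zero-one-loss identity $\mathbb{E}_{h\sim\rho}R_\mathcal{T}(h)=\tfrac{1}{2}d_{\mathcal{T}_x}(\rho)+e_\mathcal{T}(\rho)$, the support split giving $e_\mathcal{T}(\rho)\le\beta_\infty(\mathcal{T}\|\mathcal{S})\,e_\mathcal{S}(\rho)+\eta_{\mathcal{T}\setminus\mathcal{S}}$ via change of measure, and two Catoni-style PAC-Bayes bounds on the pair losses (with $\KL(\rho\times\rho\,\|\,\pi\times\pi)=2\KL(\rho\|\pi)$ and a $\delta/2$ union bound over the independent draws of $S$ and $T_x$) are exactly the ingredients of the source proof, and you correctly note that the $\beta_\infty$ reweighting must be applied to the expected joint error before the empirical PAC-Bayes step. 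The sketch is sound.
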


The bound is simplified slightly in our setting due to Assumption~\ref{asmp:covshift_overlap} as $\eta_{\mathcal{T}\setminus \mathcal{S}}$ will be 0. By Bayes rule, $\beta_\infty$ can be computed as the maximum ratio between conditional probabilities of an input being sampled from $\mathcal{T}$ or $\mathcal{S}$. The second result due to Germain et al. is additive in its interaction between domain and sample generalization terms.
\begin{theorem}[Additive bound, \citet{germain_pac-bayesian_2013}] \label{thm:add}
For any real numbers $\omega,\gamma>0$ and $\delta \in (0,1)$, with probability at least $1-\delta$ over labeled source samples $S\sim (\mathcal{S})^m$ and unlabeled target samples $T_x\sim (\mathcal{T}_x)^m$; for every posterior $\rho$ on $\mathcal{H}$, it holds with constants $\omega'=\frac{\omega}{1-e^{-\omega}}$ and $\gamma'=\frac{2\gamma}{1-e^{-2\gamma}}$ that 
    $$
   \begin{aligned}
    &\underset{h\sim\rho}{\mathbb{E}}R_\mathcal{T} (h)\leq \underset{h\sim\rho}{\mathbb{E}} \omega'\hat{R}_\mathcal{S} (h)+\gamma'\frac{1}{2}\hat{\text{Dis}}_\rho(S,T_x) 
    +\Big(\frac{\omega'}{\omega}+\frac{\gamma'}{\gamma}\Big)\frac{\KL(\rho \Vert \pi)+\log\frac{3}{\delta}}{m}+\lambda_\rho+\frac{1}{2}(\gamma'-1),
   \end{aligned}
    $$
    where $\hat{\text{Dis}}_\rho(S,T_x)=|\hat{d}_{\mathcal{T}_x}-\hat{d}_{\mathcal{S}_x}|
    $ is the empirical domain disagreement, $\lambda_\rho=|e_\mathcal{T}(\rho)-e_\mathcal{S}(\rho)|$.
\end{theorem}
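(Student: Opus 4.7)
The plan is to derive the bound in three conceptual stages: decompose the target Gibbs risk into terms that separate ``source behaviour'' from ``domain shift,'' replace each target term by its source counterpart plus a discrepancy by the triangle inequality, and finally convert every expected quantity into its empirical counterpart via Catoni-style PAC-Bayes bounds. For the first step I would invoke the standard identity
$$\underset{h\sim\rho}{\mathbb{E}}R_\mathcal{D}(h)\;=\;e_\mathcal{D}(\rho)+\tfrac{1}{2}\,d_{\mathcal{D}_x}(\rho),$$
valid under the zero-one loss, which follows by writing $\ell(h(x),y)=\ell(h(x),y)\ell(h'(x),y)+\ell(h(x),y)(1-\ell(h'(x),y))$ for an independent copy $h'\sim\rho$ and taking expectation under $\rho\times\rho$: the two summands coincide pointwise with the joint error and half the disagreement.

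For the second step I apply the identity with $\mathcal{D}=\mathcal{T}$ and use $e_\mathcal{T}(\rho)\leq e_\mathcal{S}(\rho)+\lambda_\rho$ together with $d_{\mathcal{T}_x}(\rho)\leq d_{\mathcal{S}_x}(\rho)+|d_{\mathcal{T}_x}(\rho)-d_{\mathcal{S}_x}(\rho)|$, then recombine $e_\mathcal{S}(\rho)+\tfrac{1}{2}d_{\mathcal{S}_x}(\rho)=\mathbb{E}_{h\sim\rho}R_\mathcal{S}(h)$ to obtain the purely population-level inequality
$$\underset{h\sim\rho}{\mathbb{E}}R_\mathcal{T}(h)\;\leq\;\underset{h\sim\rho}{\mathbb{E}}R_\mathcal{S}(h)+\tfrac{1}{2}|d_{\mathcal{T}_x}(\rho)-d_{\mathcal{S}_x}(\rho)|+\lambda_\rho.$$
This already exhibits the domain-adaptation skeleton of the theorem; only sample generalization remains.

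For the third step I apply Catoni's linearization of McAllester's bound, $\mathbb{E}_\rho R_\mathcal{D}\leq\tfrac{\omega}{1-e^{-\omega}}\mathbb{E}_\rho\hat R_\mathcal{D}+\tfrac{1}{(1-e^{-\omega})m}(\KL(\rho\Vert\pi)+\log(1/\delta))$, three times at confidence $\delta/3$ each and union-bound them, which yields the $\log(3/\delta)$ inside the penalty. One application with parameter $\omega$ bounds the source Gibbs risk and produces $\omega'$ together with the $\omega'/\omega$ coefficient on $\KL$. The other two, with parameter $2\gamma$, are made on $d_{\mathcal{T}_x}(\rho)$ and $d_{\mathcal{S}_x}(\rho)$ using the paired posterior $\rho\times\rho$ whose KL divergence against $\pi\times\pi$ equals $2\KL(\rho\Vert\pi)$; the resulting constant $2\gamma/(1-e^{-2\gamma})=\gamma'$ and the coefficient $\gamma'/\gamma$ then fall out naturally. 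Chaining the two disagreement bounds via $|d_{\mathcal{T}_x}-d_{\mathcal{S}_x}|\leq|\hat d_{\mathcal{T}_x}-\hat d_{\mathcal{S}_x}|+|\hat d_{\mathcal{T}_x}-d_{\mathcal{T}_x}|+|\hat d_{\mathcal{S}_x}-d_{\mathcal{S}_x}|$ gives the $\gamma'\,\tfrac{1}{2}\hat{\text{Dis}}_\rho(S,T_x)$ contribution.

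The main obstacle I expect is handling the two-sided quantity $|d_{\mathcal{T}_x}-d_{\mathcal{S}_x}|$ with PAC-Bayes inequalities that are natively one-sided and multiplicative. The cleanest route is to reduce the two-sided control to two separate one-sided Catoni invocations and then absorb the leftover multiplicative slack $(\gamma'-1)\hat d_\mathcal{D}\leq(\gamma'-1)\cdot 1$ using the trivial bound that a disagreement lies in $[0,1]$; dividing by two to match the $\tfrac{1}{2}$ on the disagreements produces exactly the residual $\tfrac{1}{2}(\gamma'-1)$. With this done, plugging the three PAC-Bayes inequalities into the population-level bound derived above and collecting the coefficients of $\KL(\rho\Vert\pi)$, $\log(3/\delta)$, $\hat R_\mathcal{S}$, and $\hat{\text{Dis}}_\rho$ reproduces the statement of the theorem.
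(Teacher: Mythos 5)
The paper itself does not reprove this result---it quotes it from Germain et al.\ (2013)---so your attempt should be measured against the original proof. Your stages 1 and 2 are correct and match that proof exactly: the pointwise identity $\ell(h(x),y)=\ell(h(x),y)\ell(h'(x),y)+\ell(h(x),y)(1-\ell(h'(x),y))$ does yield $\mathbb{E}_{h\sim\rho}R_\mathcal{D}(h)=e_\mathcal{D}(\rho)+\tfrac12 d_{\mathcal{D}_x}(\rho)$ for binary labels, and the triangle-inequality step recovers the population-level bound $\mathbb{E}_{h\sim\rho}R_\mathcal{T}(h)\le\mathbb{E}_{h\sim\rho}R_\mathcal{S}(h)+\tfrac12|d_{\mathcal{T}_x}-d_{\mathcal{S}_x}|+\lambda_\rho$. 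The Catoni treatment of the source risk (parameter $\omega$, giving $\omega'$ and $\omega'/\omega$) is also right.

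The gap is exactly where you flag "the main obstacle," and your proposed fix does not close it. Your chaining $|d_{\mathcal{T}_x}-d_{\mathcal{S}_x}|\le|\hat d_{\mathcal{T}_x}-\hat d_{\mathcal{S}_x}|+|\hat d_{\mathcal{T}_x}-d_{\mathcal{T}_x}|+|\hat d_{\mathcal{S}_x}-d_{\mathcal{S}_x}|$ needs \emph{two-sided} control of $|\hat d-d|$ on \emph{both} domains, but two one-sided Catoni invocations each bound only $d\le\gamma'\hat d+\kappa$; neither supplies the lower-tail inequality $\hat d_{\mathcal{S}_x}-d_{\mathcal{S}_x}\le\cdots$ that your decomposition requires, and Theorem~\ref{thm:mcallester} has no lower-tail counterpart with these constants. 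Even granting two-sided control, your accounting gives a residual of $2\cdot(\gamma'-1)\cdot\tfrac12=(\gamma'-1)$, twice the stated $\tfrac12(\gamma'-1)$, and two separate $2\KL(\rho\Vert\pi)$ penalties rather than one. The actual argument applies the Catoni bound (with parameter $2\gamma$, posterior $\rho\times\rho$, hence $2\KL(\rho\Vert\pi)$) to the single $[0,1]$-valued loss $\tfrac12\bigl(1+\ell(h(x_T),h'(x_T))-\ell(h(x_S),h'(x_S))\bigr)$ on the paired sample, whose mean is $\tfrac12(1+d_{\mathcal{T}_x}-d_{\mathcal{S}_x})$; doing this for both orderings and taking the maximum yields $\tfrac12|d_{\mathcal{T}_x}-d_{\mathcal{S}_x}|\le\gamma'\tfrac12\hat{\text{Dis}}_\rho(S,T_x)+\tfrac{\gamma'}{\gamma}\tfrac{\KL(\rho\Vert\pi)+\tfrac12\log\frac{2}{\delta'}}{m}+\tfrac12(\gamma'-1)$ in one shot: the $\tfrac12(\gamma'-1)$ falls out of the additive shift by $\tfrac12$, the $\gamma'$ lands on $\hat{\text{Dis}}_\rho$ itself (in your chaining it would have coefficient $1$), and only a single $2\KL$ term appears. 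Combined with the source bound via a union bound this gives the stated constants; your route, as written, cannot produce them.
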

Next we will combine the main techniques used to account for domain shift in \citet{cortes_learning_2010} and \citet{zhang_generalization_2012} with the PAC-Bayes analysis of Theorem~\ref{thm:mcallester}, producing two corollaries for the UDA setting.

%
% Extensions of McAllester bound
%
\subsection{Adapting the classical PAC-Bayes bound to unsupervised domain adaptation}
\label{sec:adap_bounds}
First, we adapt the bound in Theorem~\ref{thm:mcallester} to UDA by incorporating importance weighting~\citep{shimodaira_improving_2000,cortes_learning_2010}.%% maybe cite something else here
%
%We take this theorem and simply extend it by considering a weighted loss which will ensure that we get a bound which does not always suffer from multiplication with the worst case weight on our loss (which happens in the multiplicative bound). 
We define a weighted loss $\ell^w(h(x),y)=w(x)\ell(h(x),y)$ where $w(x)=\frac{T(x)}{S(x)}$ and $\ell$ is the zero-one loss. The risk of a hypothesis using this loss is denoted by $R^w$.
\begin{corollary}{(IW bound)}
Consider the conditions in Theorem~\ref{thm:mcallester} and let $\beta_\infty=\sup_{x\sim \mathcal{X}}w(x)$. We have, for any choice of $\gamma, \delta \in (0,1)$ and any pick of prior $\pi$ and posterior $\rho$ on $\mathcal{H}$,  
$$
\underset{h \sim \rho}{\mathbb{E}} R_\mathcal{T}(h) \leq  \frac{1}{\gamma} \underset{h \sim \rho}{\mathbb{E}} \hat{R^w_\mathcal{S}}(h) + \beta_\infty\frac{\KL(\rho \Vert \pi)+ \ln(\frac{1}{\delta})}{2\gamma(1-\gamma)m}~.
$$
\end{corollary}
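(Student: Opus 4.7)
The plan is to reduce the statement to a direct application of Theorem~\ref{thm:mcallester} via a two-step reduction: (i) rewrite the target risk as an importance-weighted source risk using covariate shift, and (ii) rescale the weighted loss to take values in $[0,1]$ so that McAllester's bound applies.

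First, I would verify the identity $R_\mathcal{T}(h) = \mathbb{E}_{(x,y)\sim \mathcal{S}}[w(x)\ell(h(x),y)] =: R^w_\mathcal{S}(h)$. By Assumption~\ref{asmp:covshift_overlap}, $\mathcal{T}_y(\cdot\mid x) = \mathcal{S}_y(\cdot\mid x)$ and $\mathcal{S}_x$ dominates $\mathcal{T}_x$ on $\supp(\mathcal{T}_x)$, so $w(x) = \mathcal{T}_x(x)/\mathcal{S}_x(x)$ is a well-defined Radon--Nikodym derivative, and the standard change-of-measure argument gives the identity. Taking expectations over $h\sim \rho$ on both sides yields $\mathbb{E}_{h\sim\rho} R_\mathcal{T}(h) = \mathbb{E}_{h\sim\rho} R^w_\mathcal{S}(h)$.

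Next, I would introduce a normalized loss $\tilde{\ell}(h(x),y) := \ell^w(h(x),y)/\beta_\infty$, which satisfies $\tilde{\ell}\in[0,1]$ because $\ell\in\{0,1\}$ and $w(x)\leq \beta_\infty$ on $\supp(\mathcal{S}_x)$. Theorem~\ref{thm:mcallester} therefore applies to $\tilde{\ell}$ on source samples $S\sim(\mathcal{S})^m$, giving, with probability at least $1-\delta$,
$$
\mathbb{E}_{h\sim\rho}\,\tfrac{1}{\beta_\infty} R^w_\mathcal{S}(h) \;\leq\; \tfrac{1}{\gamma}\,\mathbb{E}_{h\sim\rho}\,\tfrac{1}{\beta_\infty}\hat{R}^w_\mathcal{S}(h) + \tfrac{\KL(\rho\Vert\pi)+\ln(1/\delta)}{2\gamma(1-\gamma)m}.
$$
Multiplying through by $\beta_\infty$ and using the identity from step one gives exactly the stated corollary.

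The only subtle point, and the main thing to get right, is boundedness of the weighted loss: McAllester's theorem requires the loss to lie in $[0,1]$, and a naive application to $\ell^w$ would fail because $w$ can be larger than one. The rescaling by $\beta_\infty$ is what makes the argument valid, and it is where the factor $\beta_\infty$ on the KL term originates. Implicitly we also need $\beta_\infty<\infty$ for the bound to be informative; this is not guaranteed by overlap alone (overlap only ensures $\supp(\mathcal{T}_x)\subseteq \supp(\mathcal{S}_x)$), but if $\beta_\infty=\infty$ the inequality holds trivially. No novel probabilistic argument beyond Theorem~\ref{thm:mcallester} is required, so the proof is a short corollary rather than a stand-alone result.
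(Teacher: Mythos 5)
Your proof is correct and follows essentially the same route as the paper's: a change of measure under covariate shift to rewrite $R_\mathcal{T}$ as the importance-weighted source risk, rescaling $\ell^w$ by $\beta_\infty$ so that Theorem~\ref{thm:mcallester} applies, and multiplying back through by $\beta_\infty$. Your explicit remarks on why the rescaling is necessary and on the finiteness of $\beta_\infty$ are accurate and slightly more careful than the paper's two-line argument, but they do not constitute a different approach.
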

\begin{proof}
Since Theorem~\ref{thm:mcallester} holds for loss functions mapping onto $[0,1]$, we divide the weighted loss $\ell^w$ by the maximum weight, $\beta_\infty$. The argument then follows naturally when we apply Theorem~\ref{thm:mcallester} with the loss function $\frac{\ell^w}{w_{max}}$.
$$
\underset{h \sim \rho, (x,y)\sim \mathcal{T}}{\E}\Big[\frac{\ell(h(x),y)}{\beta_\infty}\Big]
= \underset{h \sim \rho, (x,y)\sim \mathcal{S}}{\E}\Big[\frac{\ell^w(h(x),y)}{\beta_\infty}\Big]
\leq \frac{1}{\gamma \beta_\infty}\hat{R^w_\mathcal{S}}+\frac{\KL(\rho \Vert \pi)+ \ln(\frac{1}{\delta})}{2\gamma(1-\gamma)m}
$$
The first equality holds due to Assumption~\ref{asmp:covshift_overlap} and the definitions of $w$ and $\ell^w$.
\end{proof}

%%%
%argue why Germain with $q\neq\infty$ in $\beta^q$ is not sufficient. What happens in expectation etc.
Now we continue with applying an argument based on integral probability metrics (IPM) drawn from  \citet{zhang_generalization_2012} and similar works.
IPMs, such as the kernel maximum mean discrepancy (MMD)~\citep{gretton2012kernel} and the Wasserstein distance have been used to give tractable and even differentiable bounds on target error in UDA to guide algorithm development~\citep{courty2017joint,long_learning_2015}. 
The kernel MMD is a IPM, defined as follows, in terms of its square, given a reproducing kernel $k(\cdot,\cdot):\mathcal{X}\times\mathcal{X}\to \R$
$$
\mmd{}_k(P,Q)^2=\underset{X\sim P,X'\sim P}{\mathbb{E}}[k(X,X')]-2\underset{X\sim p,Y\sim Q}{\mathbb{E}}[k(X,Y)]+\underset{Y\sim Q,Y'\sim Q}{\mathbb{E}}[k(Y,Y')].
$$
Here, $X$ and $Y$ are random variables, and $X'$ is an independent copy of $X$ with the same distribution and $Y'$ is an independent copy of $Y$.
This measures a notion of discrepancy between the distributions $P$ and $Q$ based on their samples.

We combine the MMD with the bound from \citet{mcallester_pac-bayesian_2013} to arrive at what we will call the \mmd{} bound.
\begin{corollary}{(MMD bound)}\label{thm:ipm}
Let $\overline{\ell}_h(x) = \E[\ell(h(x), Y) \mid X=x]$ be the expected pointwise loss at $x\in \mathcal{X}$ and assume that, for any $h \in \mathcal{H}$, $\overline{\ell}_h$ can be uniformly bounded by a function in reproducing-kernel Hilbert $\mathcal{L}$ space with kernel $k$ such that $\forall x,x'\in \mathcal{X} : 0 \leq k(x,x') \leq K$. Then, under Assumption~\ref{asmp:covshift_overlap}, with $\gamma, \delta \in (0,1)$ and probability $\geq 1-\delta$ over labeled source samples $S\sim (\mathcal{S})^m$ and unlabeled target samples $S_x' \sim \mathcal{T}_x^m$, 
\begin{align}\label{eq:IPM_bound}
\underset{h\sim\rho}{\mathbb{E}} R_{\mathcal{T}}(h) & \leq \frac{1}{\gamma} \underset{h\sim\rho}{\mathbb{E}}\hat{R}_{\mathcal{S}}(h) + \frac{\KL(\rho \| \pi) + \log\frac{2}{\delta}}{2\gamma(1-\gamma)m}
 + \hat{\mathrm{MMD}}_{k}(\mathcal{S},\mathcal{T}) +  2\sqrt{\frac{K}{m}}\Big(2+\sqrt{\log{\frac{4}{\delta}}}\Big)~,
\end{align}
where $\hat{\mathrm{MMD}}_k(\mathcal{S}, \mathcal{T})$ is the biased empirical estimate of the maximum mean discrepancy between $\mathcal{S}$ and $\mathcal{T}$ computed from $S_x$ and $S_x'$, see eq. (2) in~\citet{gretton2012kernel}.
\end{corollary}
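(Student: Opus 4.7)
The plan is to decompose the Gibbs target risk into the Gibbs source risk plus a distribution-shift correction, bound each piece by a high-probability inequality, and combine them via a union bound: the source-risk piece is controlled by Theorem~\ref{thm:mcallester}, and the correction piece by the RKHS hypothesis on $\overline{\ell}_h$ together with an MMD concentration inequality.

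First, I would invoke the covariate-shift half of Assumption~\ref{asmp:covshift_overlap} to eliminate $Y$ from the risk difference: for every fixed $h$,
$$
R_\mathcal{T}(h) - R_\mathcal{S}(h) = \underset{X\sim\mathcal{T}_x}{\mathbb{E}} \overline{\ell}_h(X) - \underset{X\sim\mathcal{S}_x}{\mathbb{E}} \overline{\ell}_h(X).
$$
Because $\overline{\ell}_h$ is, by hypothesis, uniformly dominated by an element of the unit ball of the RKHS $\mathcal{L}$, the right-hand side is bounded above by the population MMD $\mathrm{MMD}_k(\mathcal{S}_x,\mathcal{T}_x)$ for every $h$, and taking the expectation over $h\sim\rho$ preserves that bound.

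Next, I would apply Theorem~\ref{thm:mcallester} to the source domain with confidence parameter $\delta/2$, obtaining the PAC-Bayes inequality
$$
\underset{h\sim\rho}{\mathbb{E}} R_\mathcal{S}(h) \leq \frac{1}{\gamma}\underset{h\sim\rho}{\mathbb{E}} \hat{R}_\mathcal{S}(h) + \frac{\KL(\rho\|\pi)+\log\frac{2}{\delta}}{2\gamma(1-\gamma)m},
$$
which supplies the first two terms on the right-hand side of the claim. I would then invoke a standard MMD concentration inequality (e.g.\ Theorem~7 of~\citet{gretton2012kernel}), obtained via a bounded-differences argument from the assumption $0\leq k\leq K$, with confidence $\delta/2$ to upper bound
$$
\mathrm{MMD}_k(\mathcal{S}_x,\mathcal{T}_x) \leq \hat{\mathrm{MMD}}_k(\mathcal{S},\mathcal{T}) + 2\sqrt{\frac{K}{m}}\left(2+\sqrt{\log\frac{4}{\delta}}\right);
$$
the $4$ inside the logarithm reflects splitting the $\delta/2$ budget equally between the source and target samples entering the empirical MMD. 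A union bound then combines the two high-probability events at total failure probability $\delta$, yielding the stated inequality.

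The main obstacle is formalizing the RKHS hypothesis precisely enough that $\overline{\ell}_h$ really is majorized by a unit-norm element of $\mathcal{L}$ uniformly in $h\in\mathcal{H}$---this uniformity is what lets the MMD step survive the expectation over $\rho$---together with lining up the constants inside the MMD concentration inequality so that they match the stated $2\sqrt{K/m}\bigl(2+\sqrt{\log(4/\delta)}\bigr)$ form; the exact factors depend on whether the biased or unbiased estimator is used and on how the failure probability is split across the two samples. Beyond this, the argument reduces to a routine union bound.
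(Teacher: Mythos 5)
Your proposal follows essentially the same route as the paper's proof: the identical risk decomposition $R_\mathcal{T}(h) = R_\mathcal{S}(h) + \mathbb{E}_\mathcal{T}[\ell] - \mathbb{E}_\mathcal{S}[\ell]$, elimination of $Y$ via covariate shift, domination of the resulting difference by the MMD (which passes through the expectation over $\rho$ because it is independent of $h$), an application of Theorem~\ref{thm:mcallester} to the source term, the finite-sample MMD bound from Theorem~7 of \citet{gretton2012kernel}, and a union bound over $\delta$. The subtlety you flag about making the RKHS domination hypothesis uniform in $h$ is real, but the paper's own proof glosses over it in exactly the same way.
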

\begin{proof}
By assumption, for any hypothesis $h \in \mathcal{H}$, 
\begin{align*}\label{eq:IPM_bound_proof} 
R_{\mathcal{T}}(h) & = R_{\mathcal{S}}(h) + \mathbb{E}_{\mathcal{T}}[\ell(h(X),Y)] - \mathbb{E}_{\mathcal{S}}[\ell(h(X),Y)]
%& = R_{\mathcal{S}}(h) + |\mathbb{E}_{\mathcal{T}_x}[\ell_h(X)] - \mathbb{E}_{\mathcal{S}_x}[\ell_h(X)]| \nonumber \\
\leq R_{\mathcal{S}}(h) + \sup_{l \in \mathcal{L}} \left| \mathbb{E}_{\mathcal{T}_x}[l_h(X)] - \mathbb{E}_{\mathcal{S}_x}[l_h(X)] \right|  ~.
%= R_{\mathcal{S}}(h) + \mathrm{MMD}_{\mathcal{L}}(\mathcal{S}, \mathcal{T})
%& \leq R_{\mathcal{S}}(h) + \mathrm{MMD}_{\mathcal{L}}(\mathcal{S}, \mathcal{T}), 
\end{align*}
The inequality holds because $\E_{\mathcal{S}}[\ell(h(x), Y) \mid X=x] = \E_{\mathcal{T}}[\ell(h(x), Y) \mid X=x]$ due to Assumption~\ref{asmp:covshift_overlap} (covariate shift) and the assumption that $\ell_h$ is uniformly bounded by a function in $\mathcal{L}$. The RHS of the inequality is precisely $\mathrm{MMD}_{\mathcal{L}}$ and the full result follows by linearity of expectation (over $\rho$), since the MMD term is independent of $h$, and application of Theorem~\ref{thm:mcallester}. The right-most term in \eqref{eq:IPM_bound} follows from a finite-sample bound on MMD, Theorem 7 in \citet{gretton2012kernel}, and a union bound w.r.t. $\delta$. 
\end{proof}
%%% kernel k talk here
%To compute a bound fully from data, the $\mathrm{MMD}$ must also be estimated from finite samples. This can be done using standard measure concentration inequalities~\citep{sriperumbudur_integral_2009,gretton2012kernel}. Since the MMD term in \eqref{eq:mmdtheorem} is independent of $h$, this does not affect comparison of different hypotheses. 
\paragraph{Representation learning.} When no function family $\mathcal{L}$ satisfying the conditions of Corollary~\ref{thm:ipm} is known, an additional unobservable error term must be added to the bound, to account for excess error. 
Bounds based on the MMD and other IPMs have been used heuristically in representation learning to find representations which minimize the induced distance between domains and achieve better domain adaptation~\citep{long_learning_2015}. However, even if covariate shift holds in the input space, these are not guaranteed to hold in the learned representation. For this reason, we do not explore such approaches even though they might hold some promise. An example of this is recent work by \citet{wu_asymmetric_2019} which provided some interesting ideas about assumptions which constrains the structure of the source and target distributions under a specific representation mapping. Exploring such ideas further is an interesting direction for future work.
%
% EXPERIMENTS
%
\section{Experimental setup}
We describe the experimental setup briefly, leaving more details in Appendix~\ref{app:exp}. We examine the dynamics of the chosen bounds (\add{}, \mult{}, \mmd{}, \iw{}), which parts of the bounds dominate their value, the effect of varying the amount of data used to inform the prior and if the bounds have utility for early stopping indication or model selection.
In addition to these we also want to answer if these bounds practically useful as guarantees, and if not, what is lacking and what future directions should be explored to reach our desiderata? Since the term $\lambda_\rho$ in \add{} (Theorem~\ref{thm:add}) depends on target labels, we give access to these for purposes of analysis and illustration, keeping in mind that the bound is not fully estimable in practice. 

We perform experiments on two image classification tasks, described further below, using three different neural network architectures. The architectures used are: a modified version of LeNet-5 due to~\citet{zhou2019nonvacuous}, a fully connected network and a version of ResNet50~\citep{He2016DeepRL}. These specific architectures were picked as examples due to their varying parameter size and complexity. The experiments are repeated for 5 distinct random seeds, with the exception of the varying of image size where we only conduct the experiment for one seed.

We learn prior and posterior network weights as described in Section~\ref{sec:datadep}. When both sets of weights have been trained, we use these as the means for prior and posterior distributions $\pi$ and $\rho$, chosen to be isotropic Gaussians with equal variance, $\sigma$. Each bound is calculated for each pair of prior and posterior. To estimate the expectation over the posterior we sample 5 pairs of classifiers from the posterior and average their bound components (e.g., source risk) to get an estimate of the bound parts. When this has been calculated, we perform a small optimization step to choose the free parameters of the different bound through a simple grid search over a small number of combinations. We use the combination that produces the lowest minimum bound and account for testing all $k$ combinations by applying a union bound argument, modifying the certainty parameter $\delta$ to $\delta/k$. 
When calculating the MMD, we use the linear statistic for the MMD as detailed in \citet{gretton2012kernel} with the kernel $k(x,y)=\exp(\frac{-\|x-y\|^2}{2\kappa^2})$.
This calculation is averaged over 10 random shuffles of the data for a chosen bandwidth, $\kappa>0$. The process is repeated for different choices of bandwidth (see Appendix for details) and the maximum of the results is taken as the value of the MMD. Note that we calculate the MMD in the input space and have not adjusted it for sample variance. When calculating the importance weights we assume here that the maximum weight, $\beta_\infty$, is uniformly bounded as the bound would potentially be vacuous otherwise. In addition, we will consider the importance weights to be perfectly computed for simplicity.

We construct two tasks from standard data sets which both fulfill  Assumption~\ref{asmp:covshift_overlap} by design, one based on digit classification and one real-world task involving classification of X-ray images. These are meant to represent realistic UDA tasks where neural networks are the model class of choice. 

%
% MNIST
%
\subsection{Task 1: MNIST mixture}
\begin{figure}[t!]
    \centering
    \includegraphics[width=.49\columnwidth]{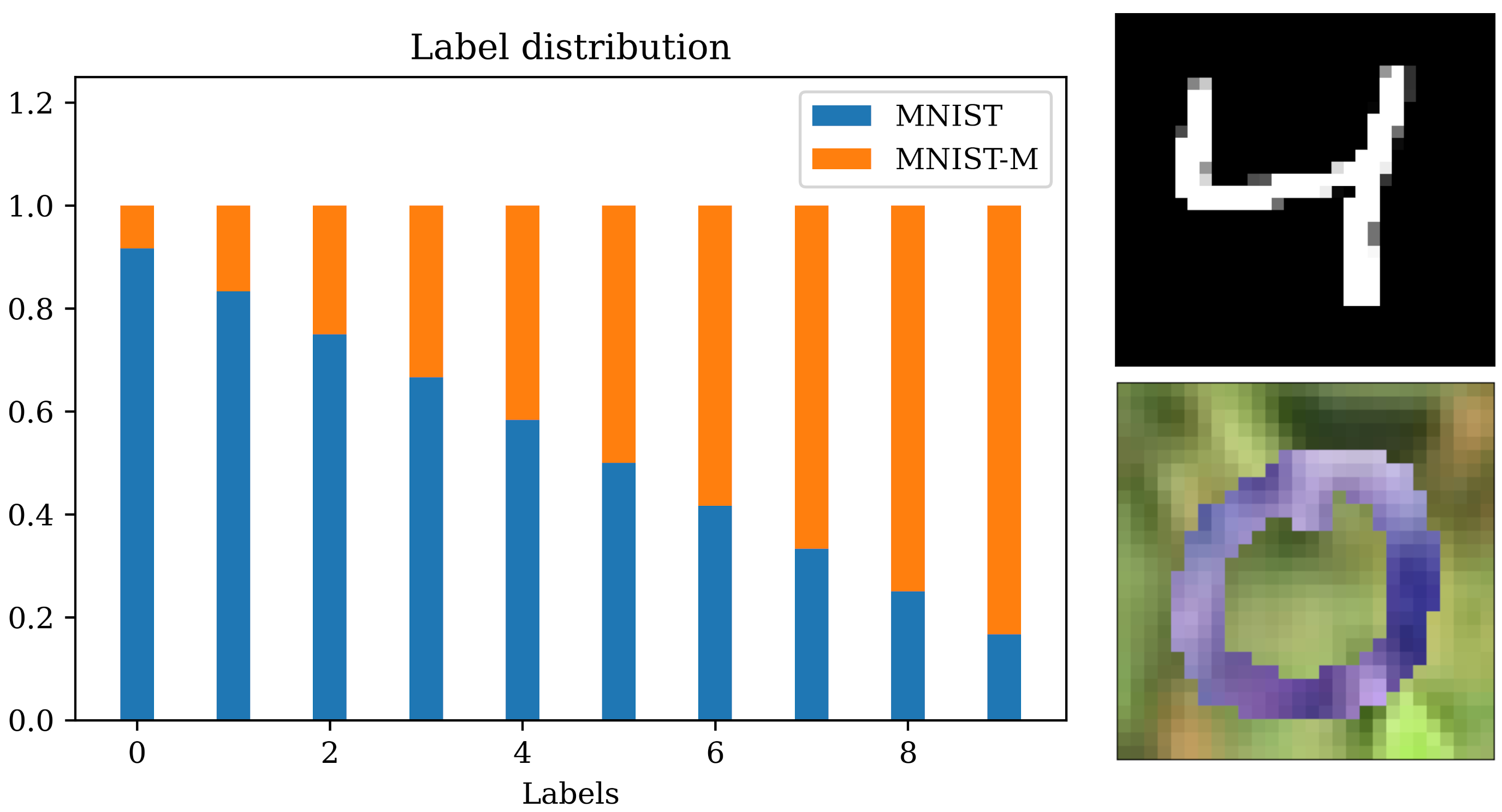}
    \caption{Example of source label densities in the task with the mix of MNIST and MNIST-M samples. An example from each of the two data sets can be seen on the right, the upper one being from MNIST and lower one from MNIST-M. The target domain is  the complement of source samples.}
    \label{fig:task}
\end{figure}
MNIST~\citep{lecun_gradient-based_1998} is a digit classification data set containing 70000 images widely used as a benchmark for image classifiers. MNIST-M was introduced by \citet{ganin_domain-adversarial_2016} to study domain adaptation and is a variation of MNIST where the digits have been blended with patches taken from images from the BSDS500 data set~\citep{bsds500}. 
We use MNIST and MNIST-M to construct source and target domains, both of which contain samples from each data set, but with different label density for images from MNIST and MNIST-M. To create the source data set, we start with images labeled ``0'' by adding 1/12th of the samples from MNIST-M and 11/12th of the samples from MNIST, we increase the proportion from MNIST-M by 1/12 for each subsequent label, ``1'', ``2'', and so on.
The complement of the source samples is then used as the target data, see Figure~\ref{fig:task} for an illustration. We make this into a binary classification problem by relabeling digits 0-4 to  ``0'' and the rest to  ``1''. The supremum density ratio $\beta_\infty \approx 11$ (see Theorem~\ref{thm:mult}) is known and the mixture guarantees overlap in the support of the domains (Assumption~\ref{asmp:covshift_overlap}).
%
% X-RAY
%
\subsection{Task 2: X-ray mixture}
ChestX-ray14~\citep{wang2017chestxray} and CheXpert~\citep{Irvin2019CheXpertAL} are data sets of chest X-rays and labeled according to the presence of common thorax diseases. The data sets contain \num{112120} and \num{224316} labeled images, respectively. Since the two data sets do not have full overlap in labels, we use the subset of labels for which there is overlap. The labels which occur in both data sets are: No finding, Cardiomegaly, Edema, Consolidation, Atelectasis, and Pleural Effusion. In addition, there is an uncertainty parameter present in the CheXpert data set which indicates how certain a label is. As we consider binary classification, we set all labels that are uncertain to positive. Therefore, a single image in the CheXpert data set might have multiple associated labels. For most experiments we resize all the images in the data sets to 32x32 to be able to use the same architectures for both tasks. However, we also conduct a small experiment with ResNet50 where larger image sizes are considered.

We take 20\% of chestX-ray14 and add it to CheXpert to create our source data set. The target is the remaining part of ChestX-ray14 which is then \num{89696} images compared to the \num{246072} of the source. With this, we know from Appendix~\ref{app:chest} that $\beta_\infty \approx 11$ and we have overlap. We turn this into a binary classification problem in a one-vs-rest fashion by picking one specific label to identify, relabeling images with ``1'' if it is present, and setting the labels of images with any other finding to ``0''. In this work, we consider only the task of classifying ``No Finding''.

%
% RESULTS
%
\section{Results}
%%%%% - data-dependent priors work and significantly tightens the bounds
%%%%% - talk about different terms and why they behave the way they do. 
%%%%% - IW and Add. are the only tight bounds
%%%%% - different architectures does not give us very much, their training time before convergence likely differs a lot and as such it is hard to say what would be the best experiment to perform here
%%%%%
%
\begin{figure*}[t!]
    \centering
      \begin{subfigure}{0.49\textwidth}
    \centering
        \includegraphics[width=.92\textwidth]{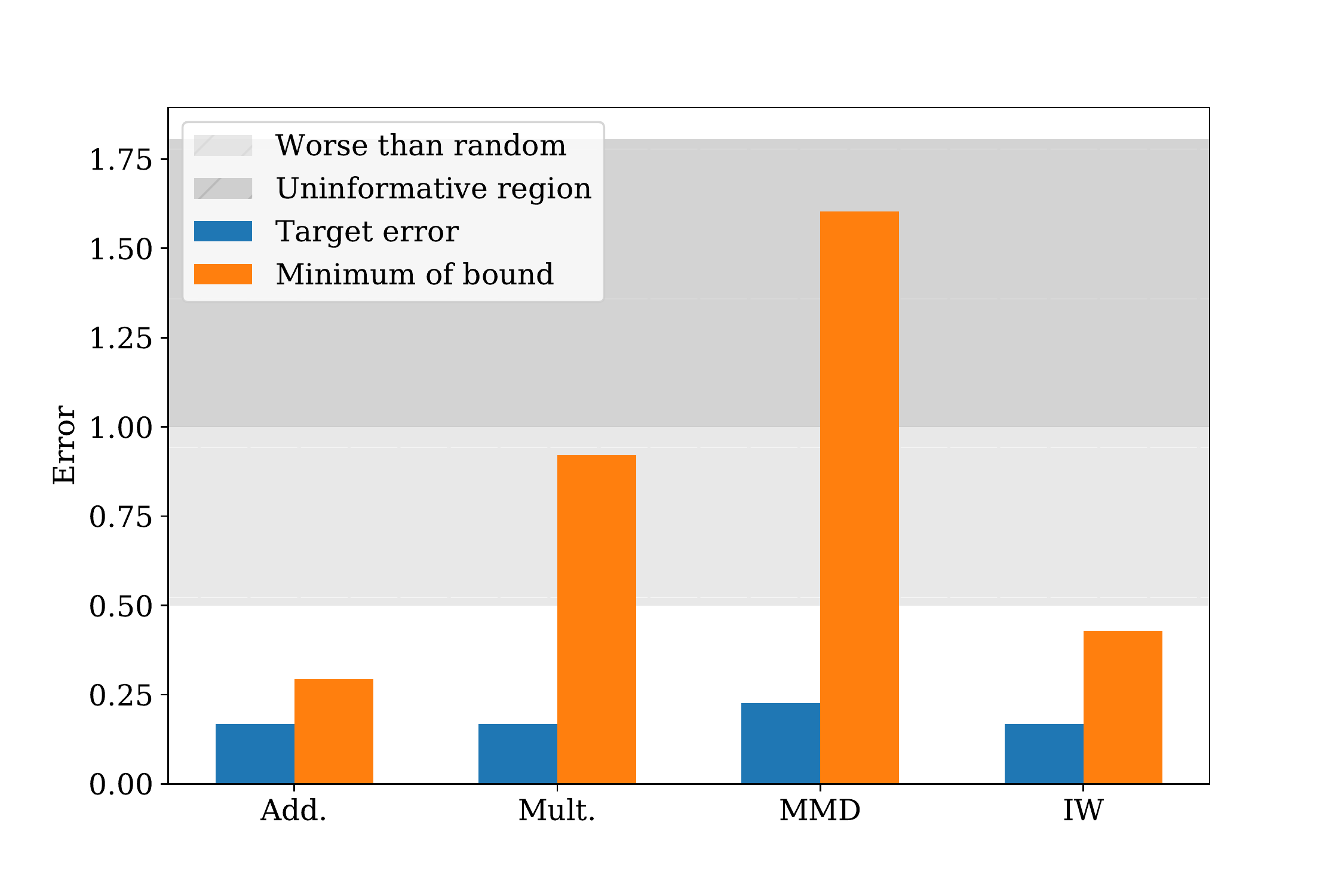}
        \caption{MNIST/MNIST-M task.}
        \label{fig:2nonvaclenet}
    \end{subfigure}%
        \begin{subfigure}{0.49\textwidth}
    \centering
        \includegraphics[width=.92\textwidth]{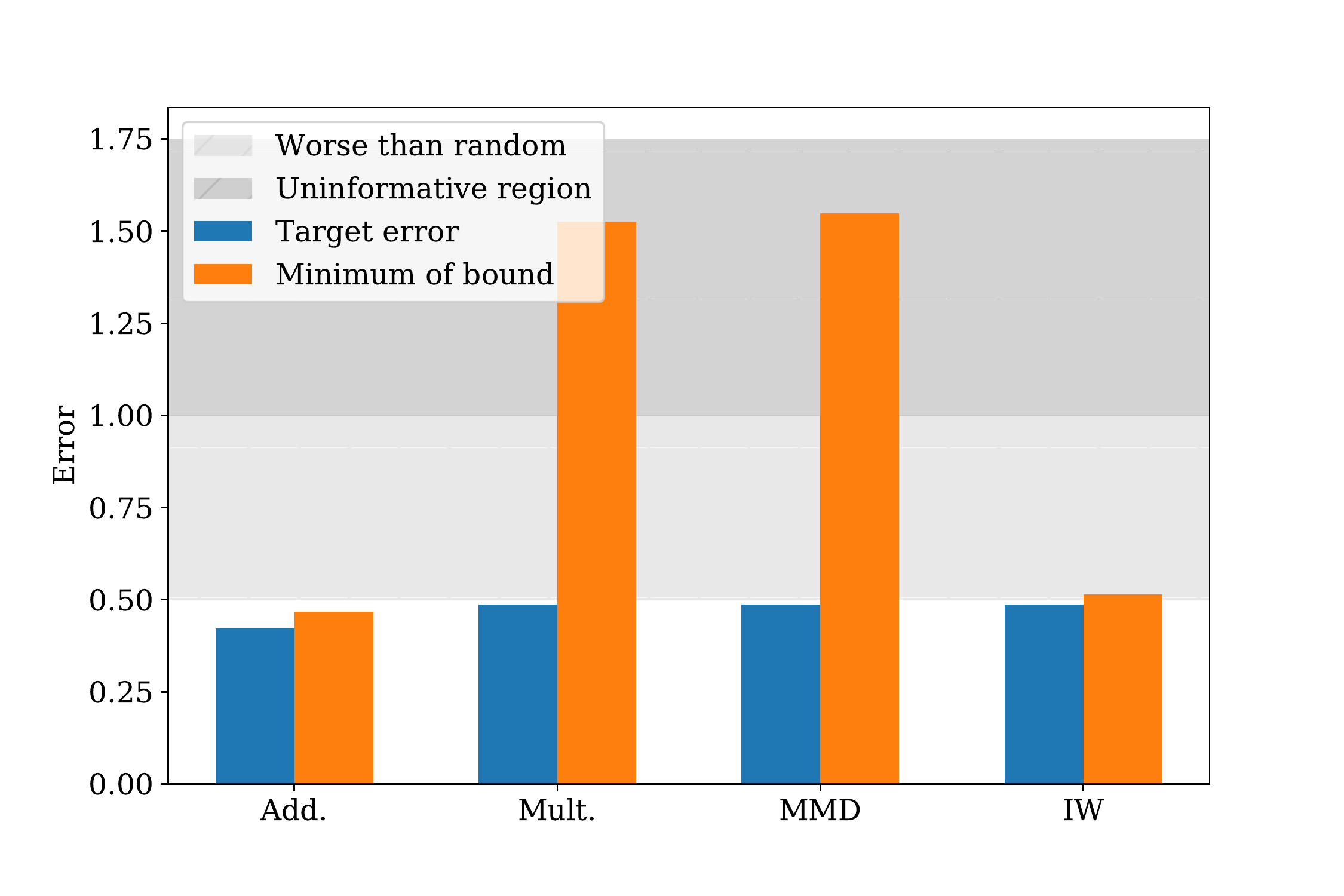}
        \caption{CheXpert+ChestX-ray14 task}
        \label{fig:6nonvaclenet}
    \end{subfigure}%
    \caption{The tightest bounds achieved on the LeNet-5 architecture. This illustrates the tightening effect of using data-dependent priors. Non-vacuous bounds are obtained when using data to inform the prior. The shaded area between 0.5 and 1 is where a random classifier would perform on average, and the shaded area above 1 signifies vacuity.}
    \label{fig:nonvacdatadep}
\end{figure*}
As expected, when we compute bounds with data-dependent priors, we achieve bounds which are substantially tighter than without them, as seen clearly by comparing Figure~\ref{fig:nodatadep} to Figure~\ref{fig:nonvacdatadep}. We also observe that the additive bound (\add{}) due to \citet{germain_pac-bayesian_2013} is the tightest overall for both tasks, followed closely by the \iw{} bound. The latter is not so surprising as when we apply data-dependent priors, there is effectively a point in training where the $\KL$-divergence between prior and posterior networks is very small. Moreover, due to overlap, the weighted source error is equal to the target error in expectation. Thus the only sources of looseness left is the error in the approximation of the expectation over the posterior and the $\log \frac{1}{\delta}$ term which is very small here. We can also see in Figure~\ref{fig:2bestbound} and \ref{fig:6bestbound} that the minimum of the \iw{} bound is often very close to the minimum of the additive bound. However, unlike \iw{}, the \add{} bound relies on access to target labels in to compute the term $\lambda_\rho$ (see further discussion below). 

\begin{figure*}[t!]
     \centering
      \begin{subfigure}{0.32\textwidth}
    \centering
        \includegraphics[width=1\textwidth]{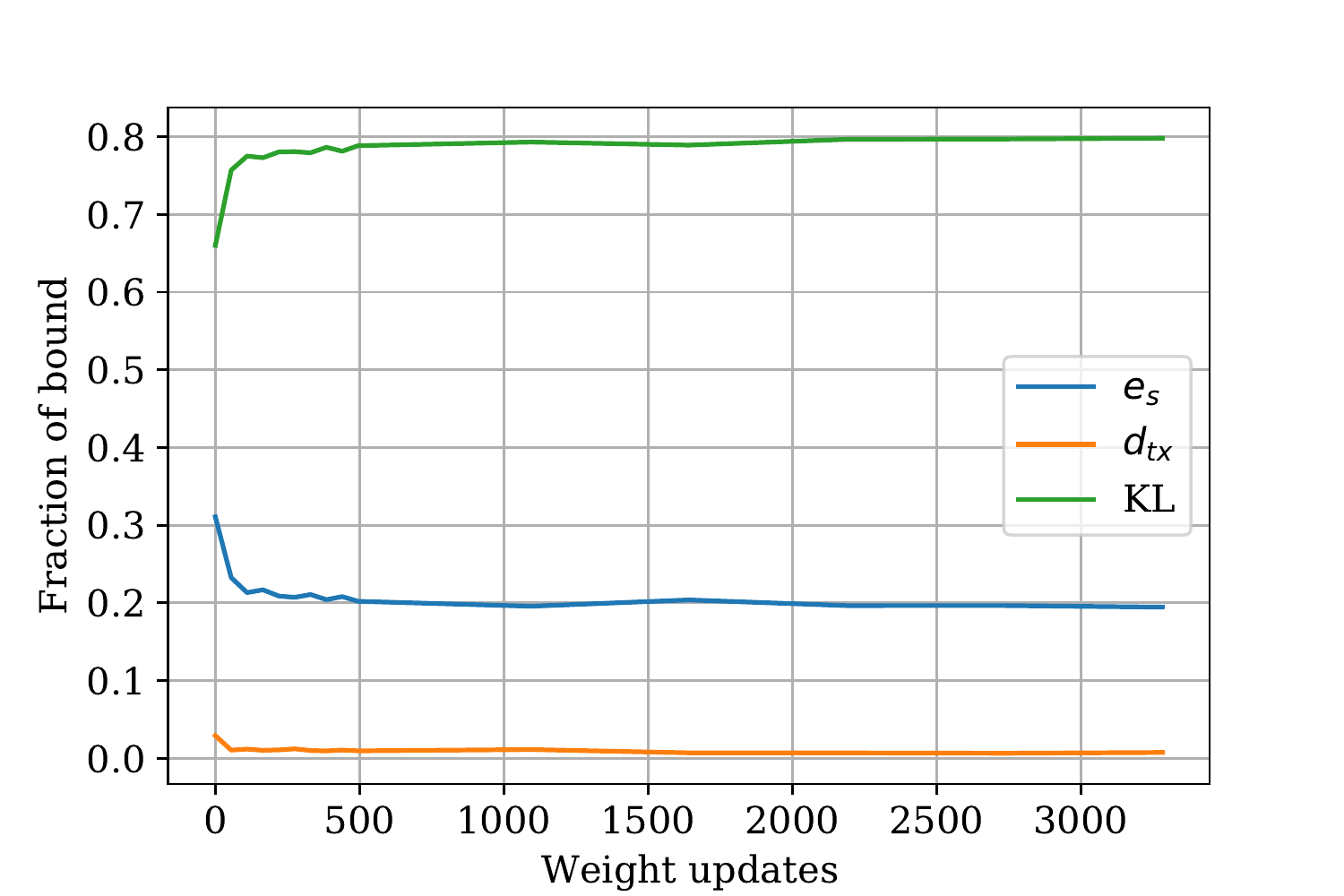}
        \caption{\mult{} bound, $\alpha=0$}
        \label{fig:2fcbetaboundparts0}
    \end{subfigure}%
        \begin{subfigure}{0.32\textwidth}
    \centering
        \includegraphics[width=1\textwidth]{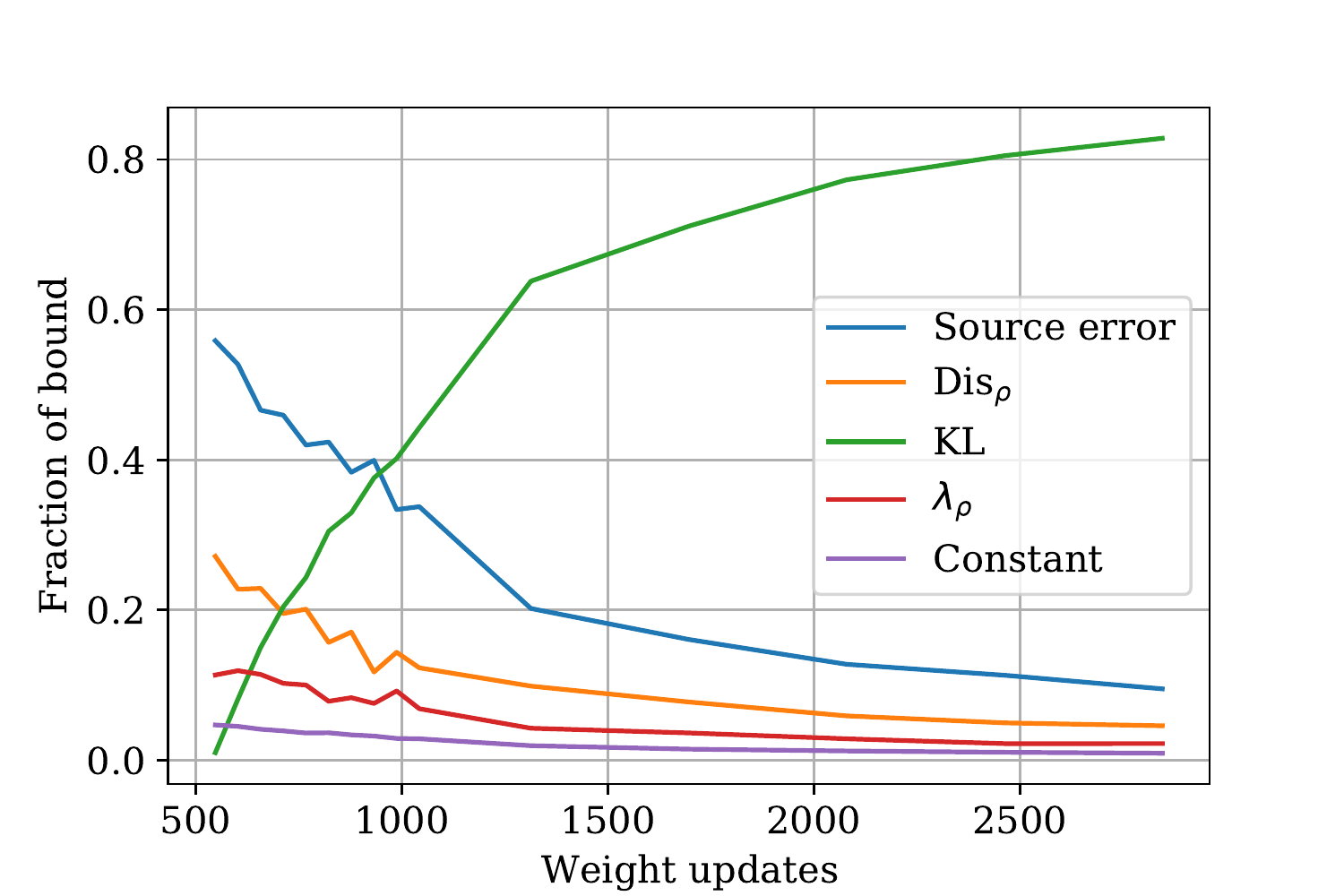}
        \caption{\add{} bound, $\alpha=0.3$}
        \label{fig:2fcdisboundparts03}
    \end{subfigure}%
    %      \begin{subfigure}{0.24\textwidth}
    % \centering
    %     \includegraphics[width=.92\textwidth]{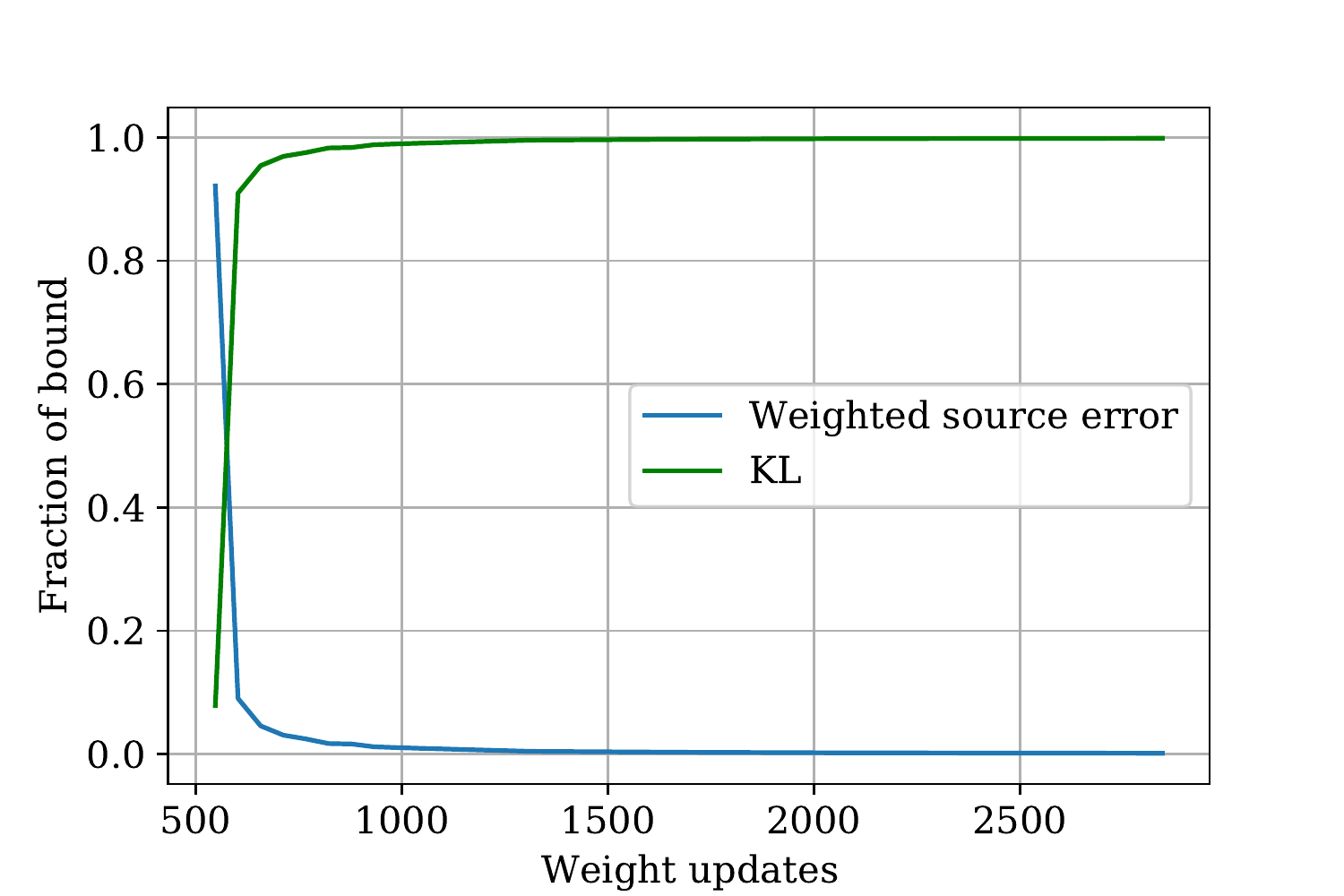}
    %     \caption{IW bound}
    %     \label{fig:2fciwboundparts03}
    % \end{subfigure}
        \begin{subfigure}{0.32\textwidth}
    \centering
        \includegraphics[width=1\textwidth]{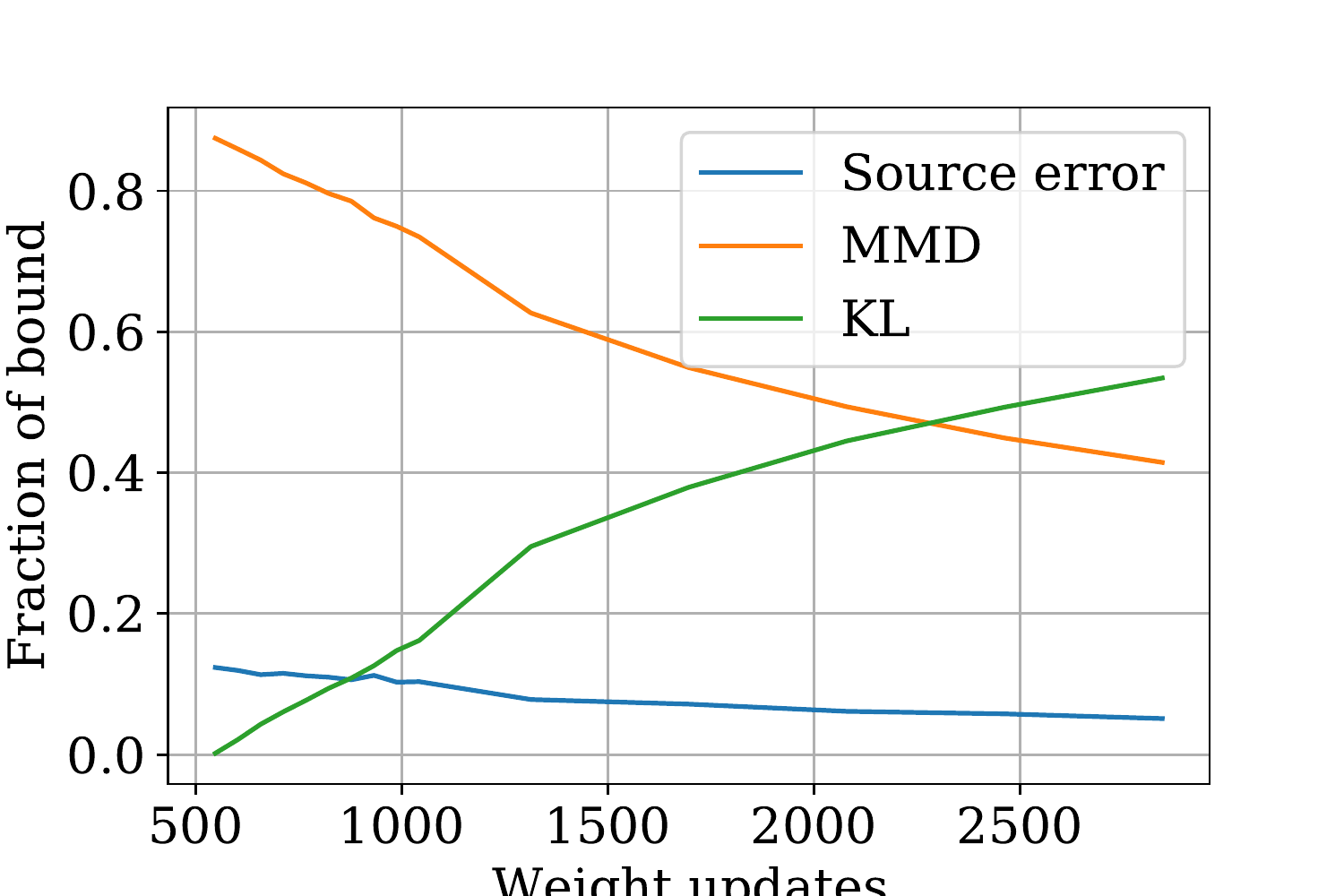}
        \caption{\mmd{} bound, $\alpha=0.3$}
        \label{fig:2fcmmdboundparts03}
    \end{subfigure}%
    \caption{An illustration of constituent parts for three of the bounds with the fully connected architecture on the MNIST mixture task. $\sigma=0.03$ }
    \label{fig:2boundparts}
\end{figure*}

The evolution of the different bounds during training is shown for both tasks in Figure~\ref{fig:datadeptrain}. Of course, all bounds will increase at some point as training progresses and the prior and posterior diverges further from each other and $\KL$ increases. While \add{} is consistently very tight, we note that the $\lambda_\rho$ term which we cannot observe might be a significant part of the bound when the $\KL$-term is low as we can see in Figure~\ref{fig:2fcdisboundparts03}. 
This is an issue for the additive bound since if we have sufficiently small variance of the posterior then the disagreement will be low, using informed priors will make the $\KL$ small while using neural networks often lead to having a low source error. This will leave only the constant term, $\log \frac{1}{\delta}$ and the unobservable $\lambda_\rho$ terms and in those situations the bound might even be dominated by the unobservable term.

\begin{figure*}[t!]
    \centering
      \begin{subfigure}{0.49\textwidth}
    \centering
        \includegraphics[width=.92\textwidth]{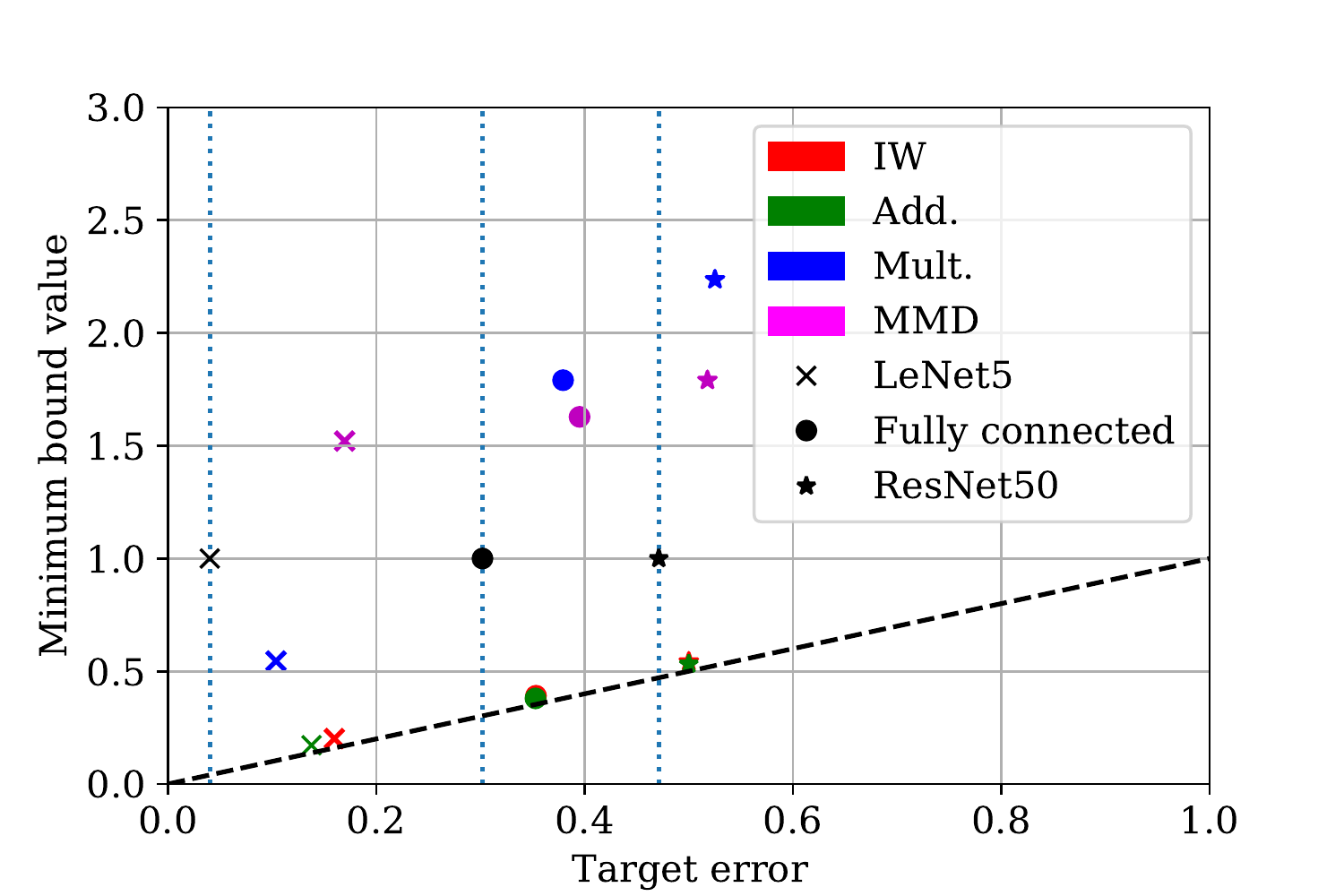}
        \caption{MNIST/MNIST-M}
        \label{fig:2bestbound}
    \end{subfigure}%
        \begin{subfigure}{0.49\textwidth}
    \centering
        \includegraphics[width=.92\textwidth]{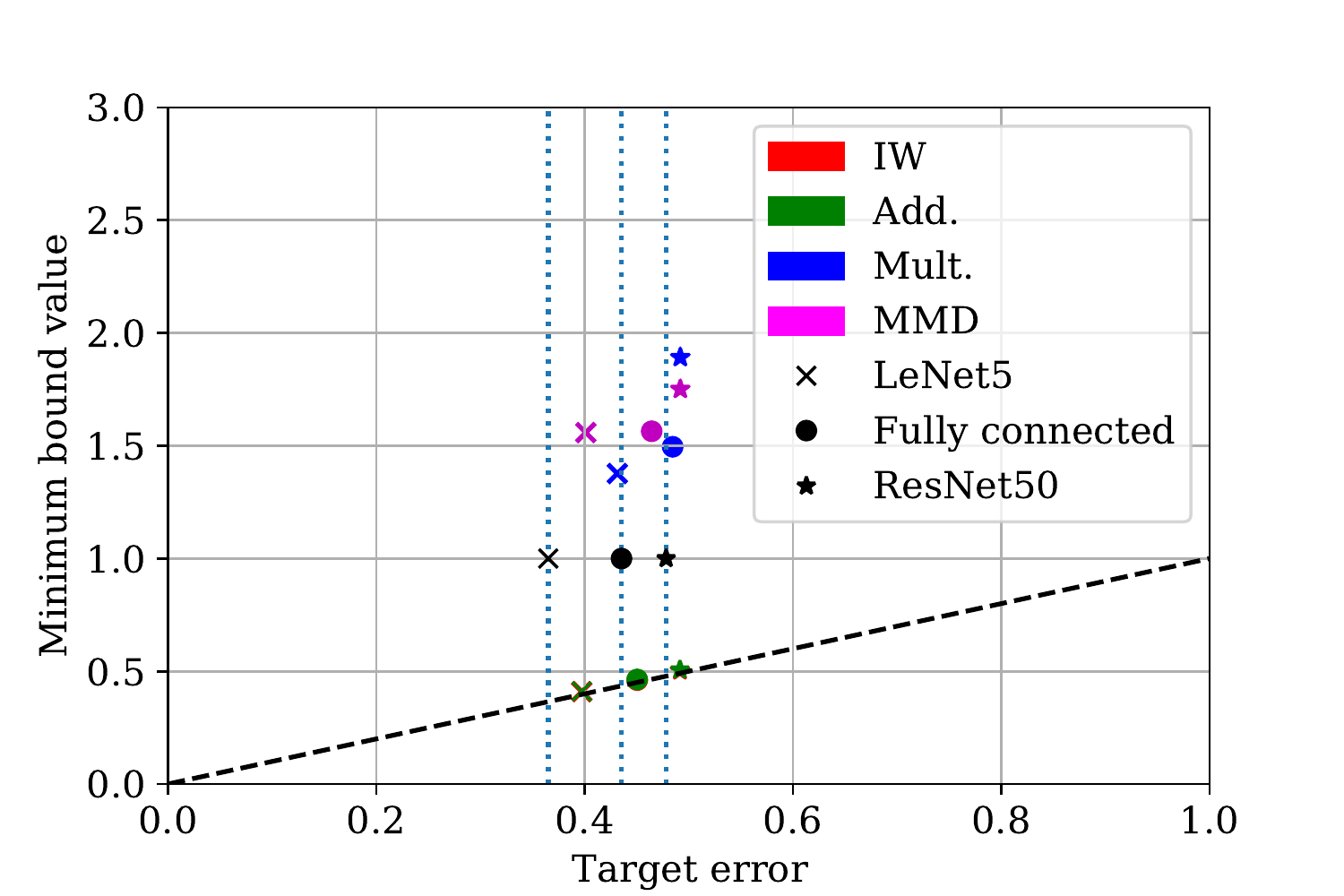}
        \caption{CheXpert+ChestX-ray14}
        \label{fig:6bestbound}
    \end{subfigure}%
    \caption{An illustration of the minimum bound value achieved by each of the three architectures on both tasks. The lowest target error achieved is indicated by a  black marker with a vertical dotted line through it.}
    \label{fig:bestbound}
\end{figure*}

The multiplicative bound of \citep{germain16} (\mult{}) suffers from the amplification of the source error $e_S$  and $\KL$ term by the factor $\beta_\infty$, and is generally larger than the \add{} and \iw{} bounds. Conceptually, the \mult{} and \iw{} bounds are similar, but in the former, the loss is multiplied uniformly by the largest weight. For tasks where certain inputs with high loss are more uncommon in the target domain than in the source domain, this is especially detrimental. The \mmd{} bound is initially dominated by the MMD distance between inputs from the source and target domains, as shown in \ref{fig:2fcmmdboundparts03}, which is large and independent of the learned hypothesis. As such, this term cannot be reduced by optimization, without, for example, computing it in representation space~\citep{long_learning_2015}. With this approach, unobservable errors due to non-invertible representations must be accounted for~\citep{johansson_support_2019}. 

Experiments on using bounds for early stopping and model selection with different architectures yield the results seen in Figure~\ref{fig:bestbound}. We can see that the errors achieved by  terminating training at the smallest bound value (colored markers) do not coincide with the best-achieved target performance during training (denoted by the vertical dotted lines). Clearly, the bounds are not tight enough to do early stopping. This is a result of the sample generalization term $\KL$ increasing during training. For other analyses, this need not be the case. For larger architectures, the early-stopped models are closer to the best target models. If we instead look at the same figure again, but this time focus on utility for model selection we find something interesting. It seems that the bounds might be useful in this regard as they consistently have lower values for architectures/models which perform well. However, to be able to say this conclusively a more thorough study with different learning setups must be done. Both of the two previous observations should be contextualised with the fact that the domain shift terms are not dependent on the model as such, but amplify looseness in the case of the \mult{} and \iw{} bounds. For the \mmd{} and \add{} bounds, increased looseness during training is an artifact only of sample generalization. 
\begin{figure*}[t!]
    \centering
      \begin{subfigure}{0.49\textwidth}
    \centering
        \includegraphics[width=.92\textwidth]{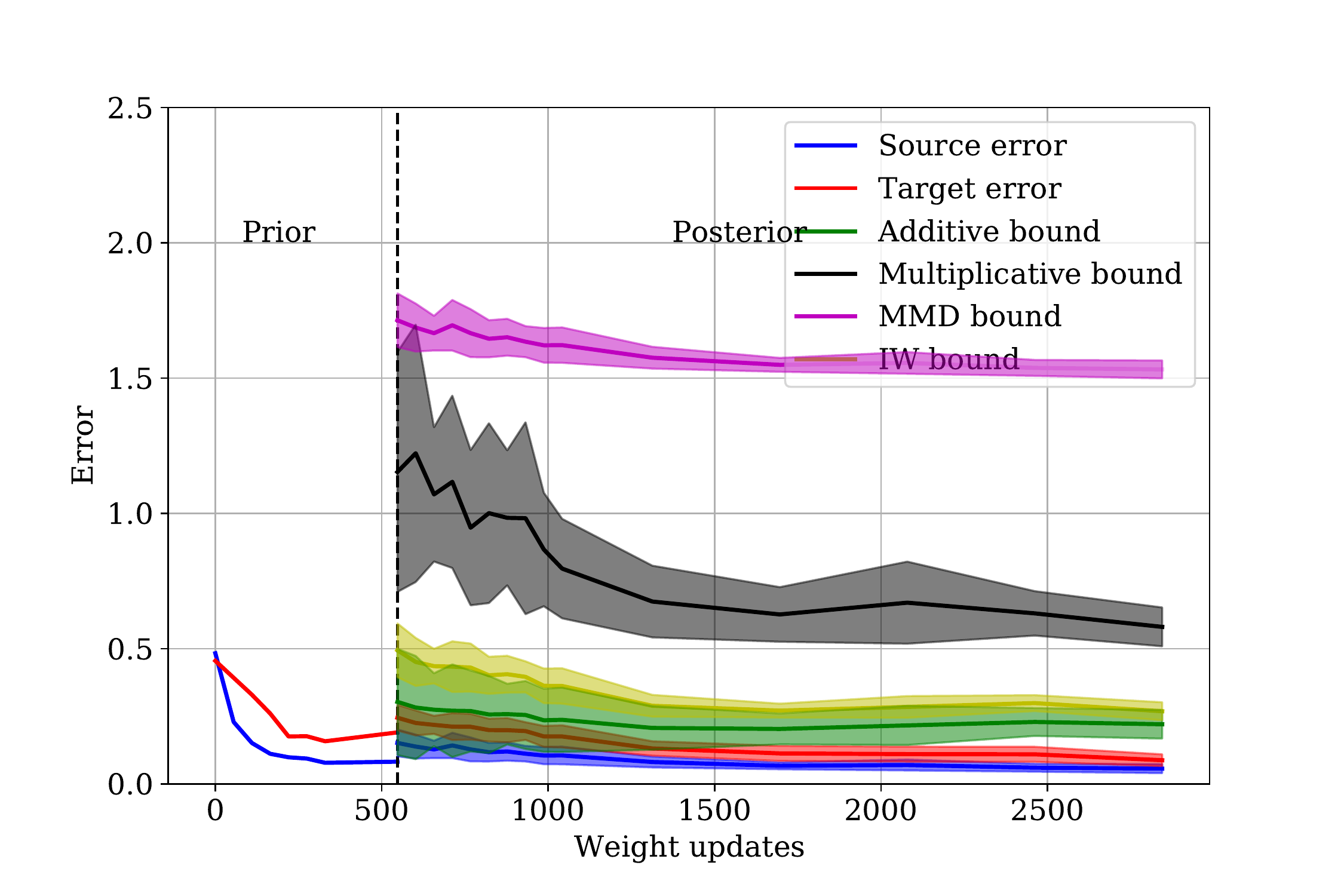}
        \caption{MNIST/MNIST-M task}
        \label{fig:2lenettrain03}
    \end{subfigure}%
        \begin{subfigure}{0.49\textwidth}
    \centering
        \includegraphics[width=.92\textwidth]{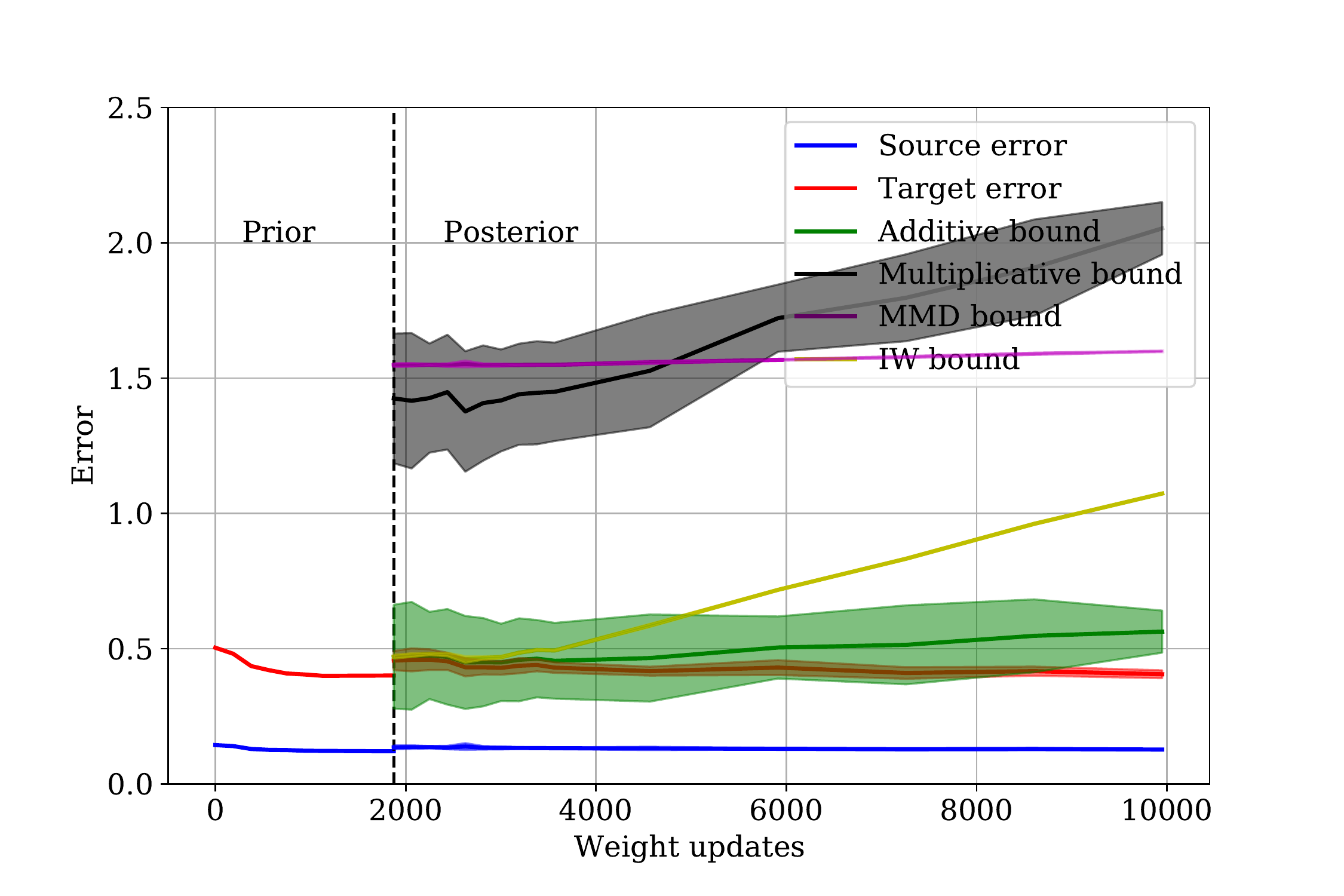}
        \caption{CheXpert+ChestX-ray14 task}
        \label{fig:6lenettrain03}
    \end{subfigure}%
    \caption{Bounds evaluated at different points during training of the LeNet-5 architecture. $\alpha=0.3$, $\sigma=0.03$. The shaded areas represent one standard deviation.}
    \label{fig:datadeptrain}
\end{figure*}

\begin{figure} %%% make this figure wrap too?
    \centering
     \begin{subfigure}{0.46\textwidth}
    \centering
        \includegraphics[width=.92\textwidth]{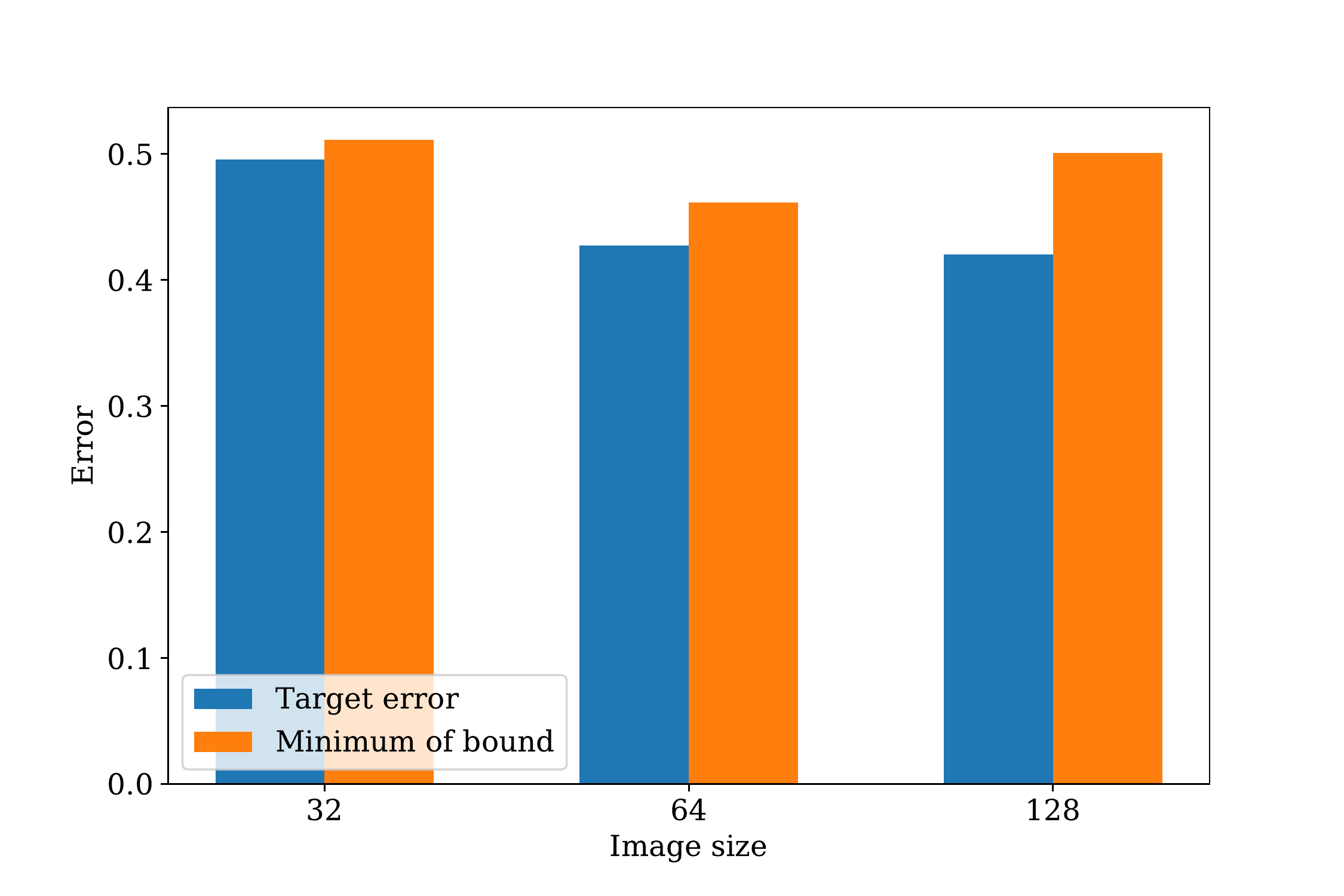}
        \caption{Minimum bound value and target error on the X-ray task with different sizes of input images. Architecture used is ResNet50, $\alpha=0.3$, $\sigma=0.003$}
        \label{fig:imagesize}
    \end{subfigure}%
    \;
        \begin{subfigure}{0.46\textwidth}
    \centering
        \includegraphics[width=.92\textwidth]{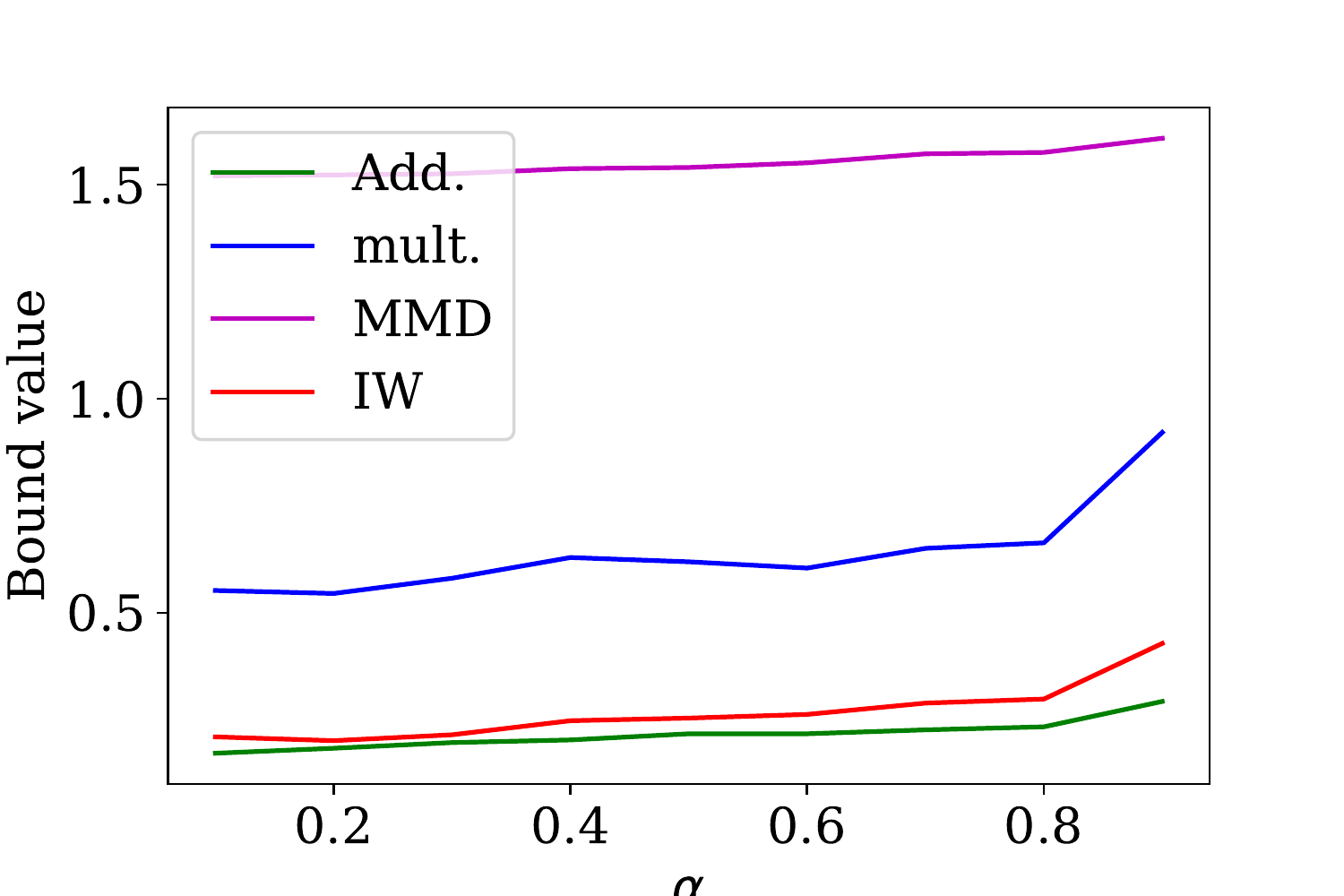}
        \caption{The minimum value of the bounds on the MNIST/MNIST-M task for different values of $\alpha$. The architecture used is LeNet-5.}
        \label{fig:2lenetalpha}
    \end{subfigure}
    \caption{a) show results of varying image sizes when training the ResNet50 network on the X-ray task. b) shows how the minimum value of the bound varies with $\alpha$. }
    \label{fig:oddexp}
\end{figure}
As we can see in Figure~\ref{fig:imagesize}, when we vary the size of the images we give to the ResNet50 architecture we observe that the error seems to decrease for the larger image sizes. Although, the minimum bound value achieved does not seem to follow the same trend consistently. This is likely the result of the amount of epochs trained for both prior and posterior. In Figure~\ref{fig:2lenetalpha}, we see that the choice of prior sample proportion $\alpha$ makes some change to the smallest bound achieved. We also see the minimum bound values grow for large values of $\alpha$, indicating that the remaining data is better spent calculating the bound than informing the prior in this case. We can also infer from Figure~\ref{fig:nodatadep} that using no data to inform the prior is worse than using some. The overall shape is consistent with the results reported in \citet[Figure 1]{dziugaite_role_2020}.
%% different image sizes
%
% DISCUSSION
%
\section{Discussion}
From our survey of the literature, it is clear that only a small handful of analyses of UDA generalization can be informative as practical bounds on target domain performance. The main obstacle for computing existing bounds is that they are vacuous or intractable to compute for the kinds of models which perform the best on common UDA benchmarks---deep neural networks. A potential remedy is the use of PAC-Bayes bounds, which perform well once they are applied with data-dependent priors; without this they are vacuous. In our experiments, the \add{} bound with the unobservable term is the tightest which is unsurprising given its dependence on target labels. Furthermore, we note that the application of importance weights also performs very well as the setting is sufficiently benign. As such we can say that in this setting we can achieve the desiderata of a tractably computable, tight bound using the \iw{} bound. However, recall that the guarantee we get is on a distribution over classifiers and not on one specific classifier. It should be noted, however, that the \iw{} bound can become vacuous in certain situations where the worst-case density ratio, $\beta_\infty$, is large and either the $\KL$ term or errors on underrepresented classes is large enough.

We found that the lowest value of the bounds achieved during training does not in general correspond to the best performing model on target. This tells us that these bounds are not useful metrics for early stopping. Further, the findings for using bound values for model selection are inconclusive, more experiments have to be conducted to answer this question satisfactorily. During training, we see that the dynamics are dominated by the KL-divergence term, inherent to PAC-Bayes analysis, as training progresses. This reinforces our view that these bounds might be useful in getting performance estimates of methods at one particular point and not over several points during training if we do not have access to a large sample. This issue might be ameliorated by regularizing towards the prior during training, although this introduces yet another optimization as we now have to find the optimal regularization strength. In addition, it is not certain whether this will have any adverse effects on the final performance of the learned classifier.

%%%% Movde from results

A limitation of this work is that the bounds are cumbersome to compute and it is possible to do several optimizations in the process of producing the bounds. We have tried to do as few as possible in the name of practicality. We list some of the possible further optimizations in Appendix~\ref{app:exp} for the reader's consideration. The impracticality of computing PAC-Bayes bounds is a known issue that has had some work done by \citet{viallard_general_2021} where they introduce an approach which would remove the computation of expectation over the posterior. In addition, in this work the computation of test errors have dominated the computation time. To produce the results one has to compute the predictions at least 50 times(5 pairs of sampled models from the posterior and 5 random seeds) for each datapoint in the bound for a single choice of prior. This will naturally consume increasing amounts of time with larger data sets.

Furthermore, the overlap assumption will not hold for all real-world applications. In fact, many of the benchmarks for algorithm development, such as the SVHN$\rightarrow$MNIST task~\citep{ganin_domain-adversarial_2016} blatantly violate overlap, since images of house numbers and handwritten digits differ vastly in pixel space. Examples where overlap holds by definition include when the target domain represents a subpopulation of a larger population given by the source domain, e.g., women (target) among all patients (source) with a medical condition. Although an easy learning problem on its face, the optimal model in the full population may not be optimal for the subpopulation. Even when overlap is violated, many share the intuition that overlap may hold in a transformed space~\citep{wu_asymmetric_2019}, representative of the core aspects of the problem---a digit is a digit, whether on a house or a postcard. 

The strictness of the overlap assumption has been studied by \citet{DAMOUR21} where it was found that even for Gaussian distributions with insubstantial differences in mean parameters, overlap vanishes in high dimensions. Motivated by this fact we might wish to adopt relaxed versions of our assumptions or completely novel ones which still guarantee consistent estimation. A first step could be to require overlap only in a transformed space, not in the input space, like in \citet{wu_asymmetric_2019} or  only requiring overlap in specific regions and leveraging assumptions on  ``closeness'' in the other regions, as in \citet{johansson_support_2019}. Further, task-specific assumptions are likely needed for a more complete description of out-of-distribution generalization. We mean task-specific in the sense that the assumptions will depend on the structure on the problem and the data-generating process~\citep{hansen2008} or other approaches. Overcoming this gap is a important direction of future study. 

Another limitation of this work is that the hypotheses do not optimize for adaptation to the target domain, which might be achieved through representation learning as in \citet{ganin_domain-adversarial_2016} or minimization of a weighted loss~\citep{shimodaira_improving_2000}. Our setting is representative for tasks where the target domain is unknown during training, but known when computing the bounds. Further, in this work we have assumed that we are able to estimate the importance weights exactly which may not be feasible in high-dimensional settings. In addition, there is no guarantee that the estimation error of the weights is small and thus even a small misestimation may have quite large implications for the resulting bound.

%%% everyone should probably think about the utility of the theory they are making. What are the bounds good for and what value do they take if computable. 
Future work regarding generalization bounds should preferably comment upon usefulness of their bound as a practical guarantee for performance, which is something that is often lacking. Ideally this would extend to explicit calculation if the bound is possible to compute. New bounds are often used as inspiration towards new algorithms which are hoped to result in more generalizable models. However, this is seldom guaranteed by theory and verified only in limited settings empirically. 

%%% how does one achieve a tight bound using what is presently available in the literature? Should they use bounds as performance guarantees? Probability of not holding etc.
Our results offer indications for how to obtain tractable and tight bounds for neural networks used in UDA tasks with available tools. If overlap can be assumed to hold, then use the \iw{} bound, estimate importance weights using density estimation~\citep{sugiyama_density_2012} or probabilistic classifiers and apply data-dependent priors. The amount of data to use and how long to train your prior etc. are all task dependent and thus some engineering is necessary to pick optimal values. %
If this cannot be assumed, the most promising approach to get bounds which fulfill our desiderata in this case would be to use the \mmd{} bound as this does not technically rely on overlap and is tractable to compute for neural networks. This relies on the added assumptions of the pointwise loss being bounded by a function in the associated reproducing-kernel Hilbert space, which may or may not hold. The nature of this assumption makes it less useful since no test for this is available absent overlap and is similar in nature to assuming that joint optimal error is small. However, if the function under estimation is believed to be smooth, the assumption is more plausible.
In conclusion, it is clear that the general case demands new research, and alternative, task-specific assumptions, to allow tight performance guarantees for realistic problems. In either setting, we conjecture that the tightest bounds will be coupled to the training procedure.
\section*{Acknowledgements}
This work was supported in part by the Wallenberg AI, Autonomous Systems and
Software Program (WASP) funded by the Knut and Alice Wallenberg Foundation.

The computations and data handling were enabled by resources provided by the Swedish National Infrastructure for Computing (SNIC) at Chalmers Centre for Computational Science and Engineering (C3SE) partially funded by the Swedish Research Council through grant agreement no. 2018-05973.
Mikael Öhman at C3SE is acknowledged for his assistance concerning technical and implementation aspects in making the code run on the C3SE resources.
\bibliography{tmlr}

\begin{thebibliography}{61}
\providecommand{\natexlab}[1]{#1}
\providecommand{\url}[1]{\texttt{#1}}
\expandafter\ifx\csname urlstyle\endcsname\relax
  \providecommand{\doi}[1]{doi: #1}\else
  \providecommand{\doi}{doi: \begingroup \urlstyle{rm}\Url}\fi

\bibitem[Acuna et~al.(2021)Acuna, Zhang, Law, and Fidler]{acuna_fdiv_2021}
David Acuna, Guojun Zhang, Marc~T. Law, and Sanja Fidler.
\newblock f-domain adversarial learning: Theory and algorithms.
\newblock In Marina Meila and Tong Zhang (eds.), \emph{Proceedings of the 38th
  International Conference on Machine Learning}, volume 139 of
  \emph{Proceedings of Machine Learning Research}, pp.\  66--75. PMLR, 18--24
  Jul 2021.
\newblock URL \url{https://proceedings.mlr.press/v139/acuna21a.html}.

\bibitem[AlBadawy et~al.(2018)AlBadawy, Saha, and
  Mazurowski]{albadawy_deep_2018}
Ehab~A. AlBadawy, Ashirbani Saha, and Maciej~A. Mazurowski.
\newblock Deep learning for segmentation of brain tumors: {Impact} of
  cross-institutional training and testing.
\newblock \emph{Medical Physics}, 45\penalty0 (3):\penalty0 1150--1158, 2018.

\bibitem[Ambroladze et~al.(2007)Ambroladze, Parrado-hern\'{a}ndez, and
  Shawe-taylor]{ambroladze2007}
Amiran Ambroladze, Emilio Parrado-hern\'{a}ndez, and John Shawe-taylor.
\newblock Tighter pac-bayes bounds.
\newblock In B.~Sch\"{o}lkopf, J.~Platt, and T.~Hoffman (eds.), \emph{Advances
  in Neural Information Processing Systems}, volume~19. MIT Press, 2007.

\bibitem[Arbeláez et~al.(2011)Arbeláez, Maire, Fowlkes, and Malik]{bsds500}
Pablo Arbeláez, Michael Maire, Charless Fowlkes, and Jitendra Malik.
\newblock Contour detection and hierarchical image segmentation.
\newblock \emph{IEEE Transactions on Pattern Analysis and Machine
  Intelligence}, 33\penalty0 (5):\penalty0 898--916, 2011.

\bibitem[Bartlett et~al.(2019)Bartlett, Harvey, Liaw, and
  Mehrabian]{bartlett_vcdim}
Peter~L Bartlett, Nick Harvey, Christopher Liaw, and Abbas Mehrabian.
\newblock Nearly-tight {VC}-dimension and {Pseudodimension} {Bounds} for
  {Piecewise} {Linear} {Neural} {Networks}.
\newblock \emph{Journal of Machine Learning Research}, pp.\  1--17, 2019.

\bibitem[Ben-David et~al.(2007)Ben-David, Blitzer, Crammer, and
  Pereira]{ben-david_analysis}
Shai Ben-David, John Blitzer, Koby Crammer, and Fernando Pereira.
\newblock Analysis of representations for domain adaptation.
\newblock In B.~Sch\"{o}lkopf, J.~Platt, and T.~Hoffman (eds.), \emph{Advances
  in Neural Information Processing Systems}, volume~19. MIT Press, 2007.

\bibitem[Ben-David et~al.(2010)Ben-David, Blitzer, Crammer, Kulesza, Pereira,
  and Vaughan]{Ben-David2010a}
Shai Ben-David, John Blitzer, Koby Crammer, Alex Kulesza, Fernando Pereira, and
  Jennifer~Wortman Vaughan.
\newblock A theory of learning from different domains.
\newblock \emph{Mach. Learn.}, 79\penalty0 (1-2):\penalty0 151--175, May 2010.

\bibitem[Blitzer et~al.(2008)Blitzer, Crammer, Kulesza, Pereira, and
  Wortman]{blitzer_learning}
John Blitzer, Koby Crammer, Alex Kulesza, Fernando Pereira, and Jennifer
  Wortman.
\newblock Learning {Bounds} for {Domain} {Adaptation}.
\newblock \emph{Proceedings of the Conference on Neural Information Processing
  Systems (NIPS)}, pp.\ ~8, 2008.

\bibitem[Castro et~al.(2020)Castro, Walker, and Glocker]{castro_causality_2020}
Daniel~C. Castro, Ian Walker, and Ben Glocker.
\newblock Causality matters in medical imaging.
\newblock \emph{Nature Communications}, 11\penalty0 (1):\penalty0 3673, July
  2020.
\newblock ISSN 2041-1723.

\bibitem[Cortes \& Mohri(2014)Cortes and Mohri]{cortes_domain_2014}
Corinna Cortes and Mehryar Mohri.
\newblock Domain adaptation and sample bias correction theory and algorithm for
  regression.
\newblock \emph{Theoretical Computer Science}, 519:\penalty0 103--126, January
  2014.
\newblock ISSN 0304-3975.

\bibitem[Cortes et~al.(2010)Cortes, Mansour, and Mohri]{cortes_learning_2010}
Corinna Cortes, Yishay Mansour, and Mehryar Mohri.
\newblock Learning {Bounds} for {Importance} {Weighting}.
\newblock In J.~Lafferty, C.~Williams, J.~Shawe-Taylor, R.~Zemel, and
  A.~Culotta (eds.), \emph{Advances in {Neural} {Information} {Processing}
  {Systems}}, volume~23, pp.\  442--450. Curran Associates, Inc., 2010.

\bibitem[Cortes et~al.(2015)Cortes, Mohri, and
  Muñoz~Medina]{cortes_adaptation_2015}
Corinna Cortes, Mehryar Mohri, and Andrés Muñoz~Medina.
\newblock Adaptation {Algorithm} and {Theory} {Based} on {Generalized}
  {Discrepancy}.
\newblock In \emph{Proceedings of the 21th {ACM} {SIGKDD} {International}
  {Conference} on {Knowledge} {Discovery} and {Data} {Mining}}, pp.\  169--178,
  Sydney NSW Australia, August 2015. ACM.
\newblock ISBN 978-1-4503-3664-2.

\bibitem[Cortes et~al.(2019)Cortes, Mohri, and Medina]{cortes2019}
Corinna Cortes, Mehryar Mohri, and Andr{{\'e}}s~Mu{{\~n}}oz Medina.
\newblock Adaptation based on generalized discrepancy.
\newblock \emph{Journal of Machine Learning Research}, 20\penalty0
  (1):\penalty0 1--30, 2019.
\newblock URL \url{http://jmlr.org/papers/v20/15-192.html}.

\bibitem[Courty et~al.(2017{\natexlab{a}})Courty, Flamary, Habrard, and
  Rakotomamonjy]{courty2017joint}
Nicolas Courty, R{\'e}mi Flamary, Amaury Habrard, and Alain Rakotomamonjy.
\newblock Joint distribution optimal transportation for domain adaptation.
\newblock In \emph{Proceedings of the 31st International Conference on Neural
  Information Processing Systems}, pp.\  3733--3742, 2017{\natexlab{a}}.

\bibitem[Courty et~al.(2017{\natexlab{b}})Courty, Flamary, Habrard, and
  Rakotomamonjy]{courty_joint_2017}
Nicolas Courty, Rémi Flamary, Amaury Habrard, and Alain Rakotomamonjy.
\newblock Joint {Distribution} {Optimal} {Transportation} for {Domain}
  {Adaptation}.
\newblock \emph{arXiv:1705.08848 [cs, stat]}, October 2017{\natexlab{b}}.
\newblock arXiv: 1705.08848.

\bibitem[Dhouib et~al.(2020)Dhouib, Redko, and Lartizien]{dhouib_2020}
Sofien Dhouib, Ievgen Redko, and Carole Lartizien.
\newblock Margin-aware adversarial domain adaptation with optimal transport.
\newblock In Hal~Daumé III and Aarti Singh (eds.), \emph{Proceedings of the
  37th International Conference on Machine Learning}, volume 119 of
  \emph{Proceedings of Machine Learning Research}, pp.\  2514--2524. PMLR,
  13--18 Jul 2020.

\bibitem[Dhouib \& Redko(2018)Dhouib and Redko]{dhouib_revisiting_2018}
Soﬁen Dhouib and Ievgen Redko.
\newblock Revisiting ( $\epsilon$ , $\gamma$, $\tau$ )-similarity learning for
  domain adaptation.
\newblock \emph{NeurIPS}, pp.\  7408–7417, 2018.

\bibitem[Dziugaite \& Roy(2019)Dziugaite and
  Roy]{dziugaite_data-dependent_2019}
Gintare~Karolina Dziugaite and Daniel~M. Roy.
\newblock Data-dependent {PAC}-{Bayes} priors via differential privacy.
\newblock \emph{arXiv:1802.09583 [cs, stat]}, April 2019.
\newblock arXiv: 1802.09583.

\bibitem[Dziugaite et~al.(2021)Dziugaite, Hsu, Gharbieh, Arpino, and
  Roy]{dziugaite_role_2020}
Gintare~Karolina Dziugaite, Kyle Hsu, Waseem Gharbieh, Gabriel Arpino, and
  Daniel~M. Roy.
\newblock On the role of data in {PAC}-{Bayes} bounds.
\newblock In \emph{Proceedings of the 24th International Conference on
  Artificial Intelligence and Statistics (AISTATS)}, October 2021.

\bibitem[D’Amour et~al.(2021)D’Amour, Ding, Feller, Lei, and
  Sekhon]{DAMOUR21}
Alexander D’Amour, Peng Ding, Avi Feller, Lihua Lei, and Jasjeet Sekhon.
\newblock Overlap in observational studies with high-dimensional covariates.
\newblock \emph{Journal of Econometrics}, 221\penalty0 (2):\penalty0 644--654,
  2021.
\newblock ISSN 0304-4076.
\newblock \doi{https://doi.org/10.1016/j.jeconom.2019.10.014}.
\newblock URL
  \url{https://www.sciencedirect.com/science/article/pii/S0304407620302694}.

\bibitem[Ganin et~al.(2016)Ganin, Ustinova, Ajakan, Germain, Larochelle,
  Laviolette, Marchand, and Lempitsky]{ganin_domain-adversarial_2016}
Yaroslav Ganin, Evgeniya Ustinova, Hana Ajakan, Pascal Germain, Hugo
  Larochelle, François Laviolette, Mario Marchand, and Victor Lempitsky.
\newblock Domain-{Adversarial} {Training} of {Neural} {Networks}.
\newblock \emph{arXiv:1505.07818 [cs, stat]}, May 2016.
\newblock arXiv: 1505.07818.

\bibitem[Germain et~al.(2013)Germain, Habrard, Laviolette, and
  Morvant]{germain_pac-bayesian_2013}
Pascal Germain, Amaury Habrard, François Laviolette, and Emilie Morvant.
\newblock A {PAC}-{Bayesian} {Approach} for {Domain} {Adaptation} with
  {Specialization} to {Linear} {Classifiers}.
\newblock In \emph{International {Conference} on {Machine} {Learning}}, pp.\
  738--746. PMLR, May 2013.
\newblock ISSN: 1938-7228.

\bibitem[Germain et~al.(2016)Germain, Habrard, Laviolette, and
  Morvant]{germain16}
Pascal Germain, Amaury Habrard, François Laviolette, and Emilie Morvant.
\newblock A new pac-bayesian perspective on domain adaptation.
\newblock In Maria~Florina Balcan and Kilian~Q. Weinberger (eds.),
  \emph{Proceedings of The 33rd International Conference on Machine Learning},
  volume~48 of \emph{Proceedings of Machine Learning Research}, pp.\  859--868,
  New York, New York, USA, 20--22 Jun 2016. PMLR.

\bibitem[Germain et~al.(2020)Germain, Habrard, Laviolette, and
  Morvant]{germain_pac-bayes_2020}
Pascal Germain, Amaury Habrard, François Laviolette, and Emilie Morvant.
\newblock {PAC}-{Bayes} and {Domain} {Adaptation}.
\newblock \emph{Neurocomputing}, 379:\penalty0 379--397, February 2020.
\newblock ISSN 09252312.
\newblock arXiv: 1707.05712.

\bibitem[Gretton et~al.(2012)Gretton, Borgwardt, Rasch, Sch{\"o}lkopf, and
  Smola]{gretton2012kernel}
Arthur Gretton, Karsten~M Borgwardt, Malte~J Rasch, Bernhard Sch{\"o}lkopf, and
  Alexander Smola.
\newblock A kernel two-sample test.
\newblock \emph{The Journal of Machine Learning Research}, 13\penalty0
  (1):\penalty0 723--773, 2012.

\bibitem[Hansen(2008)]{hansen2008}
Ben~B. Hansen.
\newblock {The prognostic analogue of the propensity score}.
\newblock \emph{Biometrika}, 95\penalty0 (2):\penalty0 481--488, 06 2008.
\newblock ISSN 0006-3444.
\newblock \doi{10.1093/biomet/asn004}.
\newblock URL \url{https://doi.org/10.1093/biomet/asn004}.

\bibitem[He et~al.(2016)He, Zhang, Ren, and Sun]{He2016DeepRL}
Kaiming He, X.~Zhang, Shaoqing Ren, and Jian Sun.
\newblock Deep residual learning for image recognition.
\newblock \emph{2016 IEEE Conference on Computer Vision and Pattern Recognition
  (CVPR)}, pp.\  770--778, 2016.

\bibitem[Irvin et~al.(2019)Irvin, Rajpurkar, Ko, Yu, Ciurea-Ilcus, Chute,
  Marklund, Haghgoo, Ball, Shpanskaya, Seekins, Mong, Halabi, Sandberg, Jones,
  Larson, Langlotz, Patel, Lungren, and Ng]{Irvin2019CheXpertAL}
Jeremy~A. Irvin, Pranav Rajpurkar, M.~Ko, Yifan Yu, Silviana Ciurea-Ilcus,
  Chris Chute, H.~Marklund, Behzad Haghgoo, Robyn~L. Ball, K.~Shpanskaya,
  J.~Seekins, D.~Mong, S.~Halabi, J.~Sandberg, Ricky~H Jones, D.~Larson,
  C.~Langlotz, B.~Patel, M.~Lungren, and A.~Ng.
\newblock Chexpert: A large chest radiograph dataset with uncertainty labels
  and expert comparison.
\newblock In \emph{AAAI}, 2019.

\bibitem[Johansson et~al.(2019)Johansson, Sontag, and
  Ranganath]{johansson_support_2019}
Fredrik~D Johansson, David Sontag, and Rajesh Ranganath.
\newblock Support and invertibility in domain-invariant representations.
\newblock In \emph{The 22nd International Conference on Artificial Intelligence
  and Statistics}, pp.\  527--536. PMLR, 2019.

\bibitem[Kuroki et~al.(2019)Kuroki, Charoenphakdee, Bao, Honda, Sato, and
  Sugiyama]{kuroki_unsupervised_2019}
Seiichi Kuroki, Nontawat Charoenphakdee, Han Bao, Junya Honda, Issei Sato, and
  Masashi Sugiyama.
\newblock Unsupervised {Domain} {Adaptation} {Based} on {Source}-{Guided}
  {Discrepancy}.
\newblock \emph{Proceedings of the AAAI Conference on Artificial Intelligence},
  33\penalty0 (01):\penalty0 4122--4129, July 2019.

\bibitem[Lecun(1998)]{lecun_gradient-based_1998}
Yann Lecun.
\newblock Gradient-{Based} {Learning} {Applied} to {Document} {Recognition}.
\newblock \emph{proceedings of the IEEE}, 86\penalty0 (11):\penalty0 47, 1998.

\bibitem[Long et~al.(2015)Long, Cao, Wang, and Jordan]{long_learning_2015}
Mingsheng Long, Yue Cao, Jianmin Wang, and Michael~I. Jordan.
\newblock Learning {Transferable} {Features} with {Deep} {Adaptation}
  {Networks}.
\newblock \emph{arXiv:1502.02791 [cs]}, May 2015.
\newblock arXiv: 1502.02791.

\bibitem[Mansour et~al.(2009)Mansour, Mohri, and
  Rostamizadeh]{mansour_domain_2009}
Yishay Mansour, Mehryar Mohri, and Afshin Rostamizadeh.
\newblock Domain {Adaptation}: {Learning} {Bounds} and {Algorithms}.
\newblock In \emph{Proceedings of the Conference on Learning Theory}, February
  2009.

\bibitem[McAllester(1998)]{mcallester_pac-bayesian_1998}
David~A. McAllester.
\newblock Some {PAC}-{Bayesian} theorems.
\newblock In \emph{Proceedings of the eleventh annual conference on
  {Computational} learning theory}, {COLT}' 98, pp.\  230--234, New York, NY,
  USA, July 1998. Association for Computing Machinery.
\newblock ISBN 978-1-58113-057-7.

\bibitem[McAllester(1999)]{mcallester1999}
David~A. McAllester.
\newblock Pac-bayesian model averaging.
\newblock In \emph{Proceedings of the Twelfth Annual Conference on
  Computational Learning Theory}, COLT '99, pp.\  164–170, New York, NY, USA,
  1999. Association for Computing Machinery.
\newblock ISBN 1581131674.

\bibitem[McAllester(2013)]{mcallester_pac-bayesian_2013}
David~A. McAllester.
\newblock A {PAC}-{Bayesian} {Tutorial} with {A} {Dropout} {Bound}.
\newblock \emph{arXiv e-prints}, 1307:\penalty0 arXiv:1307.2118, July 2013.

\bibitem[Morvant et~al.(2012)Morvant, Habrard, and
  Ayache]{morvant_parsimonious_2012}
Emilie Morvant, Amaury Habrard, and Stéphane Ayache.
\newblock Parsimonious unsupervised and semi-supervised domain adaptation with
  good similarity functions.
\newblock \emph{Knowledge and Information Systems}, 33\penalty0 (2):\penalty0
  309--349, November 2012.
\newblock ISSN 0219-1377, 0219-3116.

\bibitem[Parrado-Hern\'{a}ndez et~al.(2012)Parrado-Hern\'{a}ndez, Ambroladze,
  Shawe-Taylor, and Sun]{parrado-hernandez_pac-bayes}
Emilio Parrado-Hern\'{a}ndez, Amiran Ambroladze, John Shawe-Taylor, and
  Shiliang Sun.
\newblock Pac-bayes bounds with data dependent priors.
\newblock \emph{J. Mach. Learn. Res.}, 13\penalty0 (1):\penalty0 3507–3531,
  dec 2012.
\newblock ISSN 1532-4435.

\bibitem[Perone et~al.(2019)Perone, Ballester, Barros, and
  Cohen-Adad]{perone_unsupervised_2019}
Christian~S Perone, Pedro Ballester, Rodrigo~C Barros, and Julien Cohen-Adad.
\newblock Unsupervised domain adaptation for medical imaging segmentation with
  self-ensembling.
\newblock \emph{NeuroImage}, 194:\penalty0 1--11, July 2019.
\newblock ISSN 1053-8119.
\newblock \doi{10.1016/j.neuroimage.2019.03.026}.

\bibitem[Redko(2015)]{redkothesis}
Ievgen Redko.
\newblock \emph{Nonnegative matrix factorization for transfer learning}.
\newblock PhD thesis, Paris North University, 2015.

\bibitem[Redko et~al.(2017)Redko, Habrard, and Sebban]{redko_theoretical_2017}
Ievgen Redko, Amaury Habrard, and Marc Sebban.
\newblock Theoretical {Analysis} of {Domain} {Adaptation} with {Optimal}
  {Transport}.
\newblock \emph{arXiv:1610.04420 [cs, stat]}, July 2017.
\newblock arXiv: 1610.04420.

\bibitem[Redko et~al.(2019)Redko, Courty, Flamary, and Tuia]{redko_2019}
Ievgen Redko, Nicolas Courty, R\'emi Flamary, and Devis Tuia.
\newblock Optimal transport for multi-source domain adaptation under target
  shift.
\newblock In Kamalika Chaudhuri and Masashi Sugiyama (eds.), \emph{Proceedings
  of the Twenty-Second International Conference on Artificial Intelligence and
  Statistics}, volume~89 of \emph{Proceedings of Machine Learning Research},
  pp.\  849--858. PMLR, 16--18 Apr 2019.

\bibitem[Redko et~al.(2020)Redko, Morvant, Habrard, Sebban, and
  Bennani]{redko_survey_2020}
Ievgen Redko, Emilie Morvant, Amaury Habrard, Marc Sebban, and Younès Bennani.
\newblock A survey on domain adaptation theory: learning bounds and theoretical
  guarantees.
\newblock \emph{arXiv:2004.11829 [cs, stat]}, August 2020.
\newblock arXiv: 2004.11829.

\bibitem[Rivasplata et~al.(2020)Rivasplata, Kuzborskij, Szepesvari, and
  Shawe-Taylor]{rivasplata_pac-bayes_2020}
Omar Rivasplata, Ilja Kuzborskij, Csaba Szepesvari, and John Shawe-Taylor.
\newblock {PAC}-{Bayes} {Analysis} {Beyond} the {Usual} {Bounds}.
\newblock \emph{arXiv:2006.13057 [cs, stat]}, December 2020.
\newblock arXiv: 2006.13057.

\bibitem[Shawe-Taylor \& Williamson(1997)Shawe-Taylor and
  Williamson]{ShaweWilliamson97}
John Shawe-Taylor and Robert~C. Williamson.
\newblock A pac analysis of a bayesian estimator.
\newblock In \emph{Proceedings of the Tenth Annual Conference on Computational
  Learning Theory}, COLT '97, pp.\  2–9, New York, NY, USA, 1997. Association
  for Computing Machinery.
\newblock ISBN 0897918916.

\bibitem[Shen et~al.(2018)Shen, Qu, Zhang, and Yu]{shen_wasserstein_2018}
Jian Shen, Yanru Qu, Weinan Zhang, and Yong Yu.
\newblock Wasserstein {Distance} {Guided} {Representation} {Learning} for
  {Domain} {Adaptation}.
\newblock \emph{arXiv:1707.01217 [cs, stat]}, March 2018.
\newblock arXiv: 1707.01217.

\bibitem[Shimodaira(2000)]{shimodaira_improving_2000}
Hidetoshi Shimodaira.
\newblock Improving predictive inference under covariate shift by weighting the
  log-likelihood function.
\newblock \emph{Journal of Statistical Planning and Inference}, 90\penalty0
  (2):\penalty0 227--244, October 2000.
\newblock ISSN 0378-3758.

\bibitem[Sugiyama et~al.(2012)Sugiyama, Suzuki, and
  Kanamori]{sugiyama_density_2012}
M.~Sugiyama, T.~Suzuki, and T.~Kanamori.
\newblock \emph{Density {Ratio} {Estimation} in {Machine} {Learning}}.
\newblock Cambridge books online. Cambridge University Press, 2012.
\newblock ISBN 978-0-521-19017-6.

\bibitem[Valiant(1984)]{valiant1984}
L.~G. Valiant.
\newblock A theory of the learnable.
\newblock \emph{Communications of the ACM}, pp.\  1134--1142, 1984.

\bibitem[Valle-P{\'e}rez \& Louis(2020)Valle-P{\'e}rez and
  Louis]{valle2020generalization}
Guillermo Valle-P{\'e}rez and Ard~A Louis.
\newblock Generalization bounds for deep learning.
\newblock \emph{arXiv preprint arXiv:2012.04115}, 2020.

\bibitem[Vapnik(1998)]{vladimirvapnik1998}
Vladimir~N. Vapnik.
\newblock \emph{Statistical Learning Theory}.
\newblock Wiley-Interscience, 9 1998.
\newblock ISBN 0471030031.

\bibitem[Viallard et~al.(2021)Viallard, Germain, Habrard, and
  Morvant]{viallard_general_2021}
Paul Viallard, Pascal Germain, Amaury Habrard, and Emilie Morvant.
\newblock A {General} {Framework} for the {Disintegration} of {PAC}-{Bayesian}
  {Bounds}.
\newblock \emph{arXiv:2102.08649 [cs, stat]}, October 2021.
\newblock arXiv: 2102.08649.

\bibitem[Wainwright(2019)]{wainwright_book}
Martin~J. Wainwright.
\newblock \emph{High-Dimensional Statistics: A Non-Asymptotic Viewpoint}.
\newblock Cambridge Series in Statistical and Probabilistic Mathematics.
  Cambridge University Press, 2019.
\newblock \doi{10.1017/9781108627771}.

\bibitem[Wang et~al.(2017)Wang, Peng, Lu, Lu, Bagheri, and
  Summers]{wang2017chestxray}
Xiaosong Wang, Yifan Peng, Le~Lu, Zhiyong Lu, Mohammadhadi Bagheri, and Ronald
  Summers.
\newblock Chestx-ray8: Hospital-scale chest x-ray database and benchmarks on
  weakly-supervised classification and localization of common thorax diseases.
\newblock In \emph{2017 IEEE Conference on Computer Vision and Pattern
  Recognition(CVPR)}, pp.\  3462--3471, 2017.

\bibitem[Wu et~al.(2019)Wu, Winston, Kaushik, and Lipton]{wu_asymmetric_2019}
Yifan Wu, Ezra Winston, Divyansh Kaushik, and Zachary Lipton.
\newblock Domain adaptation with asymmetrically-relaxed distribution alignment.
\newblock In Kamalika Chaudhuri and Ruslan Salakhutdinov (eds.),
  \emph{Proceedings of the 36th International Conference on Machine Learning},
  volume~97 of \emph{Proceedings of Machine Learning Research}, pp.\
  6872--6881. PMLR, 09--15 Jun 2019.
\newblock URL \url{https://proceedings.mlr.press/v97/wu19f.html}.

\bibitem[Zhang et~al.(2012)Zhang, Zhang, and Ye]{zhang_generalization_2012}
Chao Zhang, Lei Zhang, and Jieping Ye.
\newblock Generalization {Bounds} for {Domain} {Adaptation}.
\newblock \emph{Advances in Neural Information Processing Systems},
  25:\penalty0 3320--3328, 2012.

\bibitem[Zhang et~al.(2021)Zhang, Bengio, Hardt, Recht, and
  Vinyals]{zhang2021understanding}
Chiyuan Zhang, Samy Bengio, Moritz Hardt, Benjamin Recht, and Oriol Vinyals.
\newblock Understanding deep learning (still) requires rethinking
  generalization.
\newblock \emph{Communications of the ACM}, 64\penalty0 (3):\penalty0 107--115,
  2021.

\bibitem[Zhang et~al.(2019)Zhang, Liu, Long, and Jordan]{zhang_bridging_2019}
Yuchen Zhang, Tianle Liu, Mingsheng Long, and Michael~I. Jordan.
\newblock Bridging {Theory} and {Algorithm} for {Domain} {Adaptation}.
\newblock \emph{arXiv:1904.05801 [cs, stat]}, April 2019.
\newblock arXiv: 1904.05801 version: 1.

\bibitem[Zhang et~al.(2020)Zhang, Long, Wang, and Jordan]{zhang_localized_2020}
Yuchen Zhang, Mingsheng Long, Jianmin Wang, and Michael~I. Jordan.
\newblock On {Localized} {Discrepancy} for {Domain} {Adaptation}.
\newblock \emph{arXiv:2008.06242 [cs, stat]}, August 2020.
\newblock arXiv: 2008.06242.

\bibitem[Zhao et~al.(2019)Zhao, Combes, Zhang, and Gordon]{zhao_learning_2019}
Han Zhao, Remi Tachet~des Combes, Kun Zhang, and Geoffrey~J. Gordon.
\newblock On {Learning} {Invariant} {Representation} for {Domain} {Adaptation}.
\newblock In \emph{Proceedings of the 36th International Conference on Machine
  Learning}, January 2019.

\bibitem[Zhou et~al.(2019)Zhou, Veitch, Austern, Adams, and
  Orbanz]{zhou2019nonvacuous}
Wenda Zhou, Victor Veitch, Morgane Austern, Ryan~P. Adams, and Peter Orbanz.
\newblock Non-vacuous generalization bounds at the imagenet scale: A
  pac-bayesian compression approach.
\newblock In \emph{International Conference on Learning Representations}, 2019.

\end{thebibliography}
\bibliographystyle{tmlr}

\appendix

\section{Experimental details}
\label{app:exp}
%%% Describe the setup in detail with the whole pipeline. What epochs, learning rates, regularisation strength have been used etc.
%The bounds are evaluated using the 0-1 loss but the training is done with the binary cross entropy loss so as to be differentiable.
The experiments were carried out with a modified version of LeNet-5 due to \citet{zhou2019nonvacuous}, a 1024-600-600-2 fully connected network similar to the one used in \citet{rivasplata_pac-bayes_2020} and a ResNet50 architecture~\citep{He2016DeepRL} as mentioned before. Since the 0-1 loss is not differentiable we substitute it with the binary cross entropy loss which is and provides a tight upper bound on the 0-1 loss. We train with SGD with momentum 0.95 as the optimizer with a batch size of 128. The learning rate was chosen to be $3\times 10^{-3}$ for the LeNet-5 and fully connected architectures while for ResNet50 $3\times 10^{-4}$ was used. The images from MNIST and MNIST-M were padded with zeros to 32$\times$32 images. This was done to be able to use them with the ResNet50V2 implementation in Tensorflow which does not support smaller image sizes.

The training procedure went as follows. We load the data and construct our source and target. The source data is then split into an $\alpha$-fraction $S_\alpha$ which is used to train the prior network on for $|S_\alpha|/b$ iterations, where $b$ is the batch size, to get an informed prior. This is then used as the starting point when training the posterior which is done on all the available data. During training of the posterior network we save 10 network weights during the first epoch and then at the end of every subsequent epoch until termination. We terminate the training of the posterior network when we have trained 5 epochs. %for the MNIST+MNIST-M task and $\epsilon\in \{0.15,0.1,0.05\}$ for the CheXpert + ChestX-ray14 task. 

When training the posterior is terminated we save the weights and proceed with the computation of the bound. In contrast with \citet{dziugaite_role_2020} we not only consider the bound at the point of termination but also at previous points during training. This is done with the goal of gaining an understanding of the bounds' behaviour during training. 

We assume that our prior can be modeled by an isotropic gaussian akin to earlier work, this is done to get an easily computable closed form expression of the KL divergence. To pick a good value of the $\sigma$ parameter one could sweep over some range of values and then use a union bound argument to be able to pick the best result with only a small penalty to the bound. We do not do such optimization and simply pick a value.

We perform a small optimization step when determining the free parameters of the bound. For the bounds from \citet{germain_pac-bayes_2020}, i.e. $a,b,\gamma$ and $\omega$, we iterate over values in the range $\{1\times 10^{-3}, 5\times 10^{-3}, 1\times 10^{-2},\dots, 5 \times 10^{4}, 1\times 10^{5}\}$ for both free parameters and pick the combination which yields the lowest bound. For the \mmd{} and \iw{} bounds we pick $\gamma$ from the range $\{1\times 10^{-3}, 5\times 10^{-3}, 1\times 10^{-2}, 5\times 10^{-2}, 1\times 10^{-1}, 5\times 10^{-1}, 9.9\times 10^{-1}\}$, choosing the one which yields the lowest bound.

\subsection{Possible optimization when producing the bounds}

When computing these bounds there are a lot of different parameter and hyperparameter choices to make, many of which can be optimized. We first have to train at least one model(depending on how many values of $\alpha$ to consider) with all parameter choices that entails. Then we sample models according to whatever the posterior distribution is; the amount depending on how well we want to estimate the expectation over the posterior. All PAC-Bayes bounds contain some sort of parameter which is free to choose and we must do a at least a rudimentary parameter search to arrive at a good bound. In addition to all these choices of parameters we can of course optimise these bounds even further. Some that we did not perform for this work are: Optimise the representation for smaller MMD, L2 regularisation towards the prior for each specific parameter set (also entails finding the optimal regularization strength) and perform even finer grid searches for the optimal bound parameters to name just a few.
\section{Importance weights and how to derive them}
\label{app:chest}
So assume that we do a mixing of two data sets (let's call them 0 and 1) to form two domains.
We want to derive the way we should calculate and subsequently use the importance weights for this situation. We will do this first for the CXR task. In this task the underlying label set is multi-label and as such we need to make it into categorical variables before calculating and applying weights. We achieve this by making the problem into a binary classification problem where we try to predict if there is a finding or not. From this point we may calculate the importance weights as follows:
$$
w=\frac{T(x,y)}{S(x,y)}=\frac{T(x|y)T(y)}{S(x|y)S(y)}=\frac{T(x|y)T(y)}{(S(x|y,D=1)S(D=1|y)+S(x|y,D=0)S(D=0|y))S(y)}
$$
If we now assume that we have only no examples from data set 0 in the target as in the CXR task then we have the following
$$
w=\frac{T(x|y)T(y)}{(S(x|y,D=1)S(D=1|y)+\underbrace{S(x|y,D=0)S(D=0|y)}_{=0,~ \text{as T(y)=0 when this is non-zero}})S(y)}=\frac{T(x|y)T(y)}{(S(x|y,D=1)S(D=1|y)S(y)}
$$
Now we note that $T(x|y)$ and $S(x|y,D=1)$ cancel as the conditional distribution of these should be the same as we mixed uniformly over the initial label and T(x|y,D=1)=T(x|y). We are thus left with
$$
w(y)=\frac{T(y)}{S(y)S(D=1|y)}=\frac{\#\text{examples with label y in T}}{\# T}/\frac{\#\text{examples with label y in S which come from data set 1}}{\# S}
$$
Through this argument we can see that the final importance weight is in the case where we use 20\% of the images from data set 1, which will become the target, to mix with data set 0 to become the source. Assume that the initial amount from data set 1 is $m_1$.
$$
w=\frac{\# S}{\# T}\cdot \frac{\#\text{examples w/ label y in T}}{\#\text{examples w/ label y in S which come from data set 1}} =\frac{\#S}{\# T}\frac{0.8m_1}{0.2m_1}=4\frac{\#S}{\# T}
$$
We can do the same type of argument for the MNIST/MNIST-M mix. There we have a more balanced data set where the classes are evenly distributed in amount across source and target.
%%%%% argument about the subsequent binarization f_bin(y)= 0/1

$$
w=\frac{T(x,y)}{S(x,y)}=\frac{T(x|y,D=1)T(D=1|y)+T(x|y,D=0)T(D=0|y))T(y)}{(S(x|y,D=1)S(D=1|y)+S(x|y,D=0)S(D=0|y))S(y)}
$$
Since the labels are balanced between the data sets $\frac{T(y)}{S(y)}=1$. Since we have mixed the datapoints for each label in a uniform fashion we know what $\frac{T(x|y,D=0)}{S(x|y,D=0)}=1$ for every label. As such we can calculate the weight as 
$$
w(x|y,D=0)=\frac{\#\text{examples w/ label y in T which come from data set 0}}{\#\text{examples w/ label y in S which come from data set 0}} 
$$
and similar for datapoints from the other data set.
\section{Additional results}
\subsection{Constituent parts of bounds}
\begin{figure*}[t!]
    \centering
      \begin{subfigure}{0.24\textwidth}
    \centering
        \includegraphics[width=.92\textwidth]{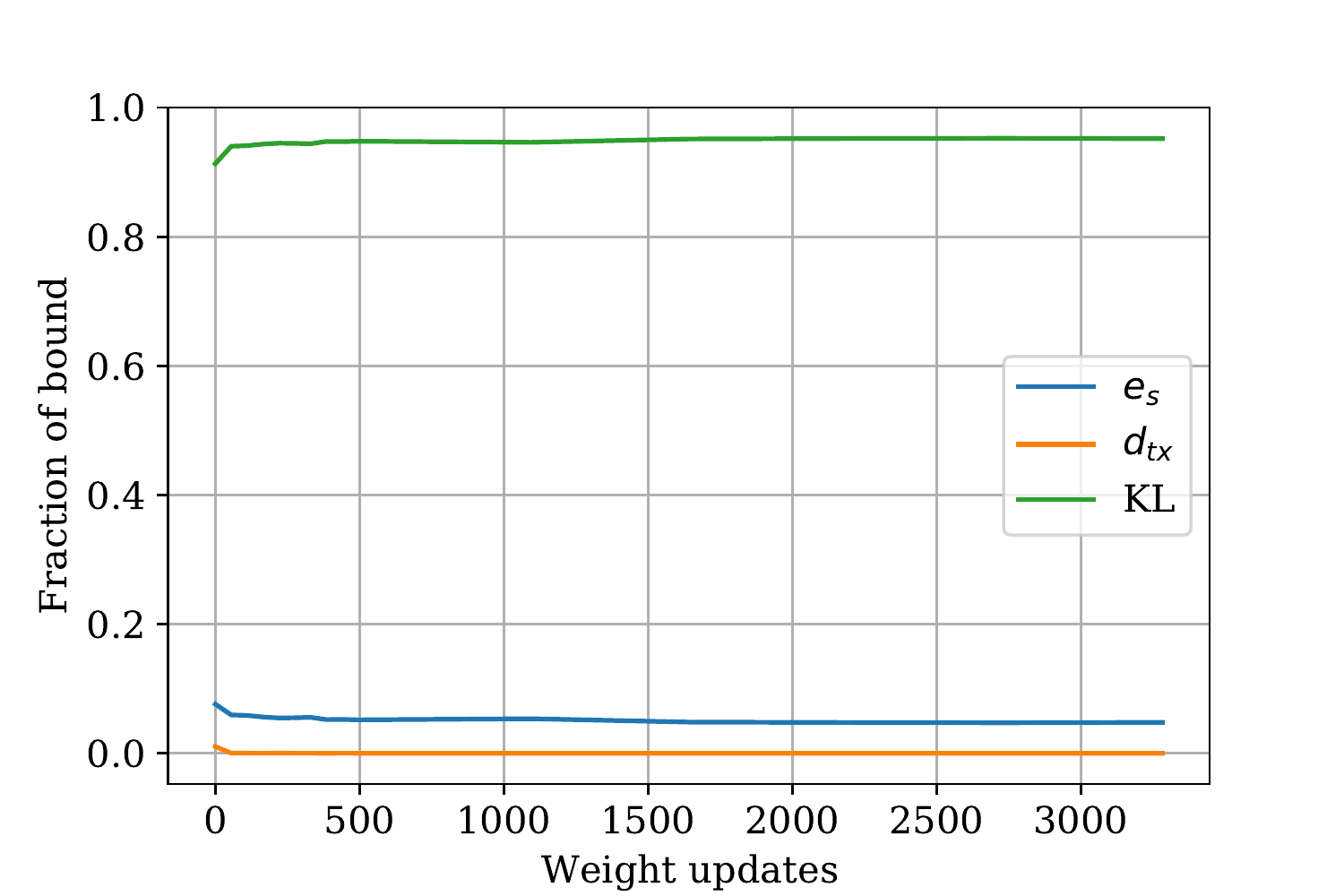}
        \caption{Multiplicative bound}
        \label{fig:2fcbetaboundparts}
    \end{subfigure}%
        \begin{subfigure}{0.24\textwidth}
    \centering
        \includegraphics[width=.92\textwidth]{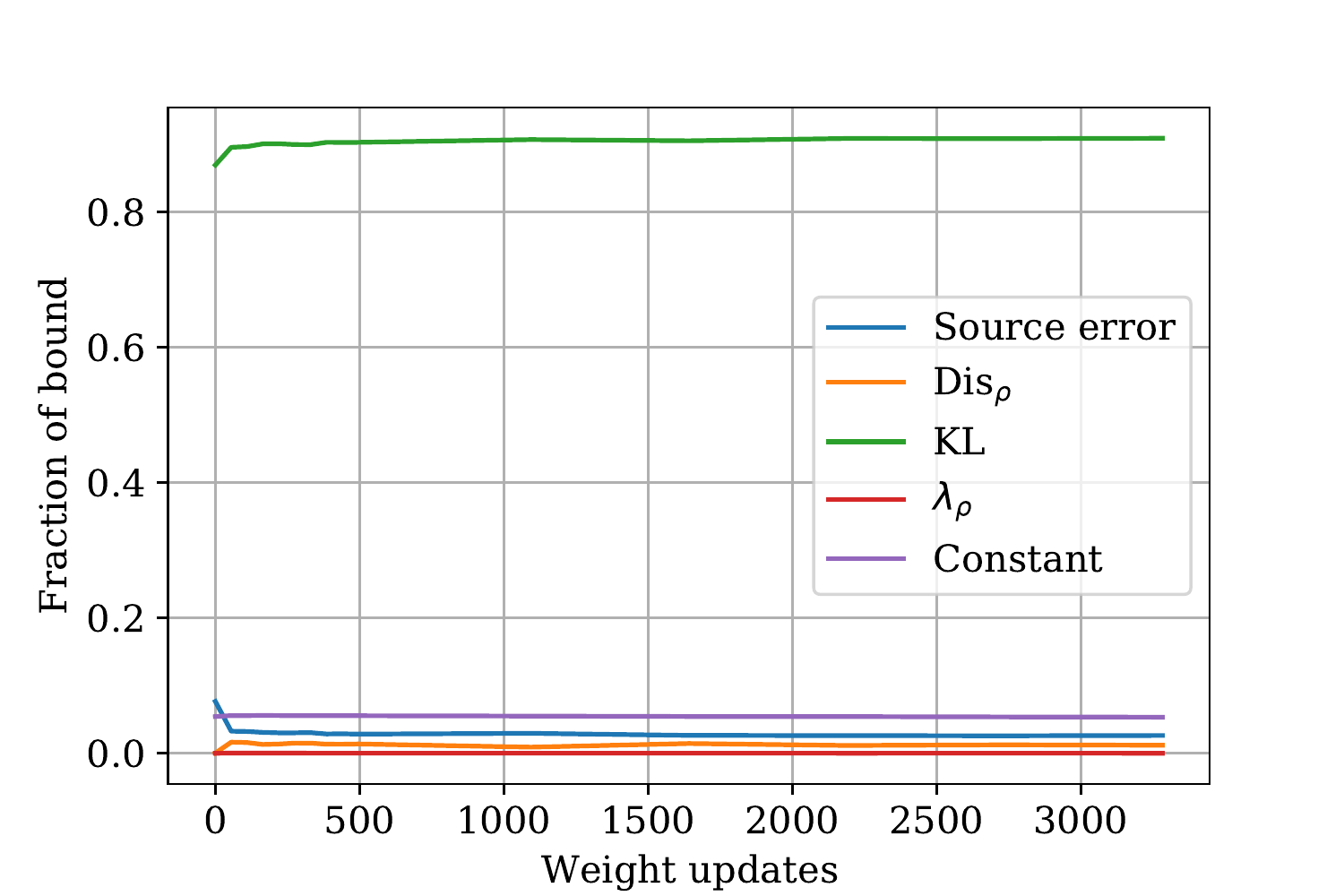}
        \caption{Additive bound}
        \label{fig:2fcdisboundparts}
    \end{subfigure}%
         \begin{subfigure}{0.24\textwidth}
    \centering
        \includegraphics[width=.92\textwidth]{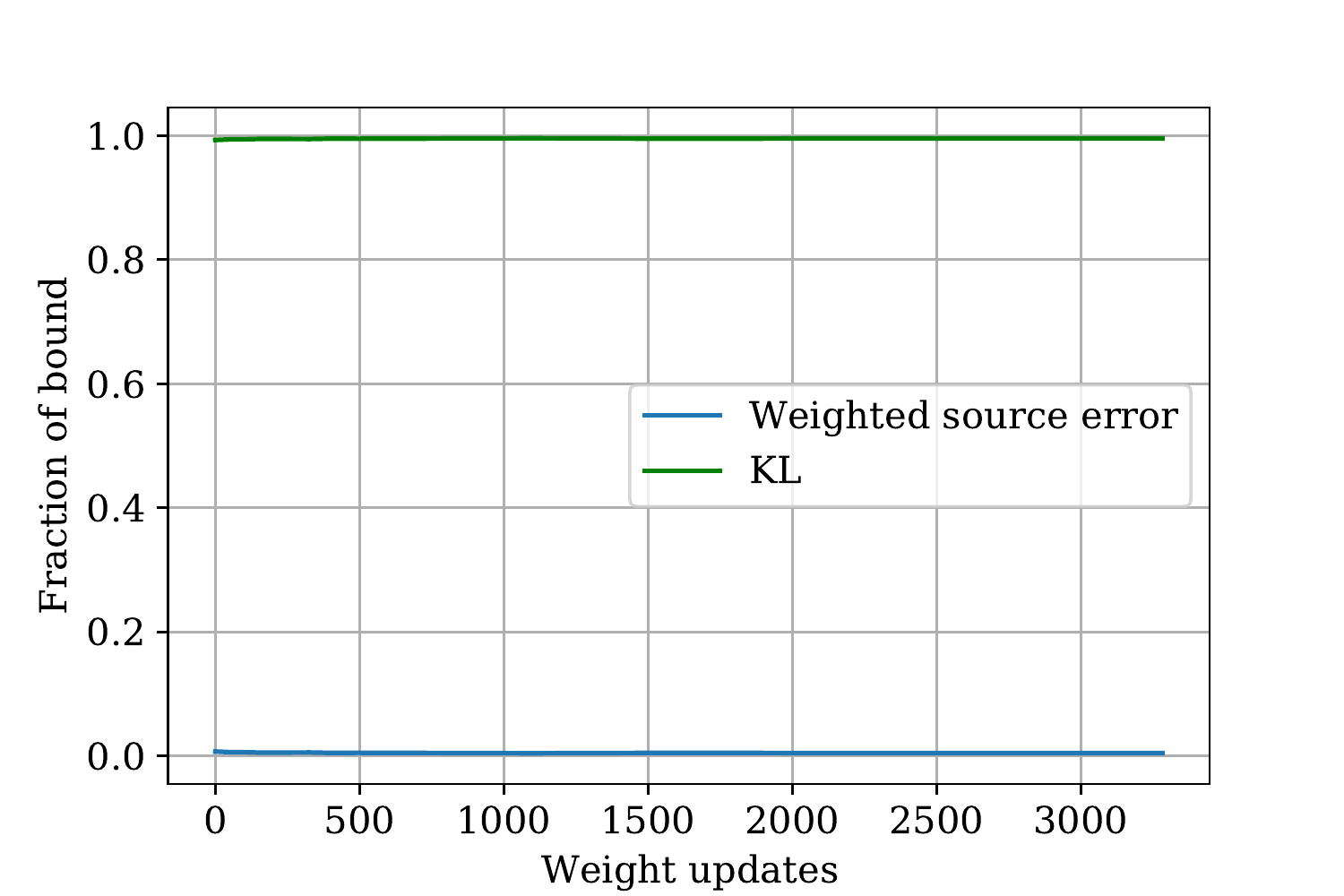}
        \caption{IW bound}
        \label{fig:2fciwboundparts}
    \end{subfigure}%
        \begin{subfigure}{0.24\textwidth}
    \centering
        \includegraphics[width=.92\textwidth]{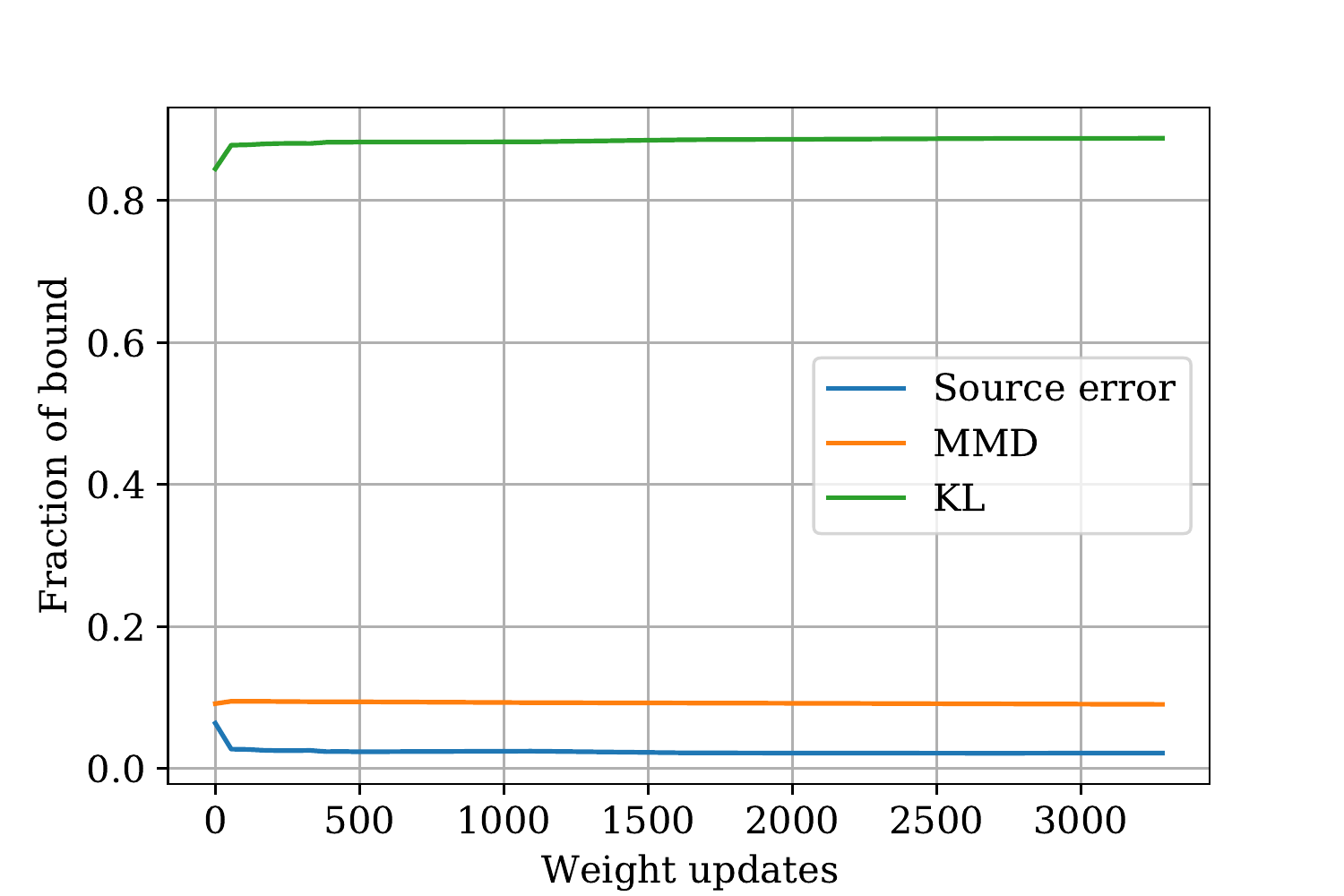}
        \caption{MMD bound}
        \label{fig:2fcmmdboundparts}
    \end{subfigure}%
    \caption{An illustration of constituent parts of each of the four bounds with the fully connected architecture on the MNIST mixture task. $\alpha=0$}
    \label{fig:2boundparts0}
\end{figure*}

\begin{figure*}[t!]
     \centering
      \begin{subfigure}{0.24\textwidth}
    \centering
        \includegraphics[width=.92\textwidth]{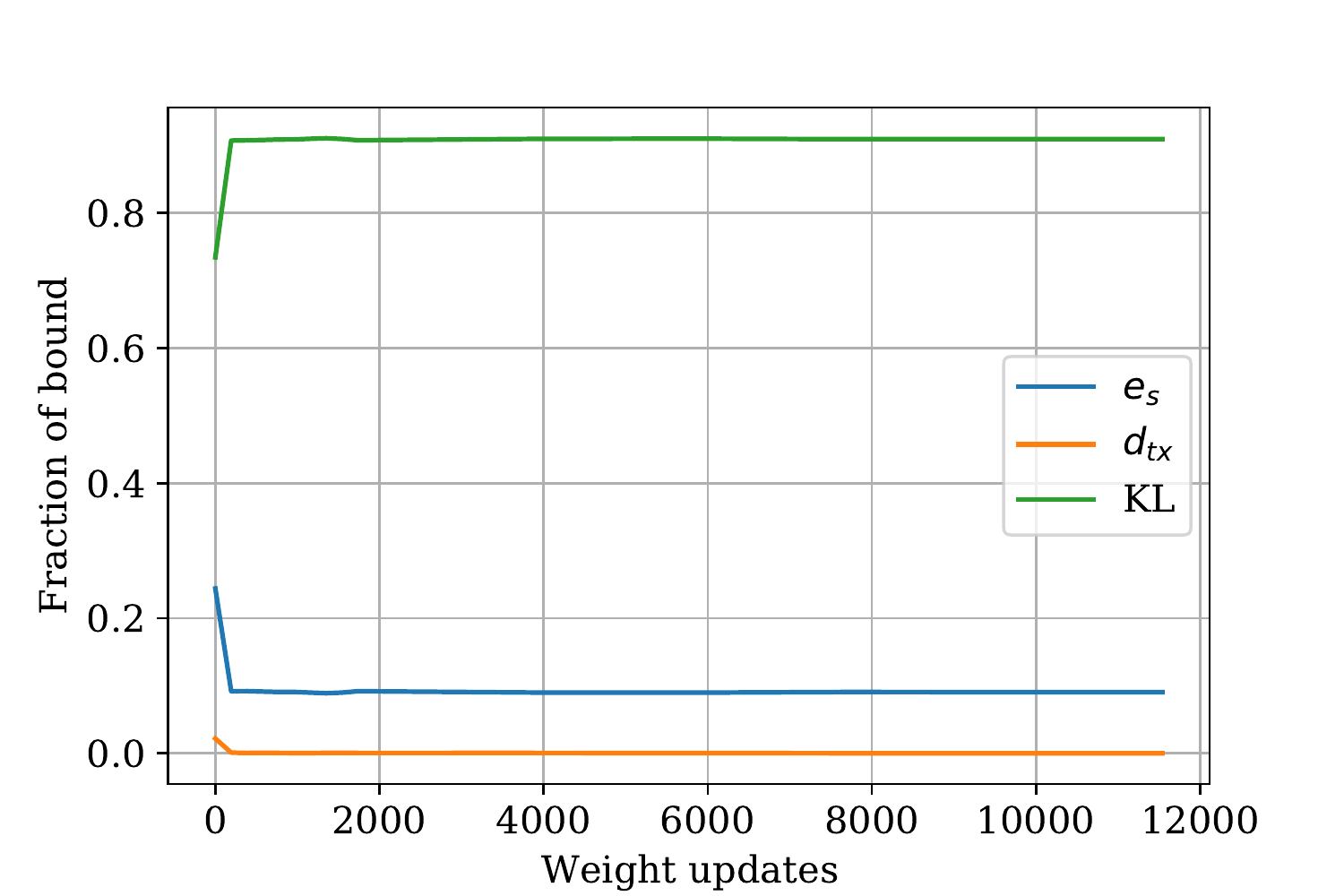}
        \caption{Multiplicative bound}
        \label{fig:6fcbetaboundparts0}
    \end{subfigure}%
        \begin{subfigure}{0.24\textwidth}
    \centering
        \includegraphics[width=.92\textwidth]{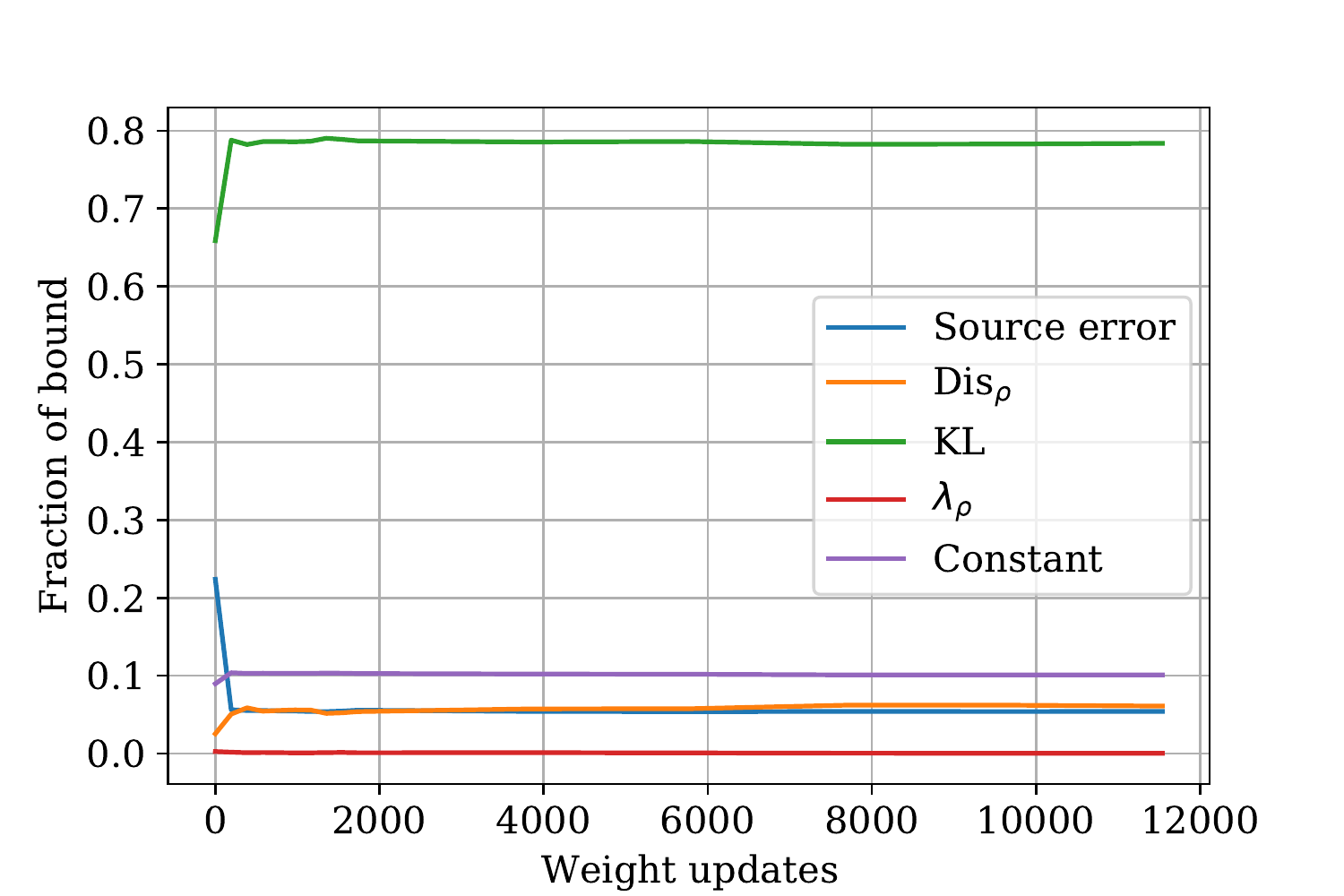}
        \caption{Additive bound}
        \label{fig:6fcdisboundparts0}
    \end{subfigure}%
         \begin{subfigure}{0.24\textwidth}
    \centering
        \includegraphics[width=.92\textwidth]{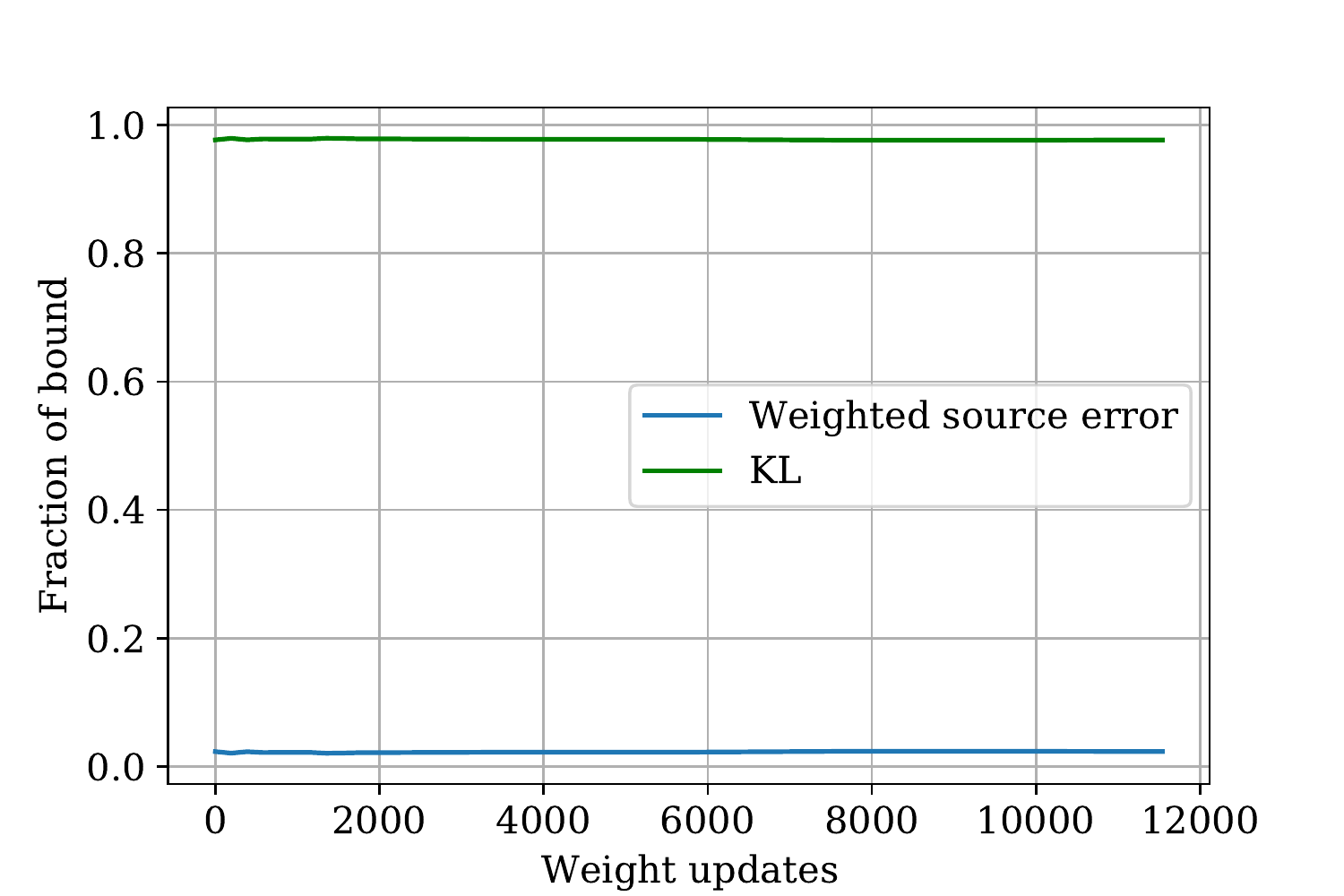}
        \caption{IW bound}
        \label{fig:6fciwboundparts0}
    \end{subfigure}%
        \begin{subfigure}{0.24\textwidth}
    \centering
        \includegraphics[width=.92\textwidth]{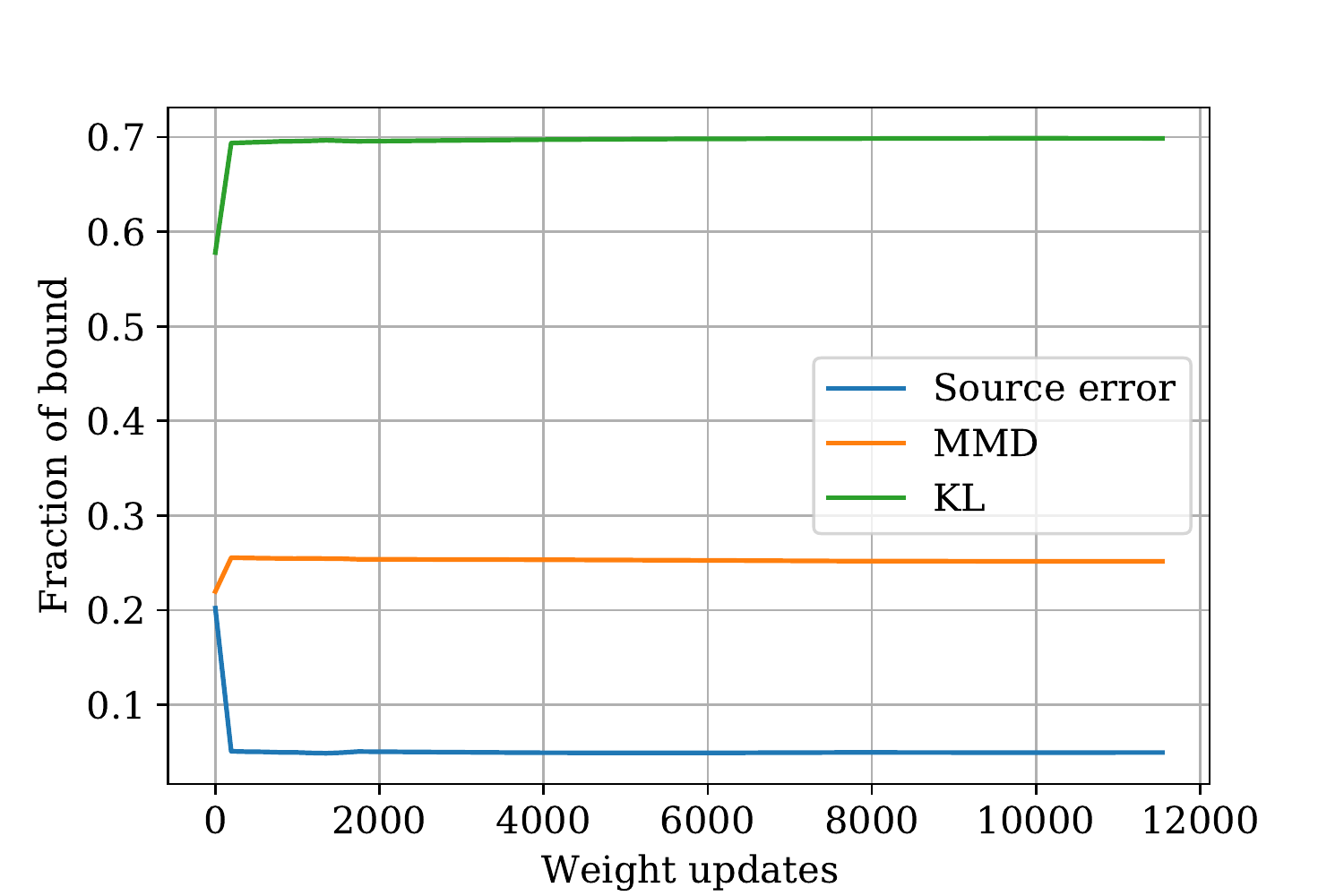}
        \caption{MMD bound}
        \label{fig:6fcmmdboundparts0}
    \end{subfigure}%
    \caption{An illustration of constituent parts of each of the four bounds with the fully connected architecture on the X-ray task. $\alpha=0$}
    \label{fig:6boundparts0}
\end{figure*}

\begin{figure*}[t!]
     \centering
      \begin{subfigure}{0.24\textwidth}
    \centering
        \includegraphics[width=.92\textwidth]{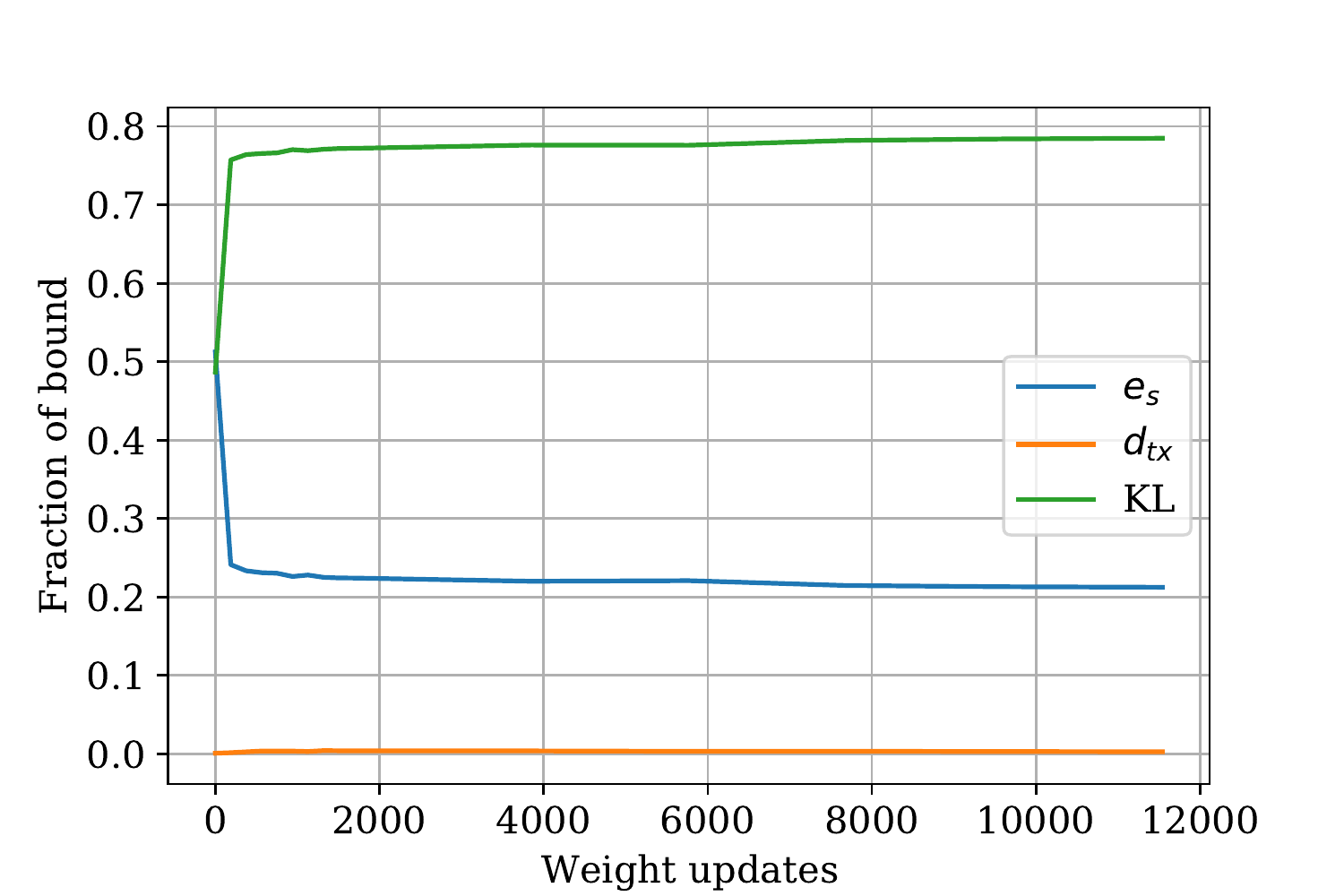}
        \caption{Multiplicative bound}
        \label{fig:6lenetbetaboundparts0}
    \end{subfigure}%
        \begin{subfigure}{0.24\textwidth}
    \centering
        \includegraphics[width=.92\textwidth]{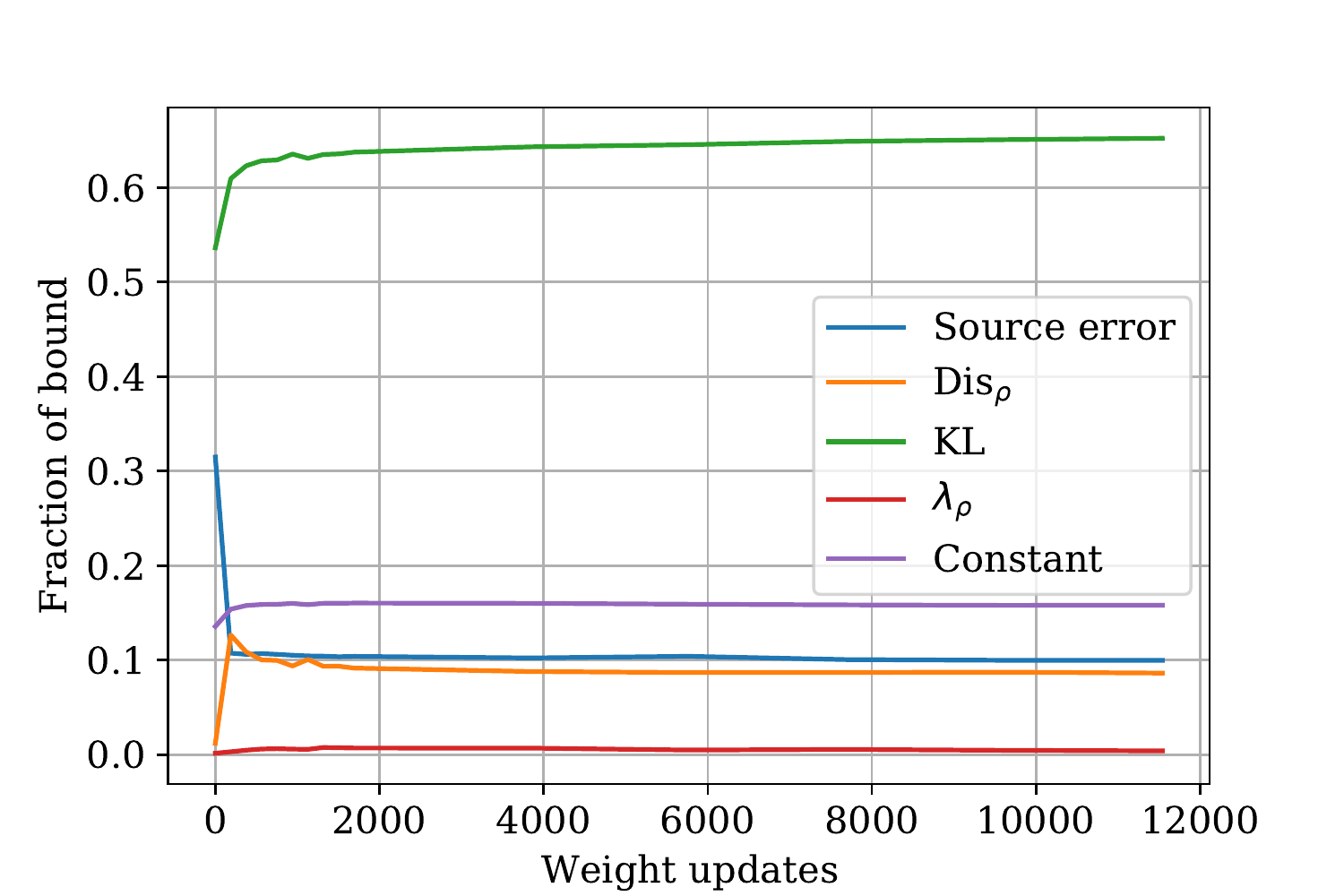}
        \caption{Additive bound}
        \label{fig:6lenetdisboundparts0}
    \end{subfigure}%
         \begin{subfigure}{0.24\textwidth}
    \centering
        \includegraphics[width=.92\textwidth]{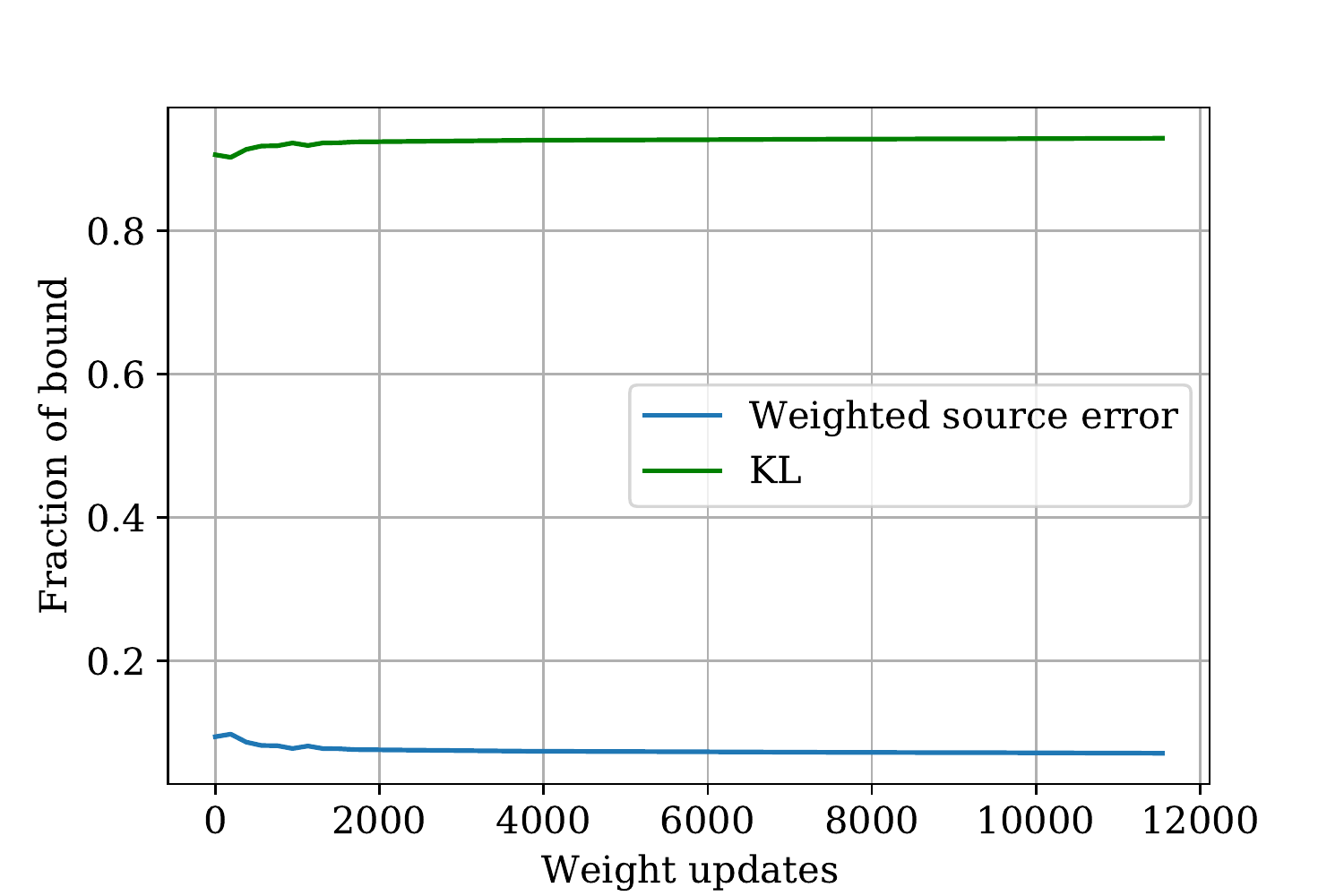}
        \caption{IW bound}
        \label{fig:6lenetiwboundparts0}
    \end{subfigure}%
        \begin{subfigure}{0.24\textwidth}
    \centering
        \includegraphics[width=.92\textwidth]{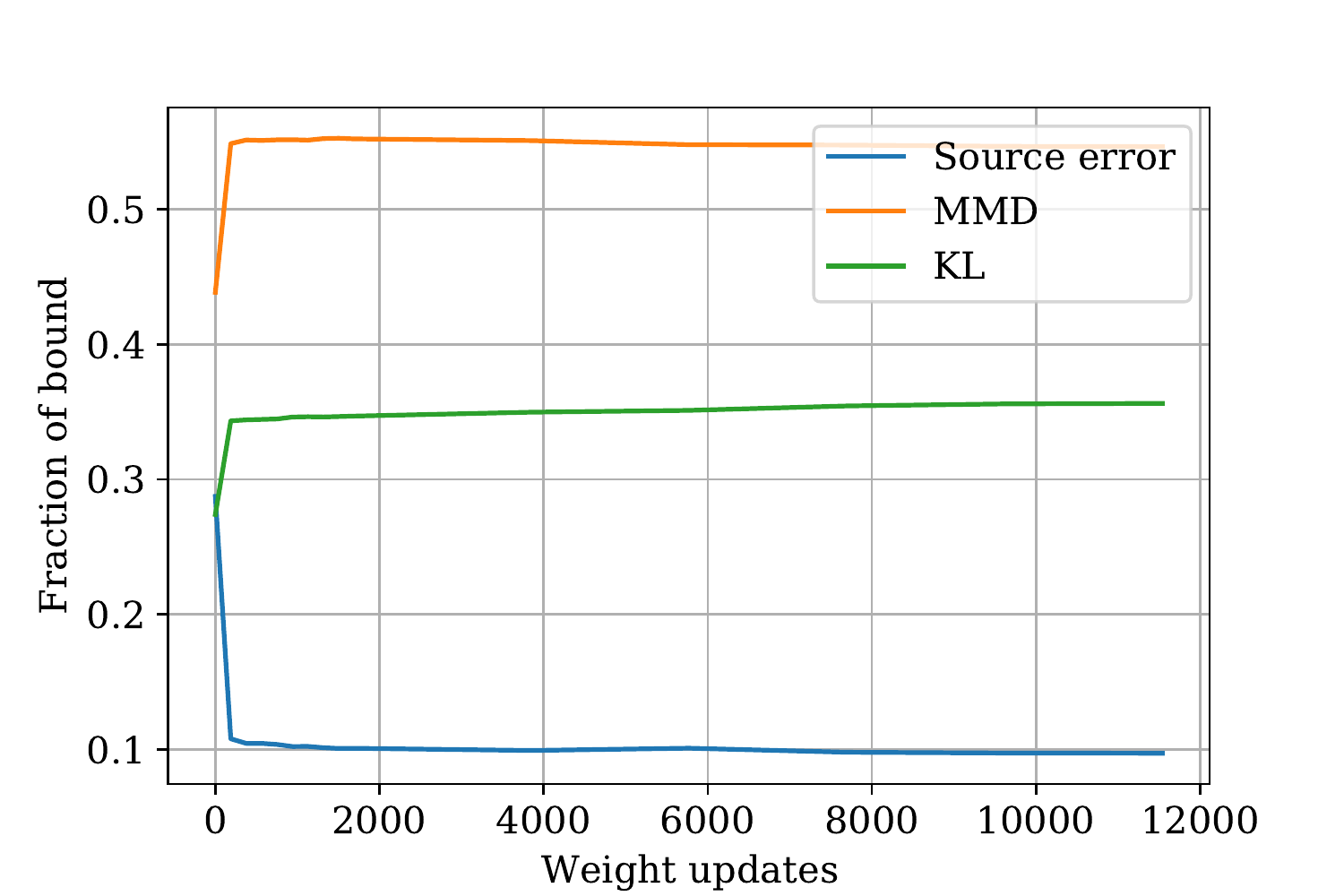}
        \caption{MMD bound}
        \label{fig:6lenetmmdboundparts0}
    \end{subfigure}%
    \caption{An illustration of constituent parts of each of the four bounds with the LeNet-5 architecture on the X-ray task. $\alpha=0$}
    \label{fig:6lenetboundparts0}
\end{figure*}
\begin{figure*}[t!]
     \centering
      \begin{subfigure}{0.24\textwidth}
    \centering
        \includegraphics[width=.92\textwidth]{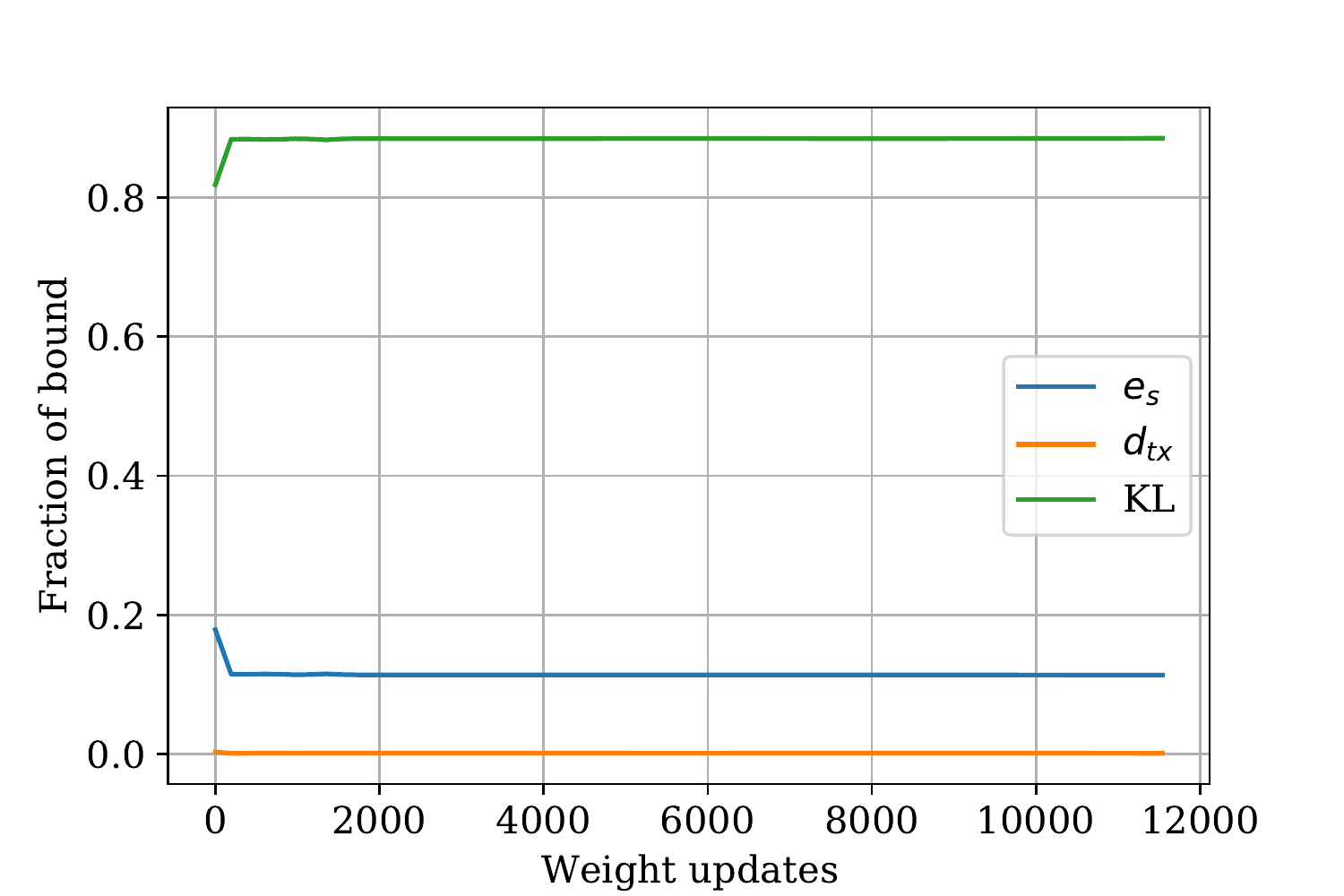}
        \caption{Multiplicative bound}
        \label{fig:6resnetbetaboundparts0}
    \end{subfigure}%
        \begin{subfigure}{0.24\textwidth}
    \centering
        \includegraphics[width=.92\textwidth]{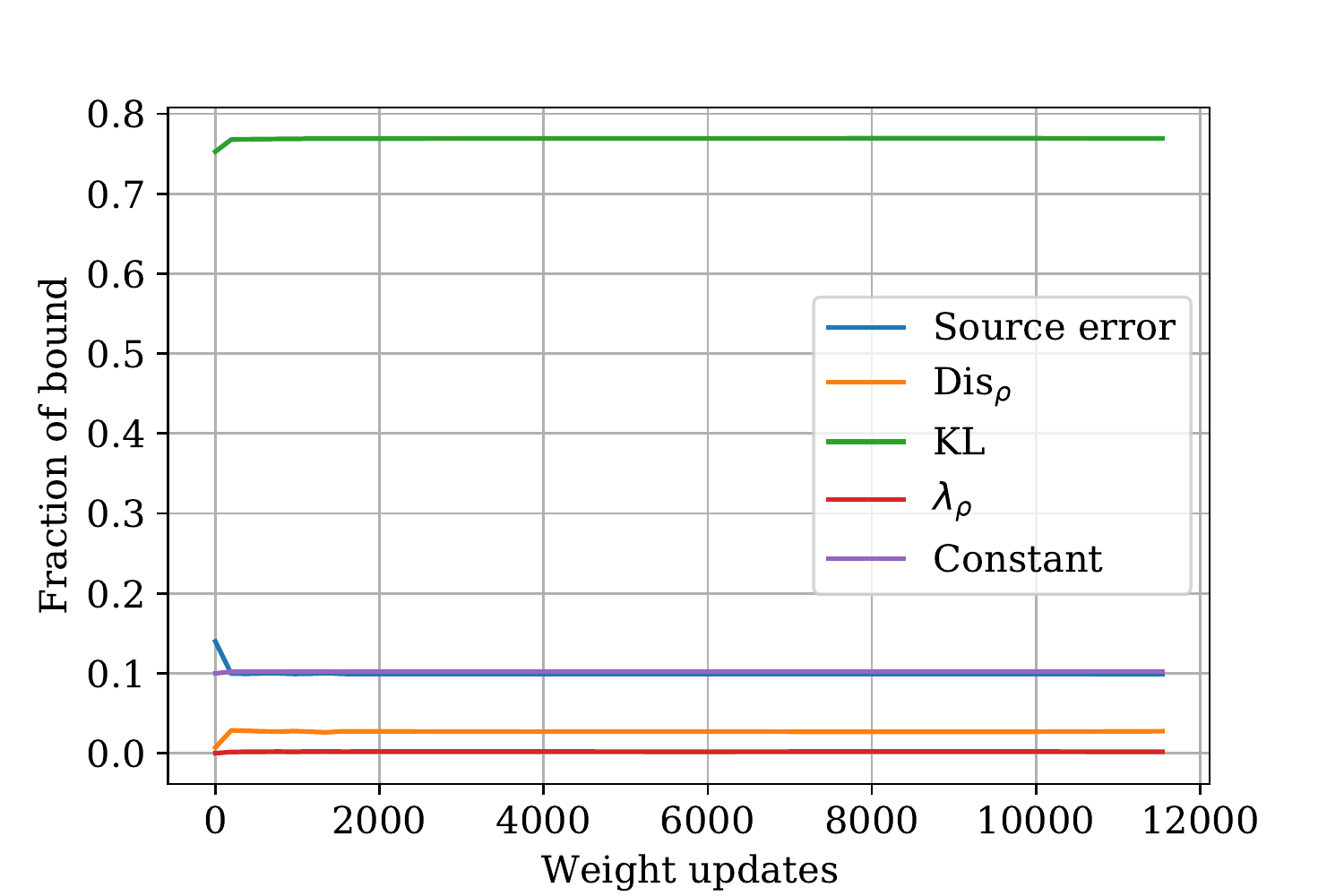}
        \caption{Additive bound}
        \label{fig:6resnetdisboundparts0}
    \end{subfigure}%
         \begin{subfigure}{0.24\textwidth}
    \centering
        \includegraphics[width=.92\textwidth]{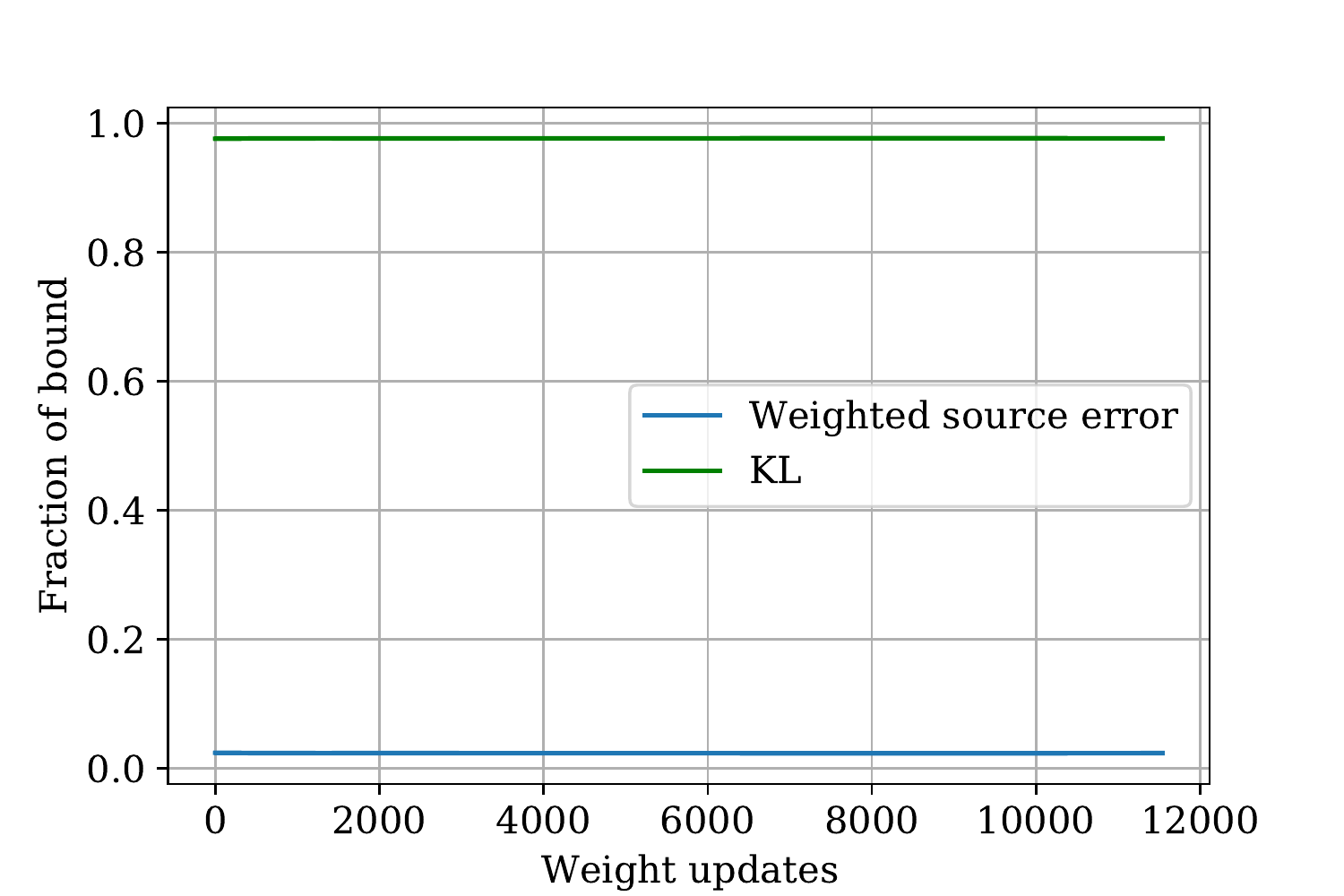}
        \caption{IW bound}
        \label{fig:6resnetiwboundparts0}
    \end{subfigure}%
        \begin{subfigure}{0.24\textwidth}
    \centering
        \includegraphics[width=.92\textwidth]{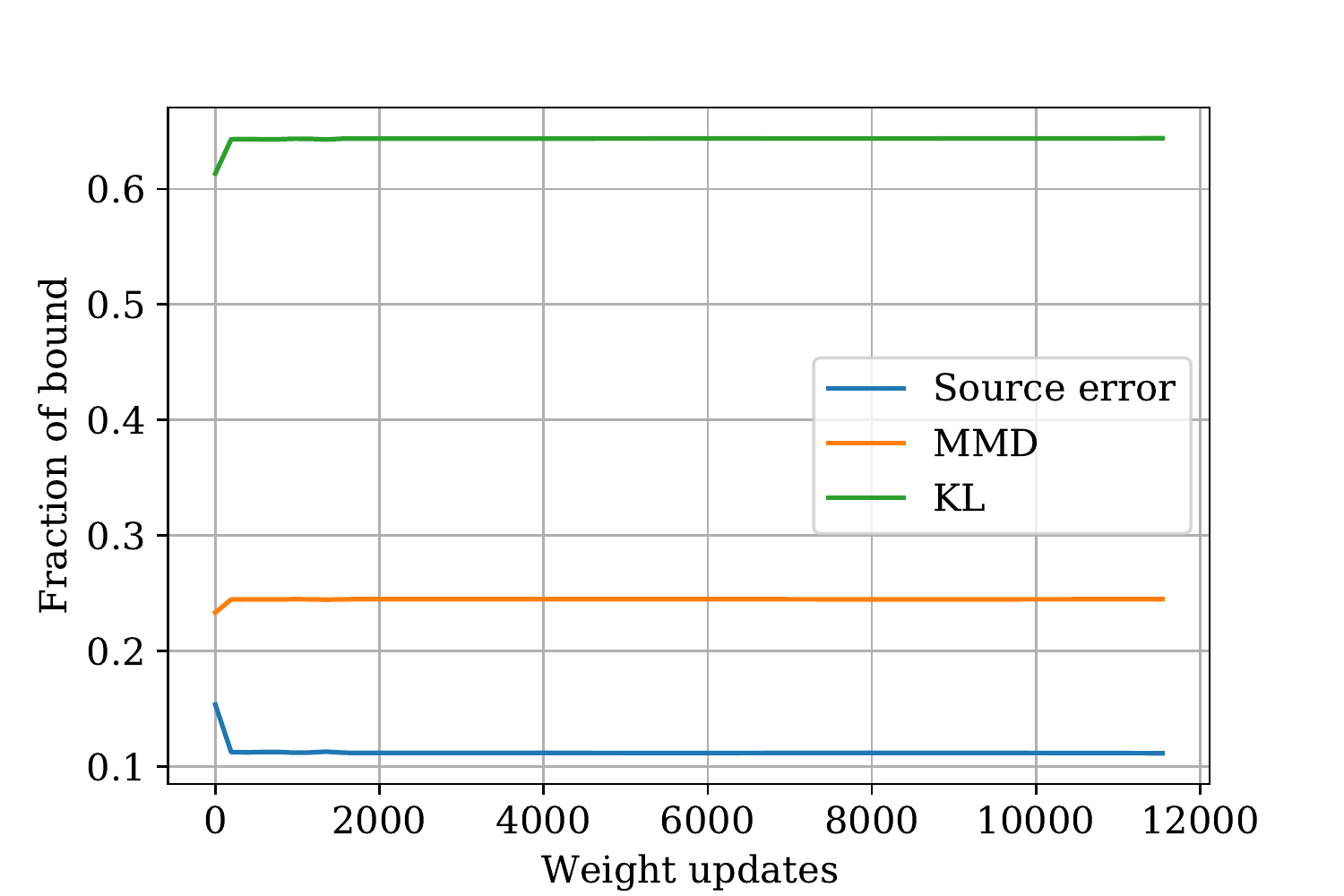}
        \caption{MMD bound}
        \label{fig:6resnetmmdboundparts0}
    \end{subfigure}%
    \caption{An illustration of constituent parts of each of the four bounds with the ResNet50 architecture on the X-ray task. $\alpha=0$}
    \label{fig:6resnetboundparts0}
\end{figure*}
%%%%%%%%%%%%%%%%%%%%%%%%%%%%%%%%%%%%%%%%%%%%%%%%%%%%%%%%%
%%%%%%%%%%%% Alpha = 0.3
%%%%%%%%%%%%%%%%%%%%%%%%%%%%%%%%%%%%%%%%%%%%%%%%%%%%%%%%%
 \begin{figure*}[t!]
    \centering
      \begin{subfigure}{0.46\textwidth}
    \centering
        \includegraphics[width=.92\textwidth]{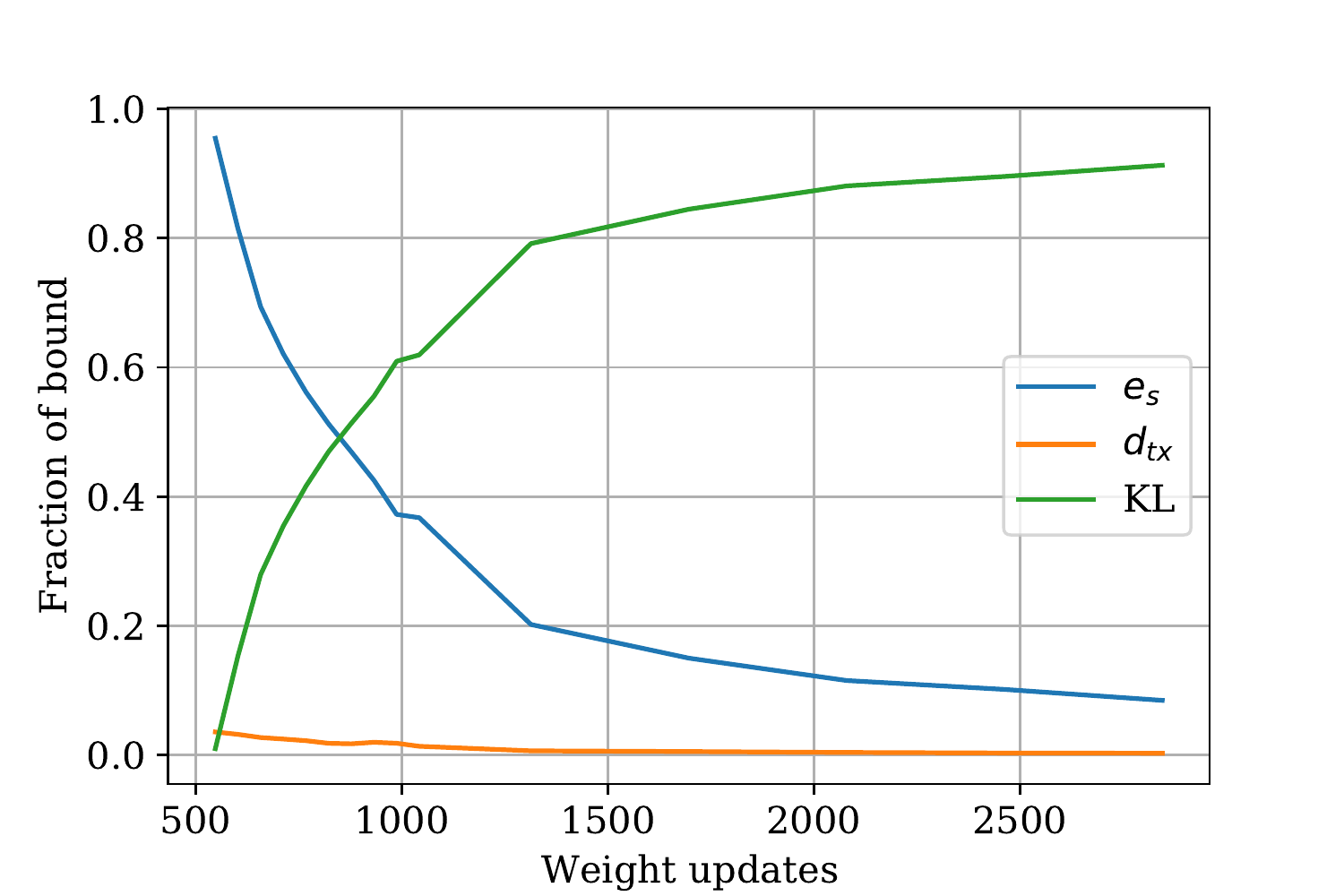}
        \caption{Multiplicative bound, $\sigma=0.03$}
        \label{fig:2fcbetaboundparts03}
    \end{subfigure}%
         \begin{subfigure}{0.46\textwidth}
    \centering
        \includegraphics[width=.92\textwidth]{Images/2_fc_iw_portion_0.3.pdf}
        \caption{IW bound, $\sigma=0.03$}
        \label{fig:2fciwboundparts03}
    \end{subfigure}%
    \caption{An illustration of constituent parts of each of the four bounds with the fully connected architecture on the MNIST mixture task. $\alpha=0.3$}
    \label{fig:2boundparts03}
\end{figure*}

\begin{figure*}[t!]
     \centering
      \begin{subfigure}{0.24\textwidth}
    \centering
        \includegraphics[width=.92\textwidth]{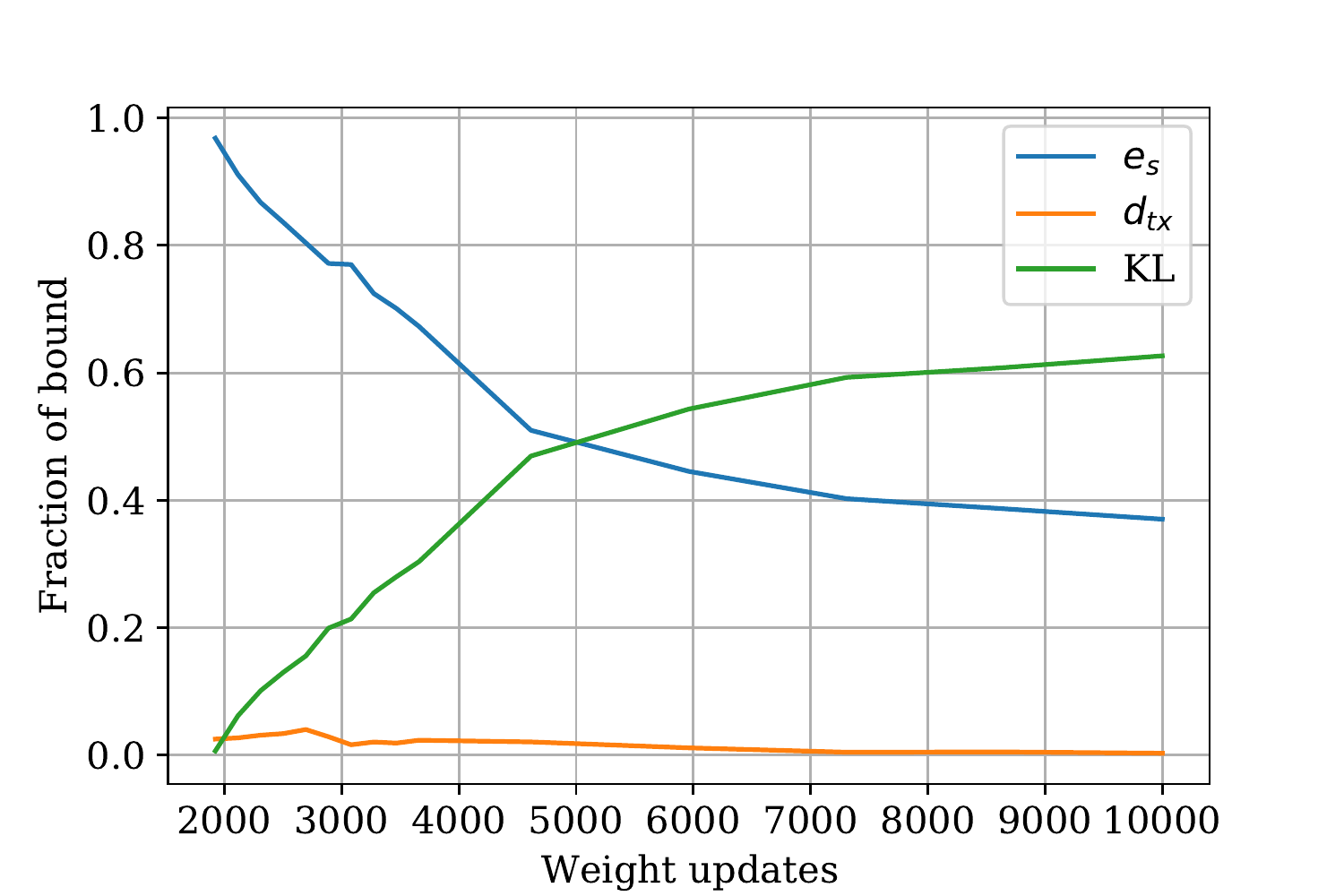}
        \caption{Multiplicative bound}
        \label{fig:6fcbetaboundparts03}
    \end{subfigure}%
        \begin{subfigure}{0.24\textwidth}
    \centering
        \includegraphics[width=.92\textwidth]{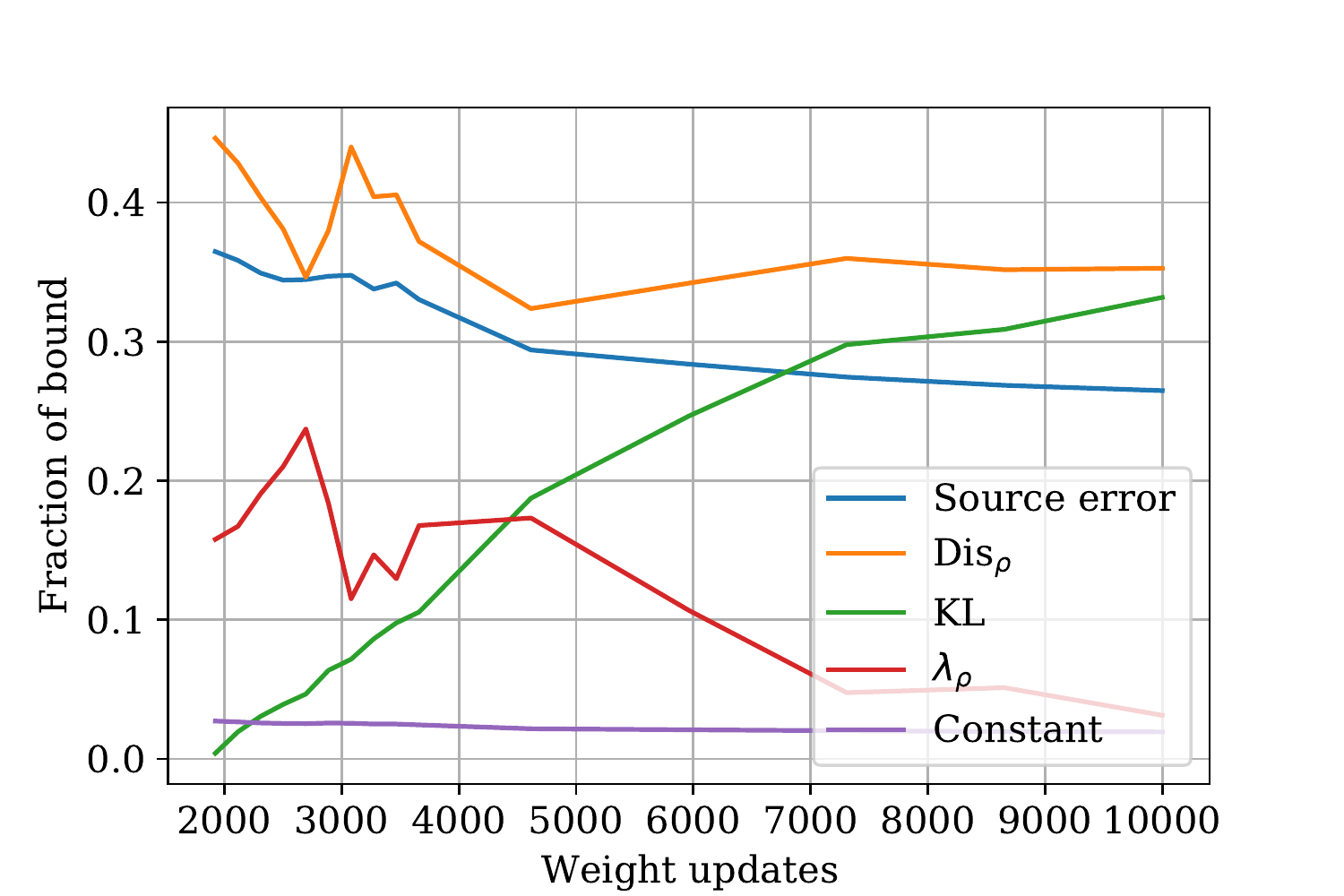}
        \caption{Additive bound}
        \label{fig:6fcdisboundparts03}
    \end{subfigure}%
         \begin{subfigure}{0.24\textwidth}
    \centering
        \includegraphics[width=.92\textwidth]{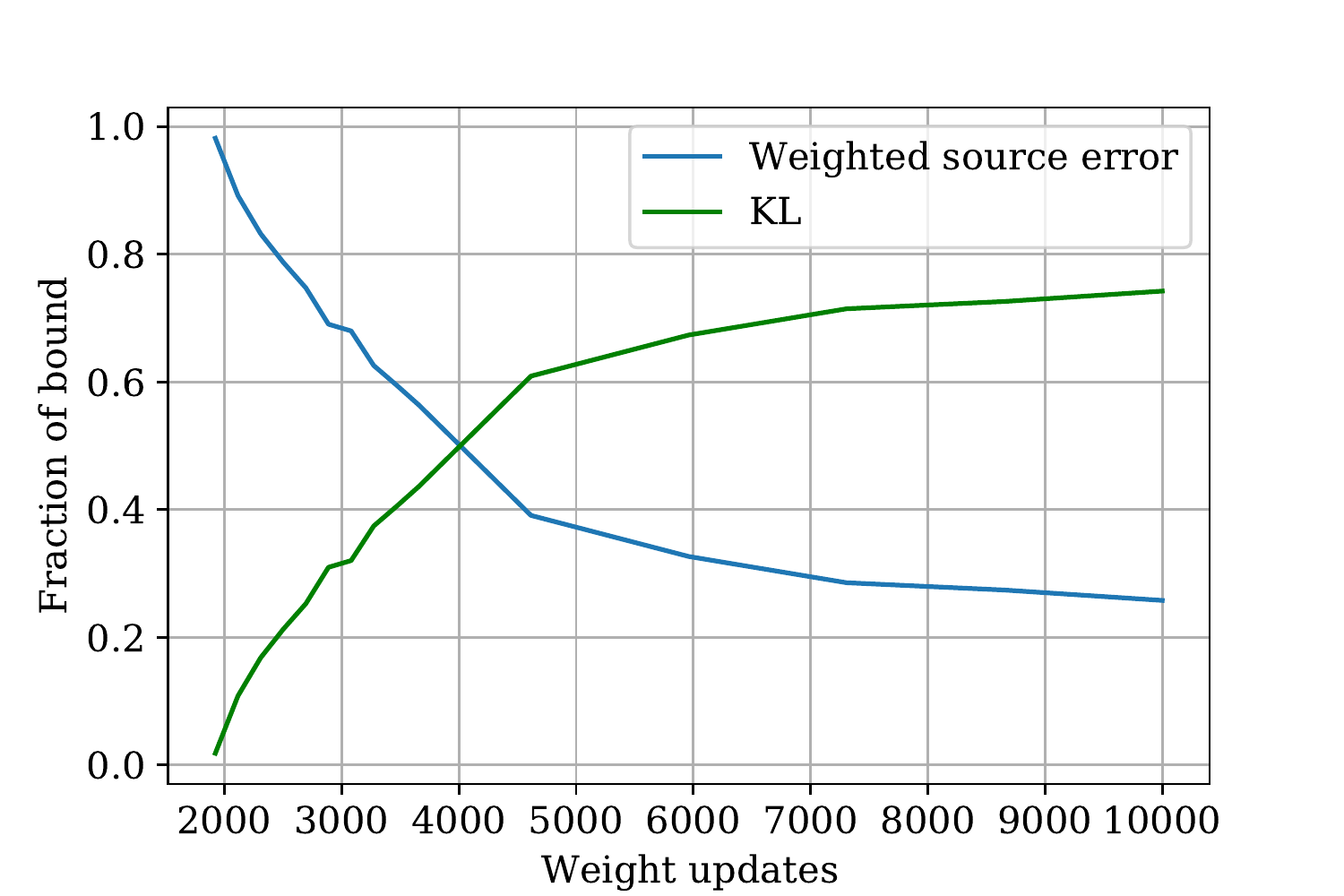}
        \caption{IW bound}
        \label{fig:6fciwboundparts03}
    \end{subfigure}%
        \begin{subfigure}{0.24\textwidth}
    \centering
        \includegraphics[width=.92\textwidth]{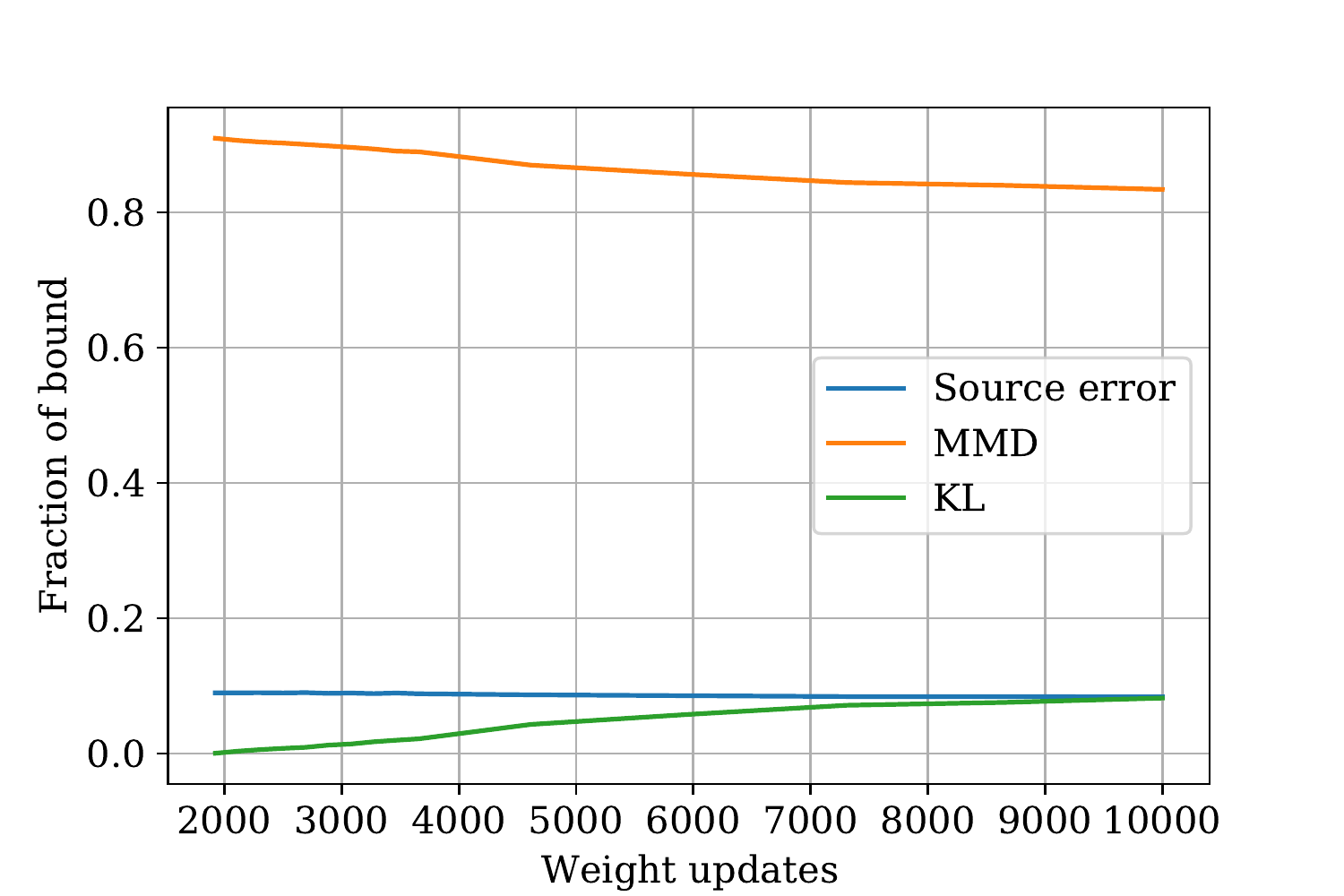}
        \caption{MMD bound}
        \label{fig:6fcmmdboundparts03}
    \end{subfigure}%
    \caption{An illustration of constituent parts of each of the four bounds with the fully connected architecture on the X-ray task. $\alpha=0.3$, $\sigma=0.03$}
    \label{fig:6boundparts03}
\end{figure*}

\begin{figure*}[t!]
     \centering
      \begin{subfigure}{0.24\textwidth}
    \centering
        \includegraphics[width=.92\textwidth]{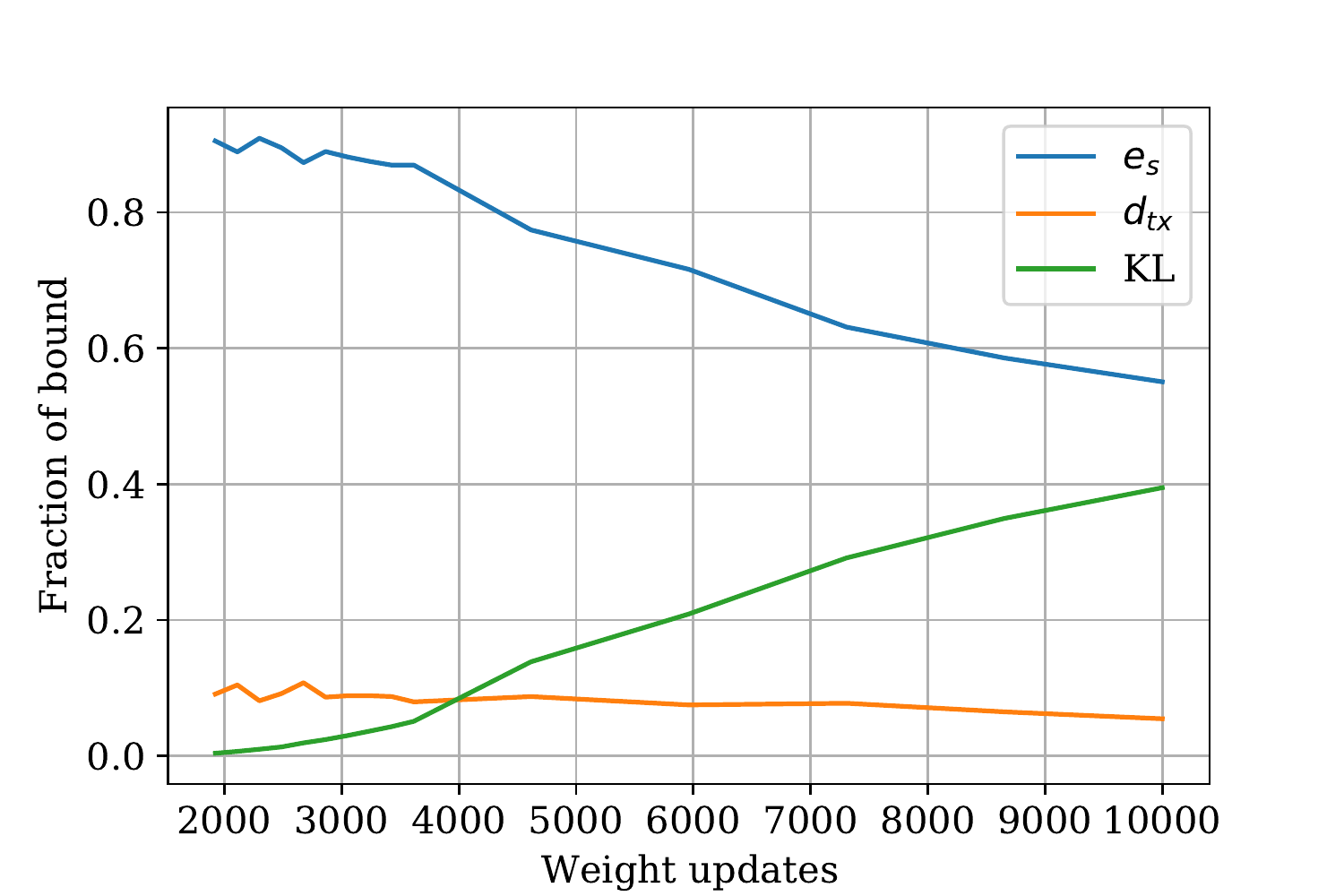}
        \caption{Multiplicative bound}
        \label{fig:6lenetbetaboundparts03}
    \end{subfigure}%
        \begin{subfigure}{0.24\textwidth}
    \centering
        \includegraphics[width=.92\textwidth]{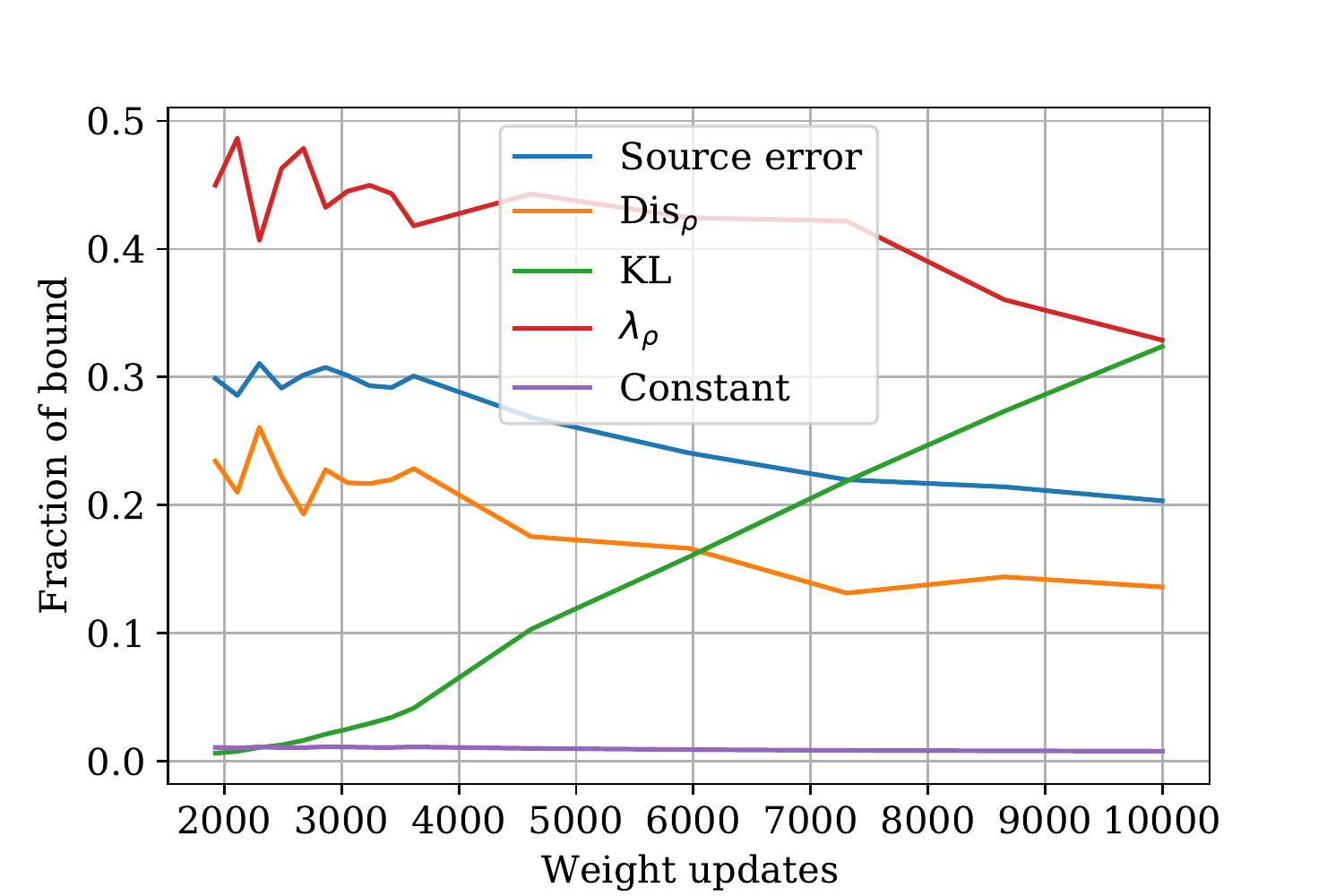}
        \caption{Additive bound}
        \label{fig:6lenetdisboundparts03}
    \end{subfigure}%
         \begin{subfigure}{0.24\textwidth}
    \centering
        \includegraphics[width=.92\textwidth]{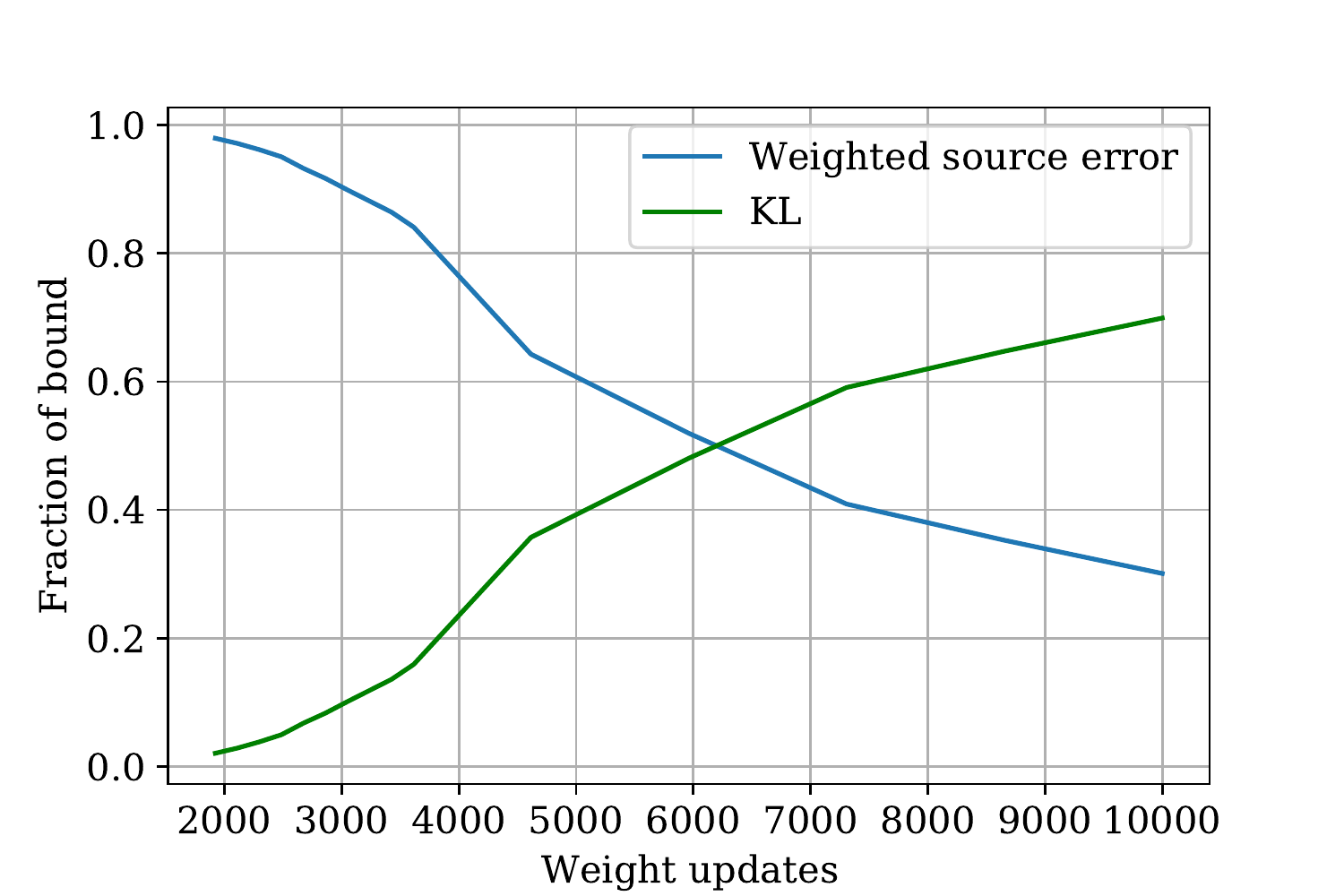}
        \caption{IW bound}
        \label{fig:6lenetiwboundparts03}
    \end{subfigure}%
        \begin{subfigure}{0.24\textwidth}
    \centering
        \includegraphics[width=.92\textwidth]{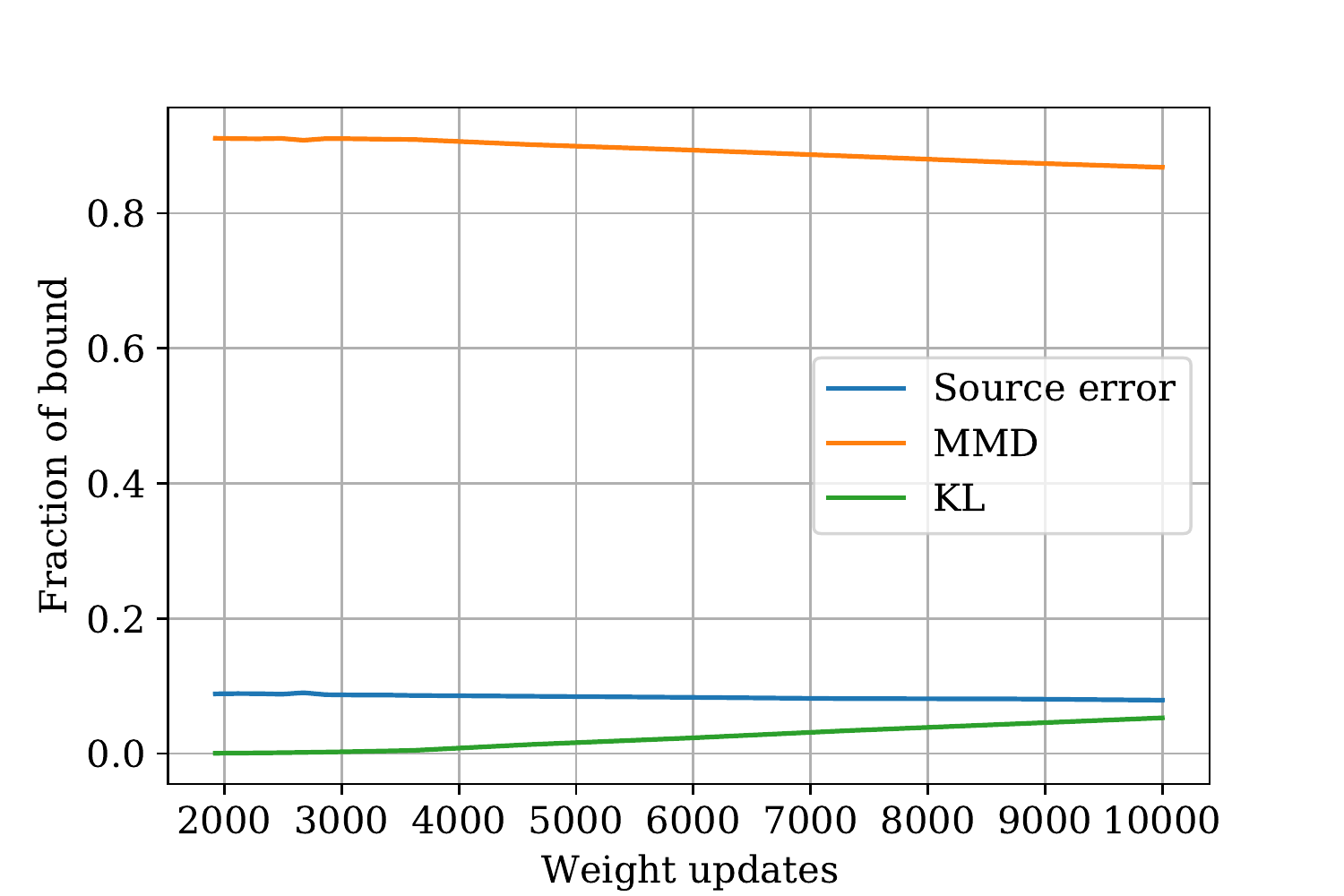}
        \caption{MMD bound}
        \label{fig:6lenetmmdboundparts03}
    \end{subfigure}%
    \caption{An illustration of constituent parts of each of the four bounds with the LeNet-5 architecture on the X-ray task. $\alpha=0.3$, $\sigma=0.03$}
    \label{fig:6lenetboundparts03}
\end{figure*}
\begin{figure*}[t!]
     \centering
      \begin{subfigure}{0.24\textwidth}
    \centering
        \includegraphics[width=.92\textwidth]{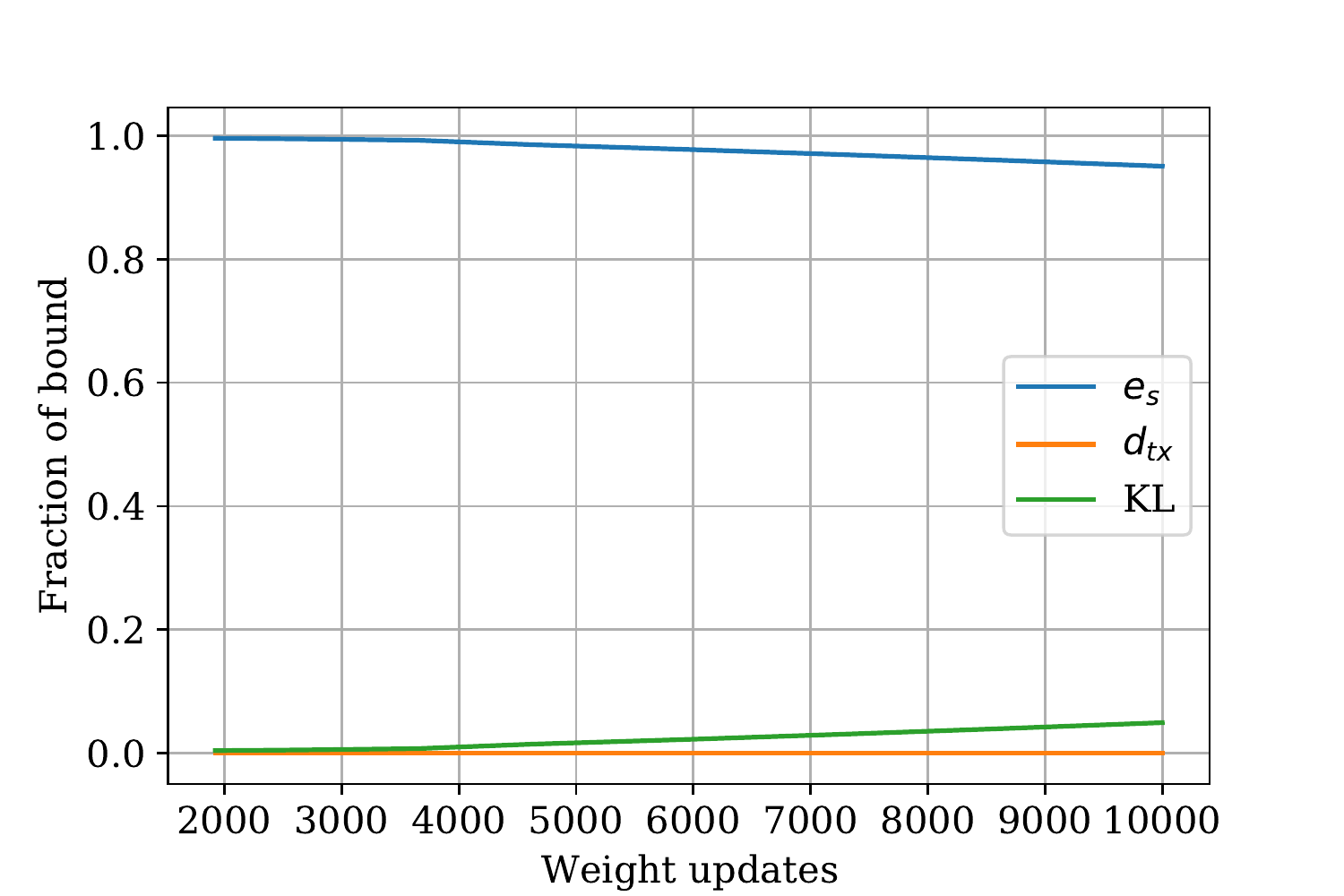}
        \caption{Multiplicative bound}
        \label{fig:6resnetbetaboundparts03}
    \end{subfigure}%
        \begin{subfigure}{0.24\textwidth}
    \centering
        \includegraphics[width=.92\textwidth]{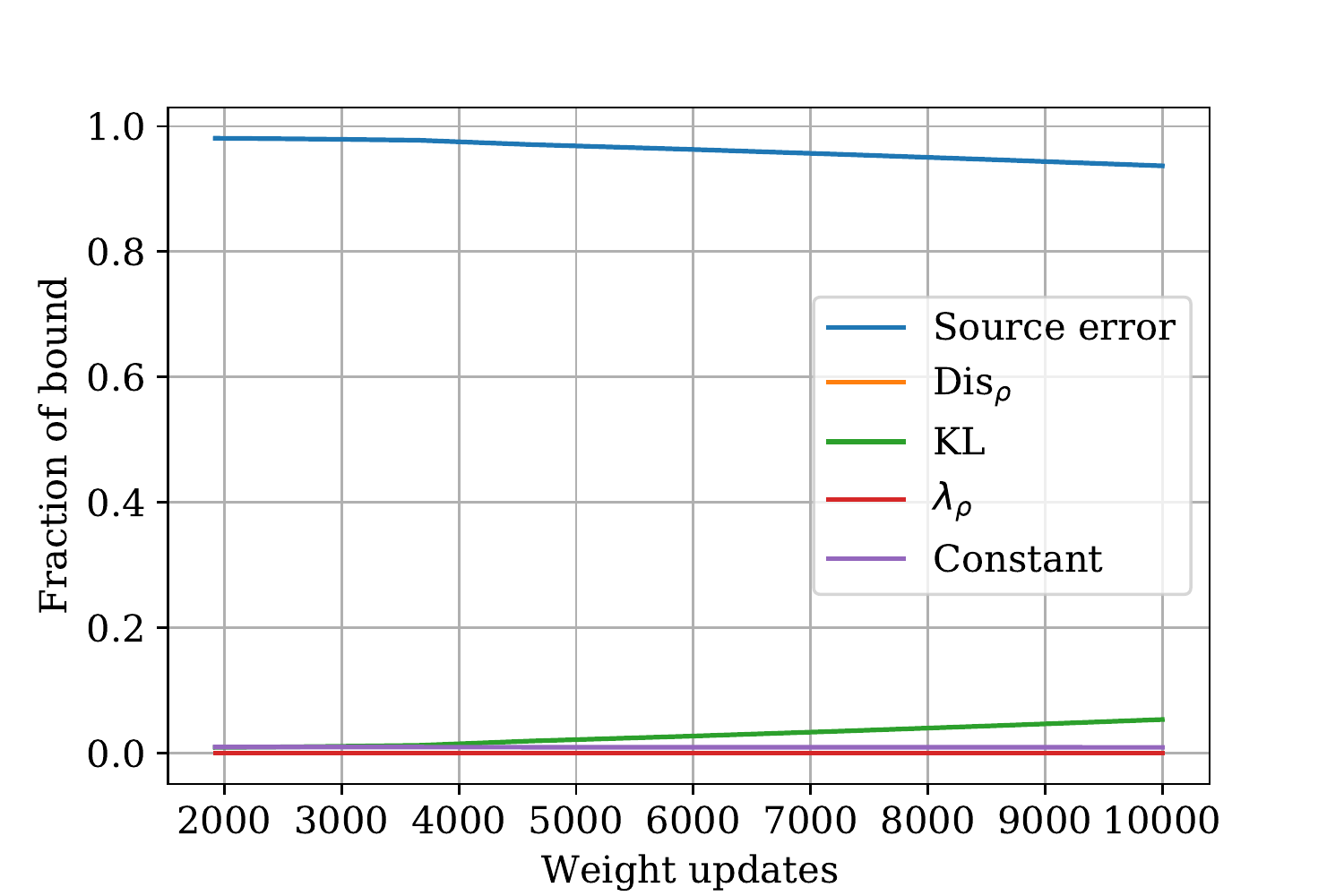}
        \caption{Additive bound}
        \label{fig:6resnetdisboundparts03}
    \end{subfigure}%
         \begin{subfigure}{0.24\textwidth}
    \centering
        \includegraphics[width=.92\textwidth]{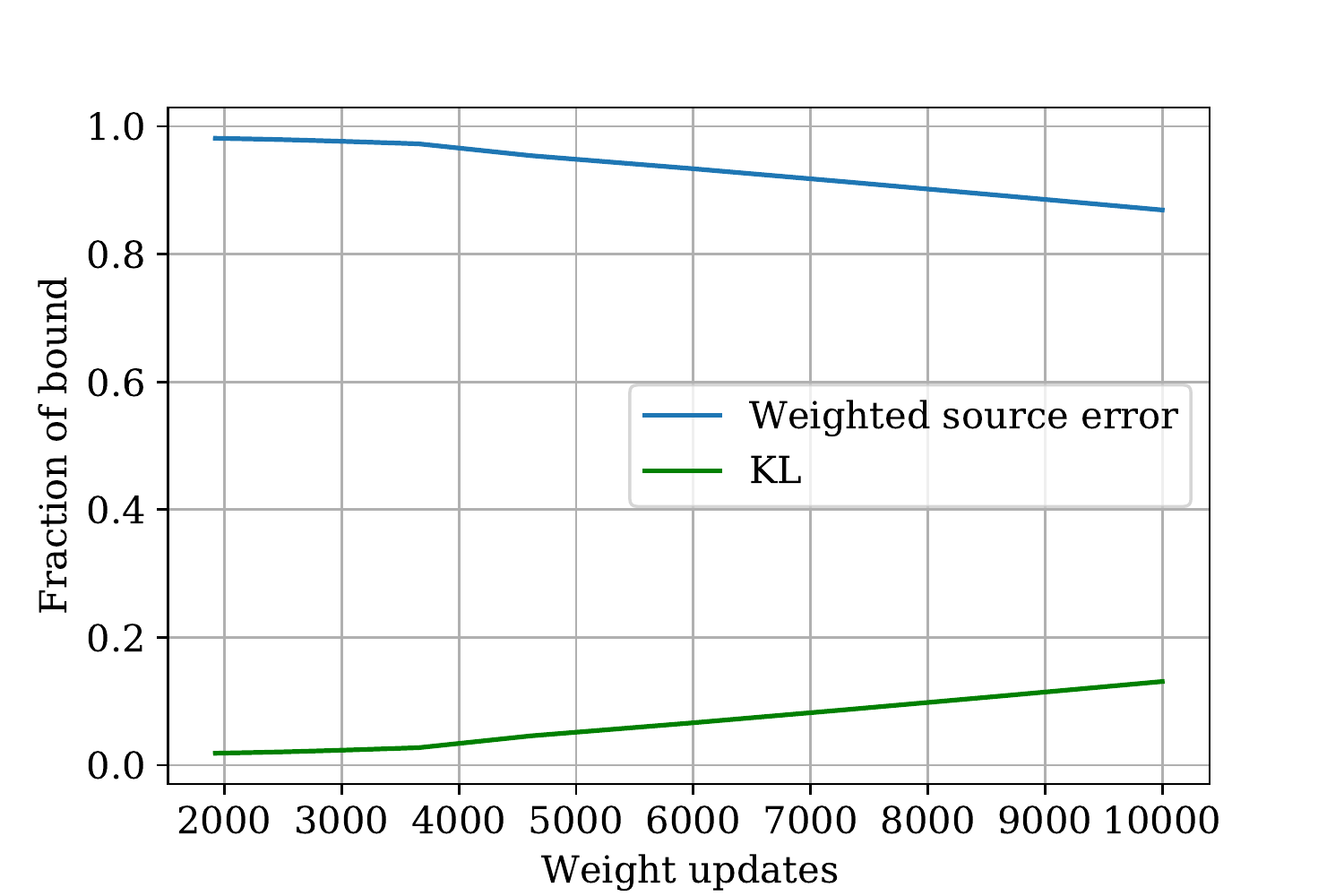}
        \caption{IW bound}
        \label{fig:6resnetiwboundparts03}
    \end{subfigure}%
        \begin{subfigure}{0.24\textwidth}
    \centering
        \includegraphics[width=.92\textwidth]{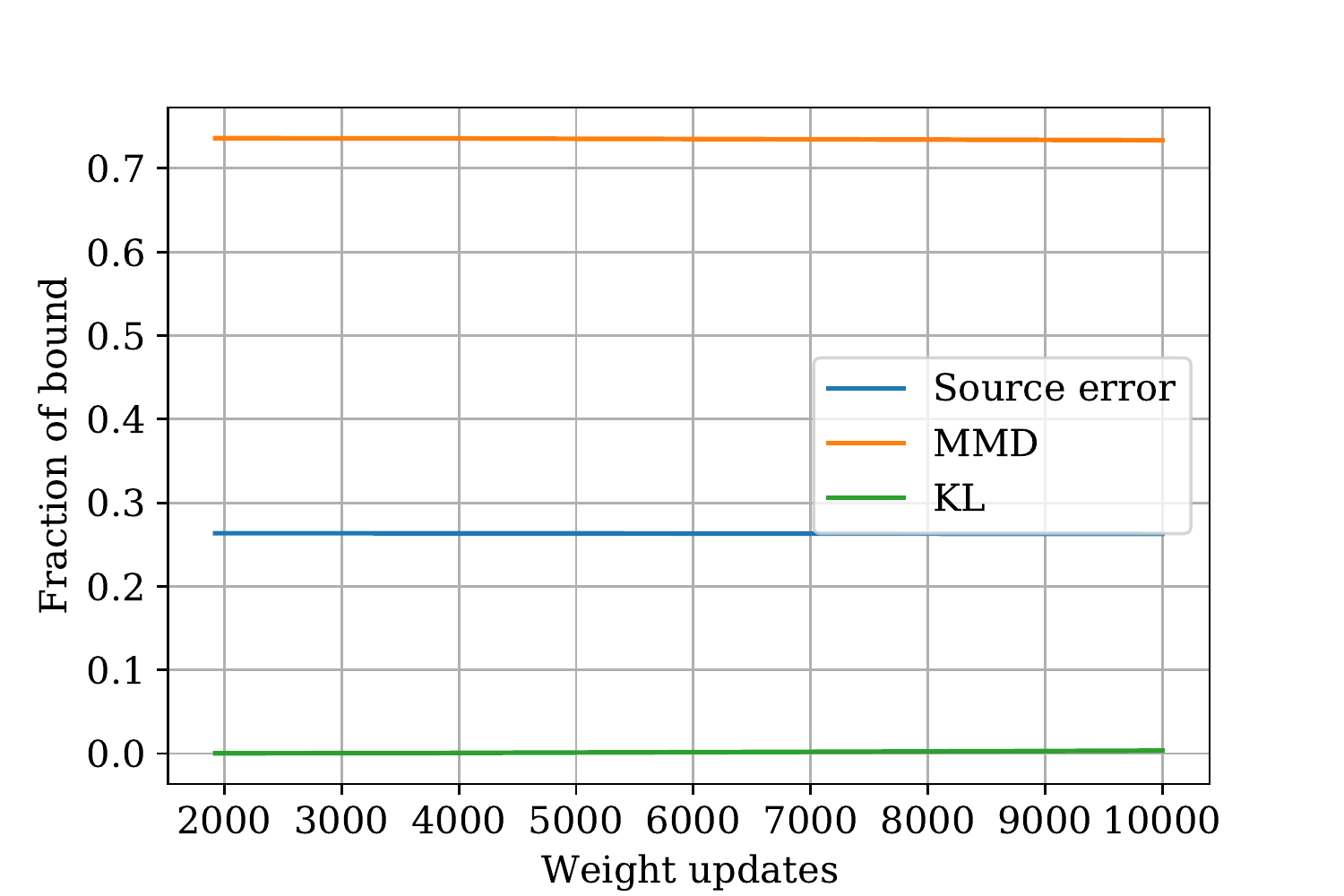}
        \caption{MMD bound}
        \label{fig:6resnetmmdboundparts03}
    \end{subfigure}%
    \caption{An illustration of constituent parts of each of the four bounds with the ResNet50 architecture on the X-ray task. $\alpha=0.3$, $\sigma=0.03$}
    \label{fig:6resnetboundparts03}
\end{figure*}

%
% VARYING ALPHA
%
\subsection{Best bounds achieved for different prior sample proportions}
\begin{figure*}[t!]
    \centering
      \begin{subfigure}{0.49\textwidth}
    \centering
        \includegraphics[width=.92\textwidth]{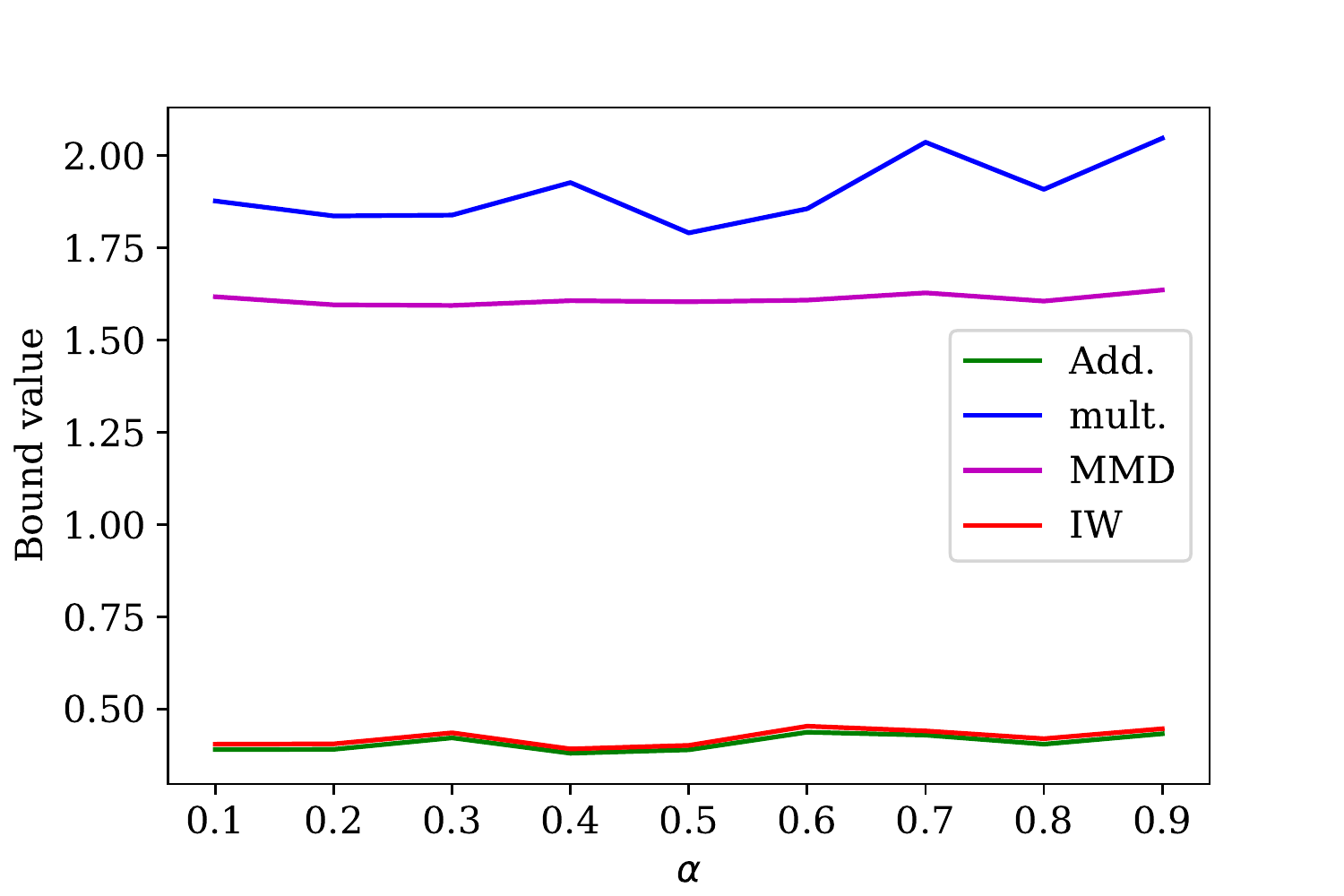}
        \caption{Fully connected}
        \label{fig:2fcalpha}
    \end{subfigure}%
        \begin{subfigure}{0.49\textwidth}
    \centering
        \includegraphics[width=.92\textwidth]{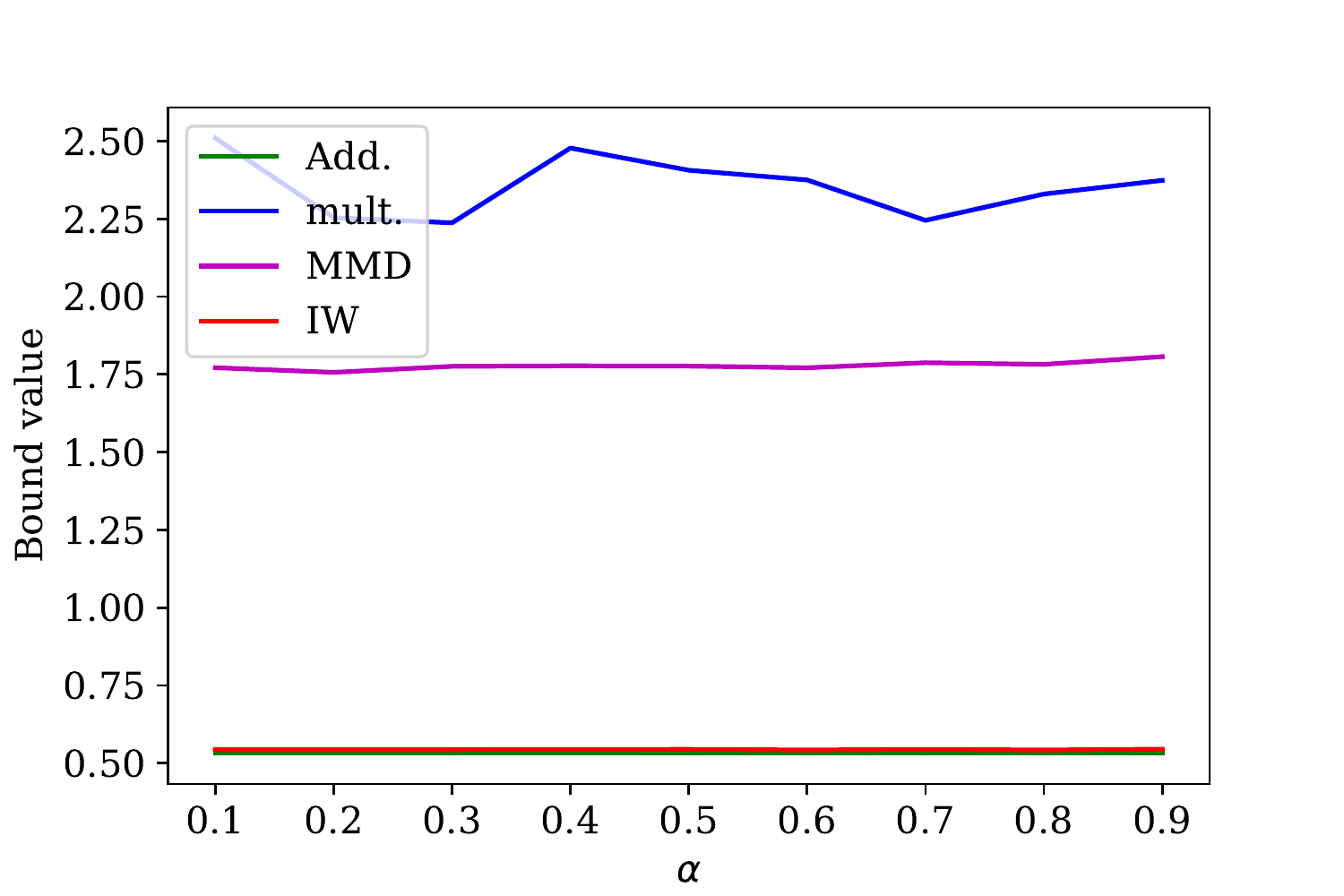}
        \caption{ResNet50}
        \label{fig:2resnetalpha}
    \end{subfigure}%
    \caption{The minimum value of the bound on the MNIST mixture task for different values of $\alpha$.}
    \label{fig:2alpha}
\end{figure*}

\begin{figure*}[t!]
    \centering
     \begin{subfigure}{0.32\textwidth}
    \centering
        \includegraphics[width=.92\textwidth]{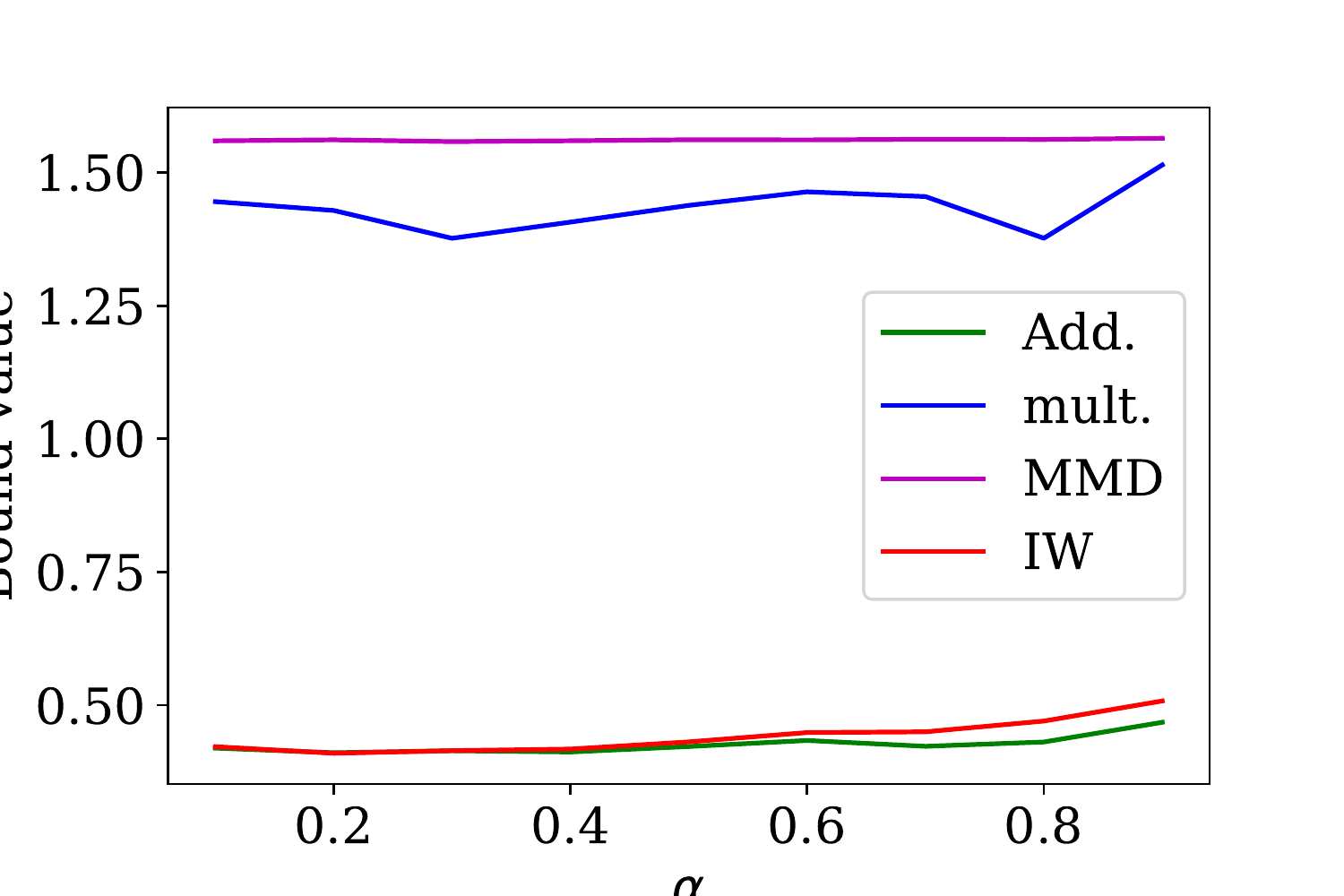}
        \caption{LeNet-5}
        \label{fig:6lenetalpha}
    \end{subfigure}%
      \begin{subfigure}{0.32\textwidth}
    \centering
        \includegraphics[width=.92\textwidth]{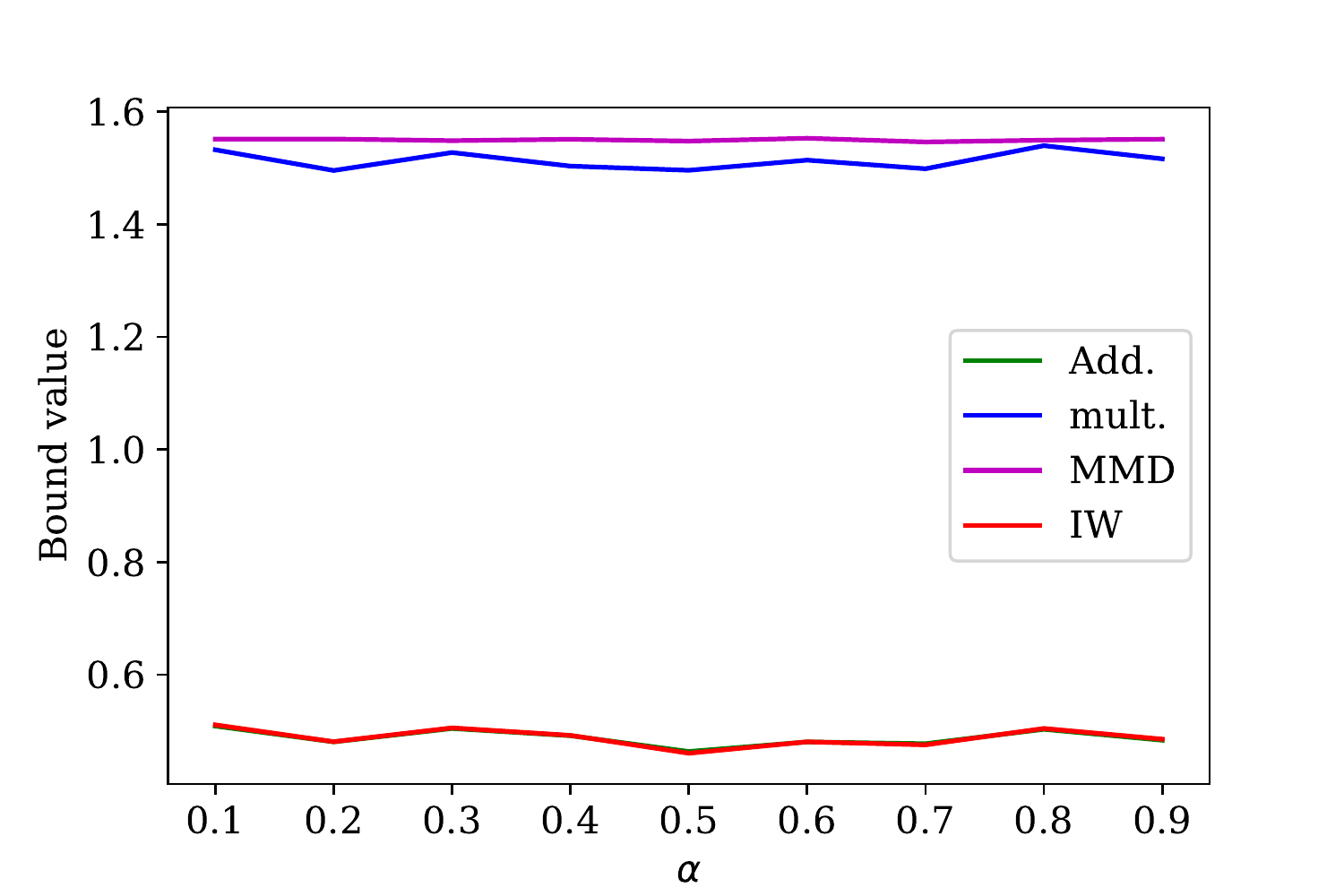}
        \caption{Fully connected}
        \label{fig:6fcalpha}
    \end{subfigure}%
        \begin{subfigure}{0.32\textwidth}
    \centering
        \includegraphics[width=.92\textwidth]{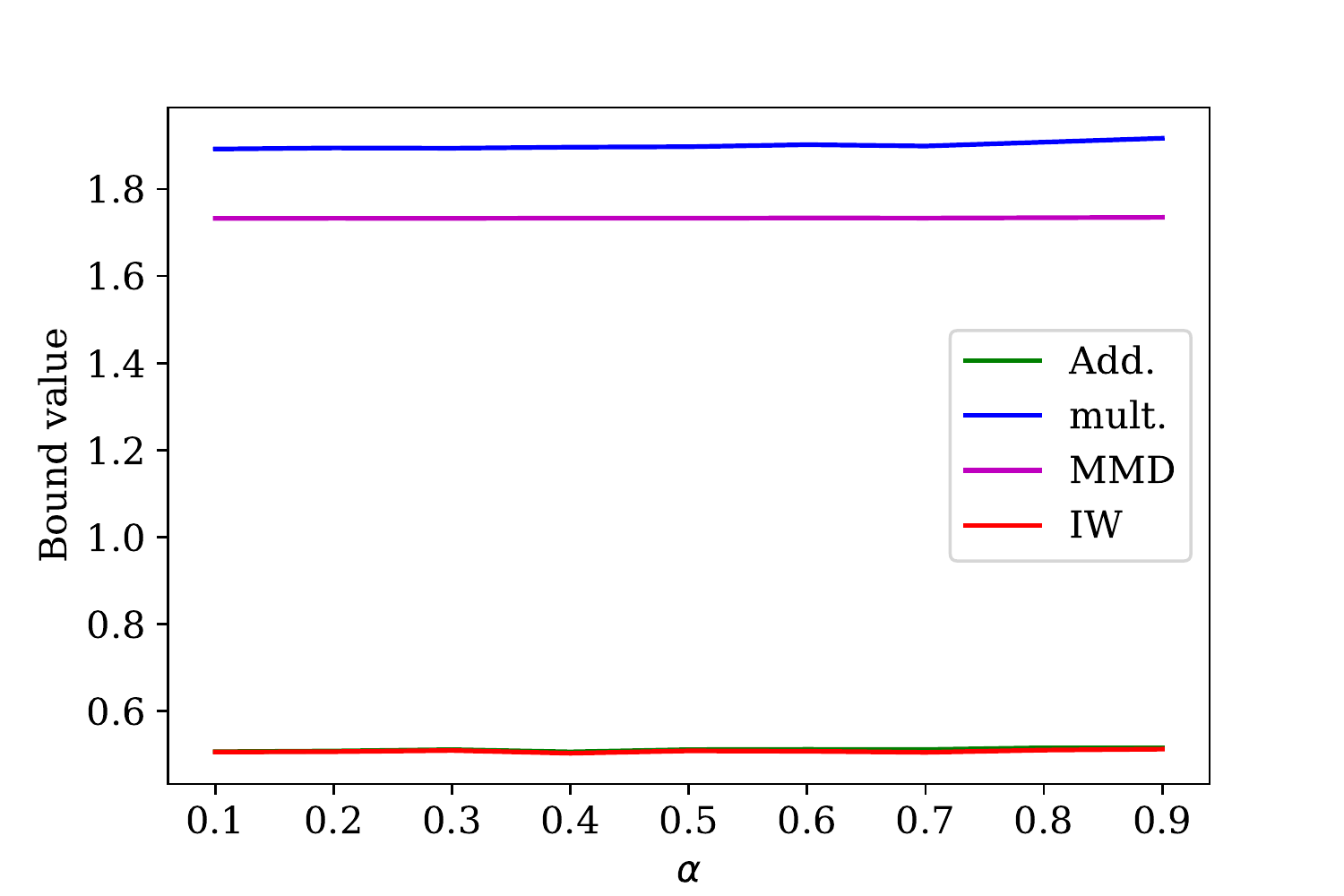}
        \caption{ResNet50}
        \label{fig:6resnetalpha}
    \end{subfigure}%
    \caption{The minimum value of the bounds on the X-ray task for different values of $\alpha$.}
    \label{fig:6alpha}
\end{figure*}

\subsection{Bound during training}
\begin{figure*}[t!]
    \centering
      \begin{subfigure}{0.32\textwidth}
    \centering
        \includegraphics[width=.92\textwidth]{Images/2_lenettraining_0.3_0.03.pdf}
        \caption{LeNet-5, $\alpha=0.3$, $\sigma=0.03$}
        %\label{fig:betabestbound}
    \end{subfigure}%
        \begin{subfigure}{0.33\textwidth}
    \centering
        \includegraphics[width=.92\textwidth]{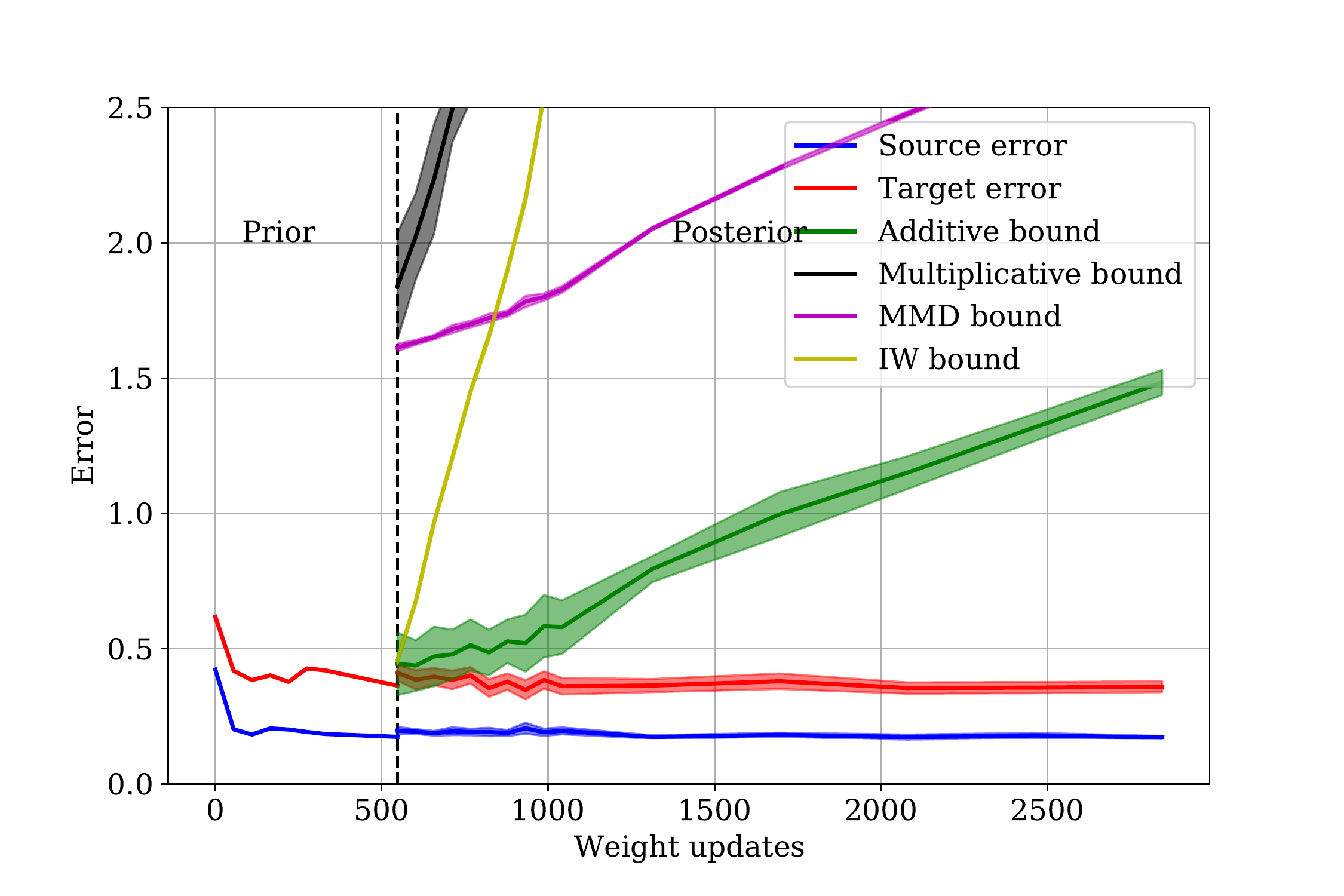}
        \caption{Fully connected, $\alpha=0.3$, $\sigma=0.03$}
        %\label{fig:disbestbound}
    \end{subfigure}%
         \begin{subfigure}{0.32\textwidth}
    \centering
        \includegraphics[width=.92\textwidth]{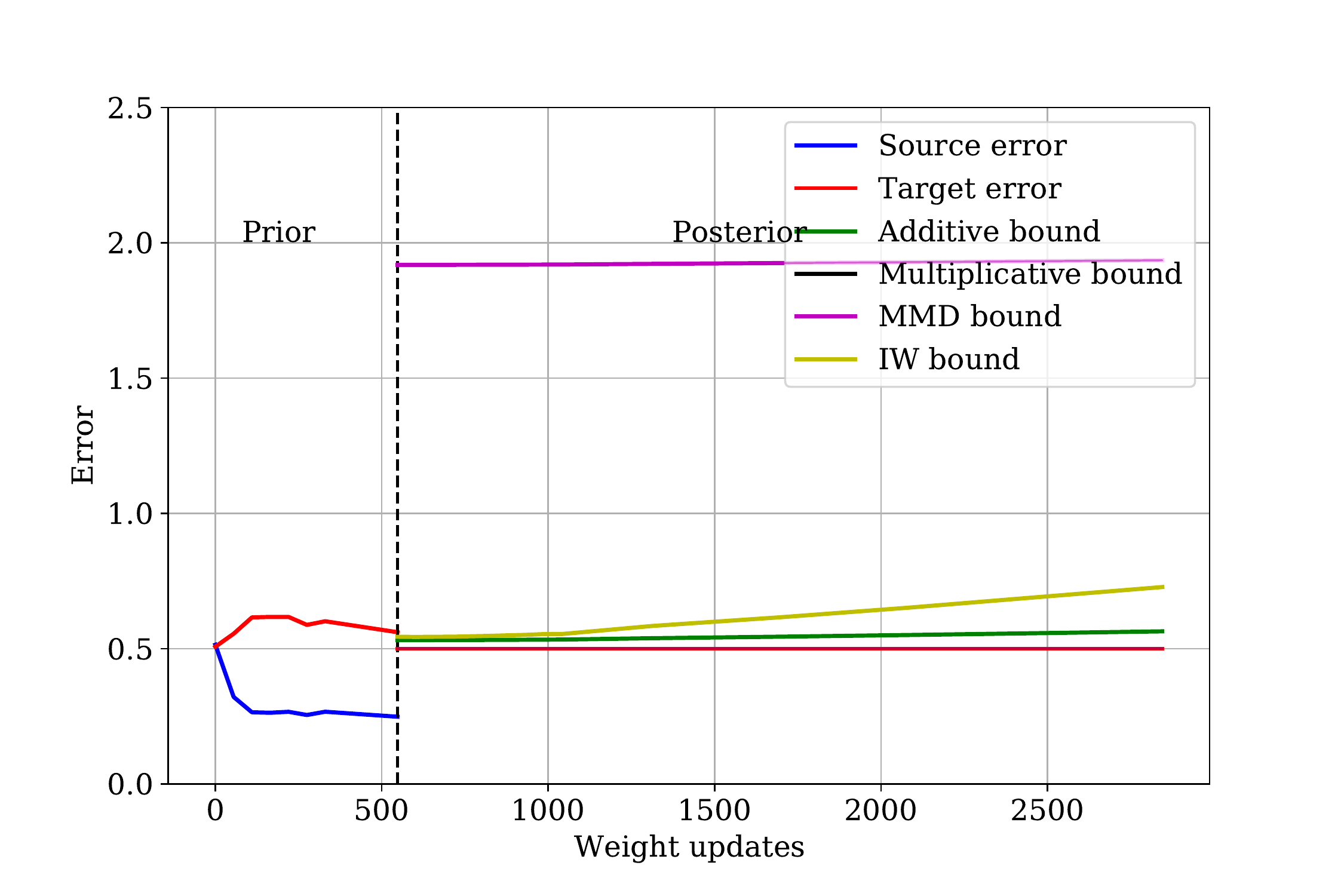}
        \caption{ResNet50, $\alpha=0.3$, $\sigma=0.03$}
       % \label{fig:iwbestbound}
    \end{subfigure}%
    \caption{The evolution of the bounds during training on the MNIST mixture task when we use $30\%$ of our sample to inform the prior.}
    \label{fig:2boundstraining}
\end{figure*}

\begin{figure*}[t!]
    \centering
      \begin{subfigure}{0.32\textwidth}
    \centering
       \includegraphics[width=.92\textwidth]{Images/6_lenettraining_0.3_0.03.pdf}
        \caption{LeNet-5, $\alpha=0.3$, $\sigma=0.03$}
        \label{fig:6lenettraining03}
    \end{subfigure}%
        \begin{subfigure}{0.33\textwidth}
    \centering
       \includegraphics[width=.92\textwidth]{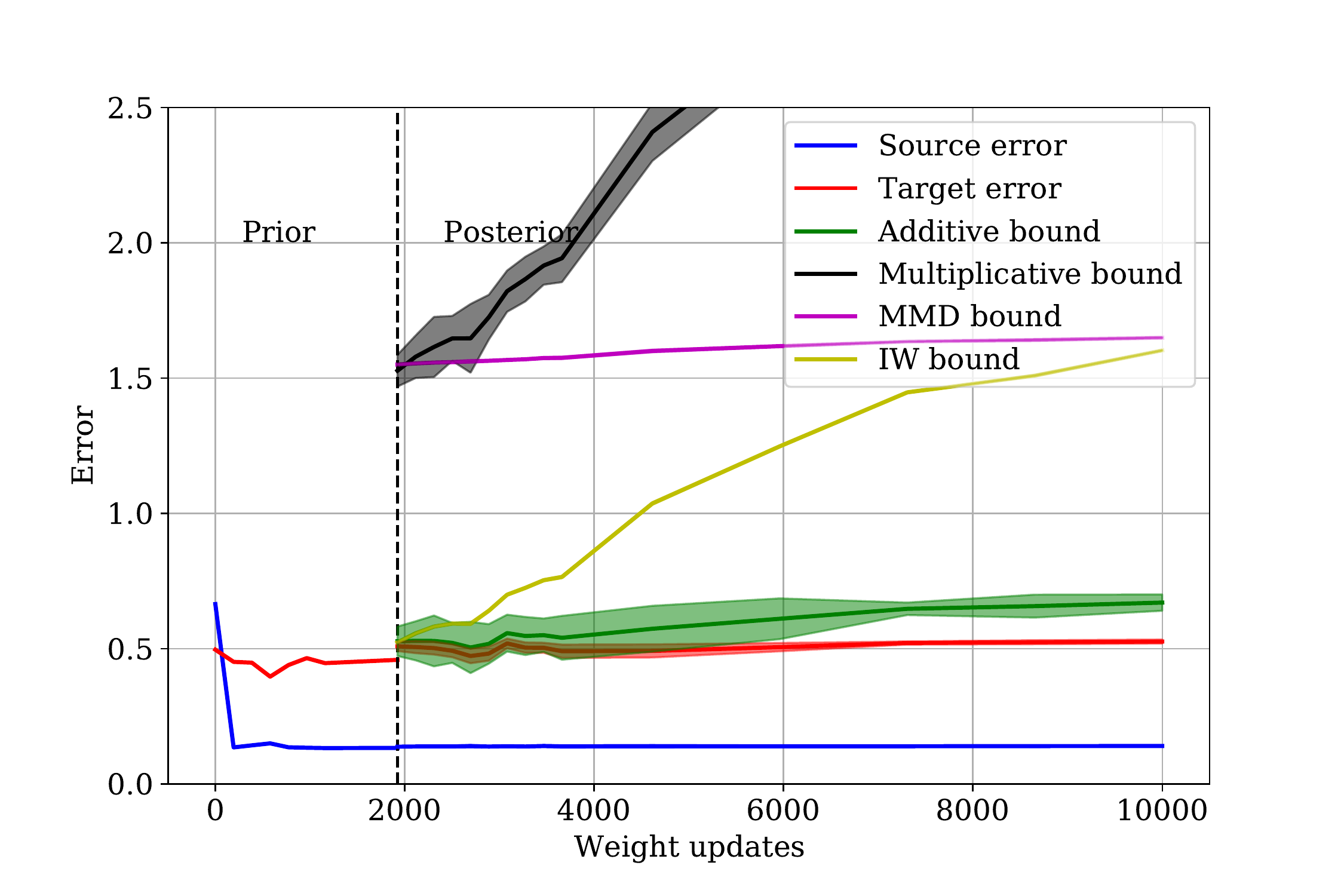}
        \caption{Fully connected, $\alpha=0.3$, $\sigma=0.03$}
       \label{fig:6fctraining03}
    \end{subfigure}%
         \begin{subfigure}{0.32\textwidth}
    \centering
        \includegraphics[width=.92\textwidth]{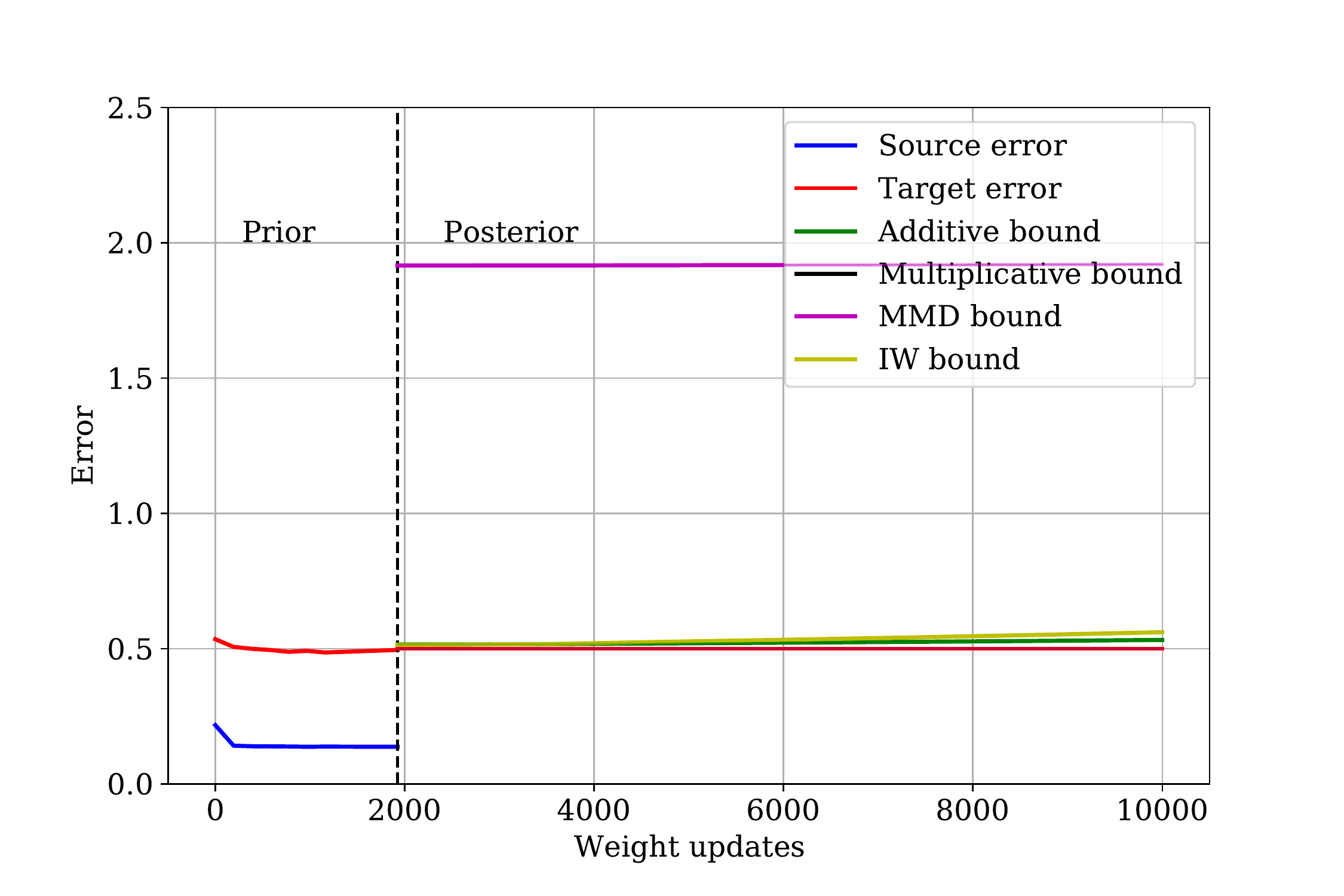}
        \caption{ResNet50, $\alpha=0.3$, $\sigma=0.03$}
       \label{fig:6resnettraining03}
    \end{subfigure}%
    \caption{The evolution of the bounds during training on the X-ray task when we use $30\%$ of our sample to inform the prior.}
    \label{fig:6boundstraining}
\end{figure*}

\begin{figure*}[t!]
    \centering
      \begin{subfigure}{0.32\textwidth}
    \centering
       \includegraphics[width=.92\textwidth]{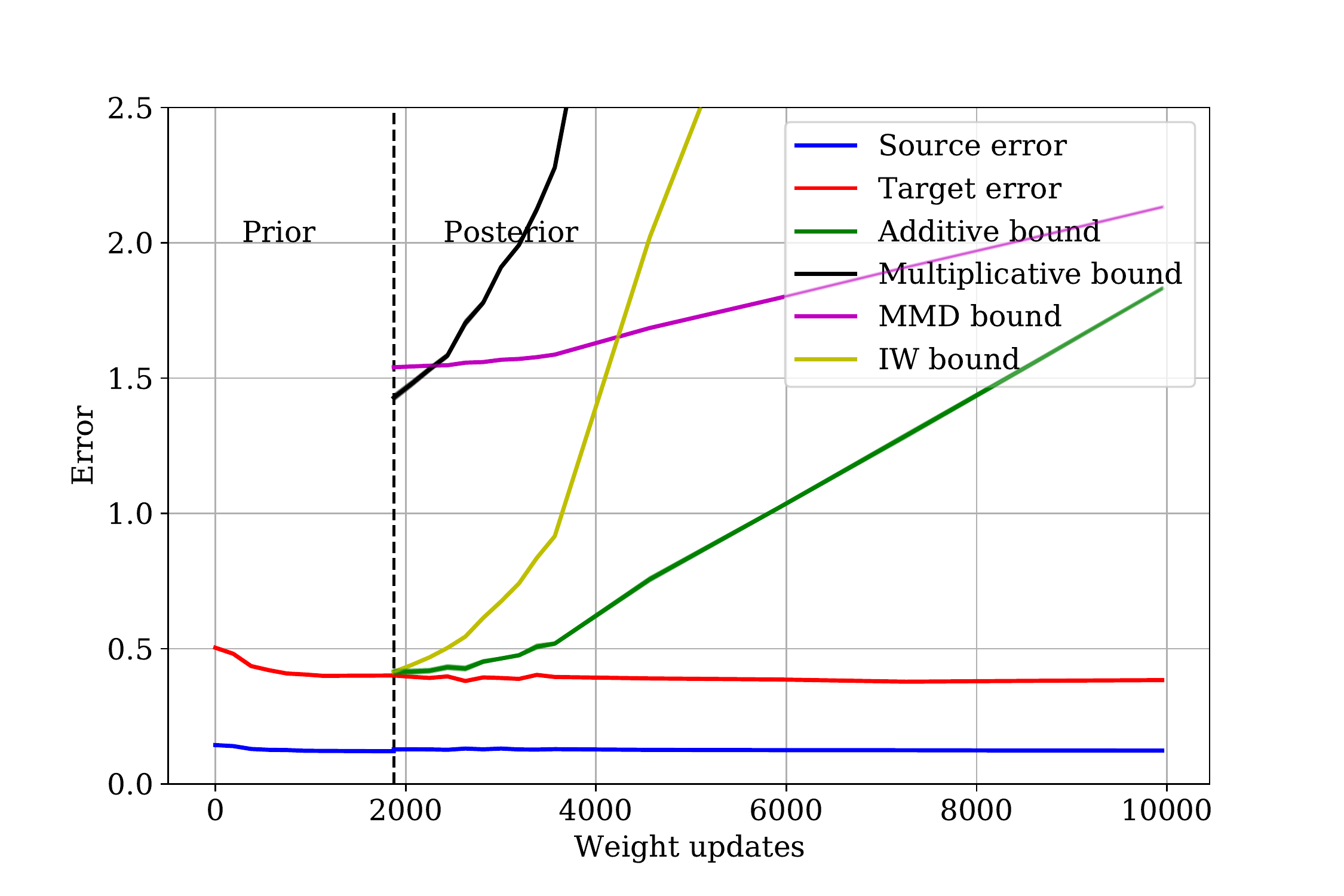}
        \caption{LeNet-5, $\alpha=0.3$, $\sigma=0.003$}
        \label{fig:6lenettraining03sigma}
    \end{subfigure}%
        \begin{subfigure}{0.33\textwidth}
    \centering
       \includegraphics[width=.92\textwidth]{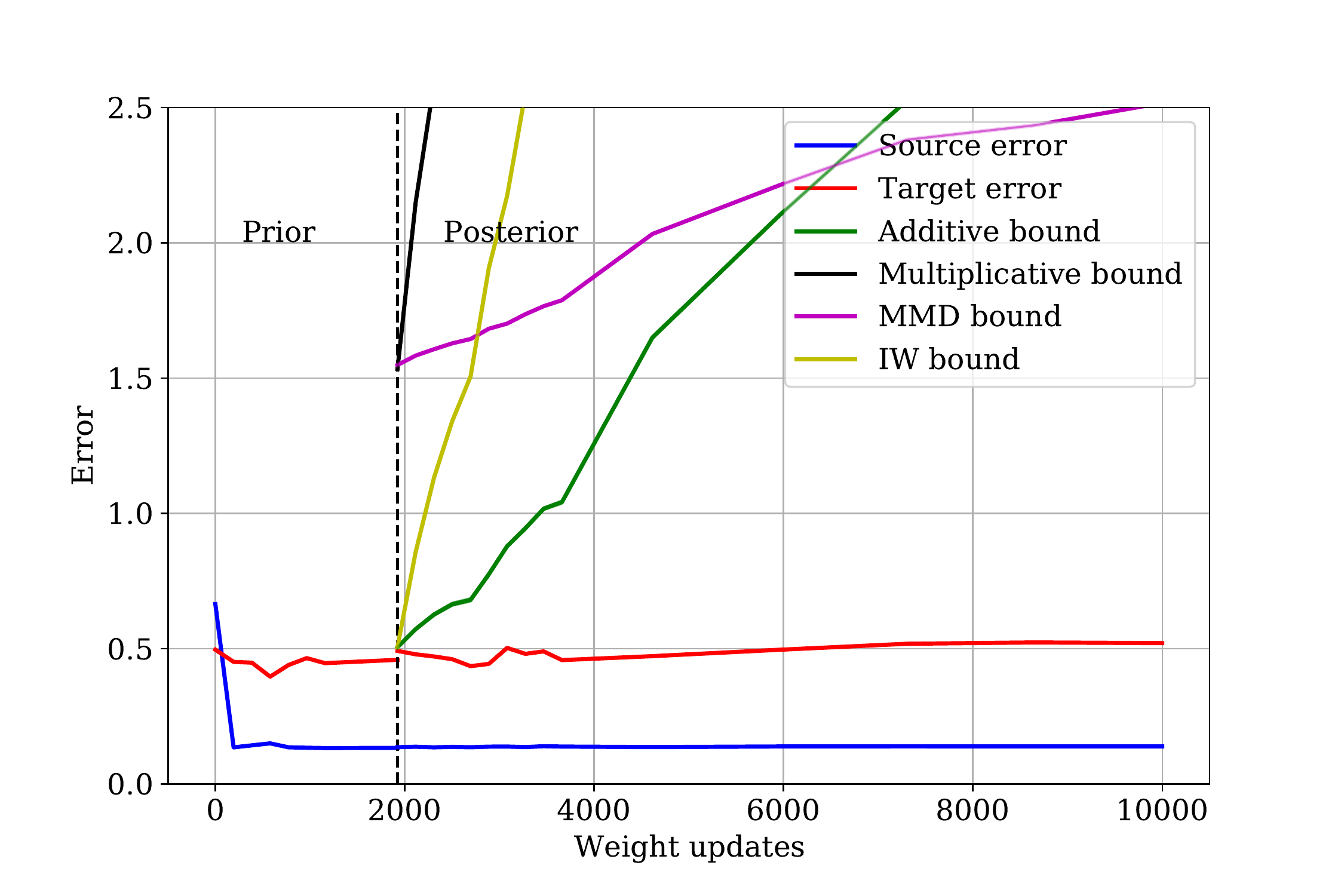}
        \caption{Fully connected, $\alpha=0.3$, $\sigma=0.003$}
       \label{fig:6fctraining03sigma}
    \end{subfigure}%
         \begin{subfigure}{0.46\textwidth}
    \centering
        \includegraphics[width=.92\textwidth]{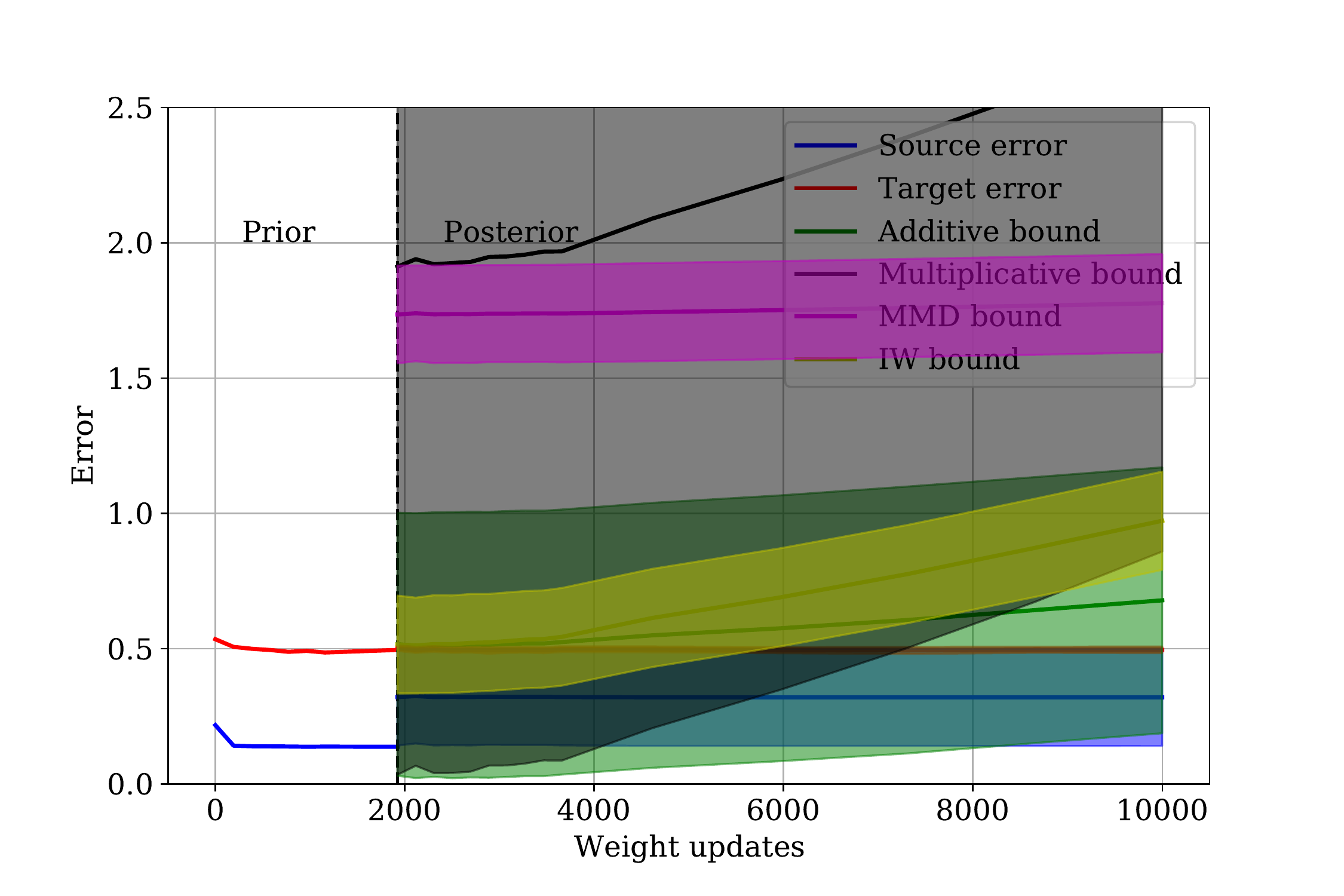}
        \caption{ResNet50, $\alpha=0.3$, $\sigma=0.003$}
       \label{fig:6resnettraining03sigma}
    \end{subfigure}%
    \caption{The evolution of the bounds during training on the X-ray task when we use $30\%$ of our sample to inform the prior.}
    \label{fig:6boundstrainingothersigma}
\end{figure*}
\end{document}